\documentclass[mnsc,nonblindrev]{informs3}

\usepackage{etoolbox}

\AtBeginEnvironment{table}{\centering}
\AtBeginEnvironment{figure}{\centering}

\OneAndAHalfSpacedXI

\usepackage{natbib}
 \bibpunct[, ]{(}{)}{,}{a}{}{,}%
 \def\bibfont{\small}%

\usepackage[pdfencoding=auto]{hyperref}
\usepackage{bookmark}

\usepackage{multirow}
\usepackage{array}
\usepackage{booktabs}
\usepackage{amsmath}

\usepackage{thmtools, thm-restate}
 \usepackage[ruled]{algorithm2e}
\usepackage{algorithmic}
\usepackage{enumitem}
\usepackage{bbm}

\usepackage{comment}
\usepackage{setspace}
\usepackage{tikz}
\usepackage{pgfplots}
\usepgfplotslibrary{patchplots}
\usetikzlibrary{patterns, positioning, arrows}
\pgfplotsset{compat=1.15}
\usepackage{tikz-3dplot}
\usetikzlibrary{3d, calc}

\pgfmathdeclarefunction{noisydecay}{1}{%
    \pgfmathparse{(1/sqrt(#1)) + (0.1 * (rand - 0.5))}%
}
\TheoremsNumberedThrough  

\EquationsNumberedThrough

\numberwithin{equation}{section}
\numberwithin{remark}{section}
\numberwithin{lemma}{section}
\numberwithin{theorem}{section}
\numberwithin{corollary}{section}
\numberwithin{assumption}{section}
\numberwithin{proposition}{section}

\makeatletter
\long\def\@maketablecaption#1#2{
  \EGT\TableCaptionFontStyle
  \noindent
    {\TableNameFontStyle #1.}\enskip #2\HD{0}{6}\endgraf%
}

\long\def\@makefigurecaption#1#2{
  \EGT\FigureCaptionFontStyle
  \noindent
    {\FigureNameFontStyle #1.}\enskip #2\HD{0}{9}\endgraf
}
\makeatother

\usepackage{amssymb}
\usepackage{mathtools}
\usepackage{bm}
\usepackage{microtype}
\usetikzlibrary{arrows.meta, decorations.pathreplacing}

\newcommand{\Cov}{\mathrm{Cov}}
\newcommand{\E}{\mathbb{E}}
\newcommand{\R}{\mathbb{R}}
\newcommand{\Rplus}{\mathbb{R}_{+}}
\newcommand{\KL}{\mathrm{KL}}
\newcommand{\Var}{\mathrm{Var}}
\newcommand{\vct}[1]{\bm{#1}}
\newcommand{\one}{\bm{1}}
\newcommand{\Tr}{\mathrm{Tr}}

\newcommand{\Rpos}{\mathbb{R}_{>0}}
\newcommand{\Reg}{\mathrm{Regret}}
\newcommand{\hts}{\mathrm{Het}}
\begin{document}

\RUNAUTHOR{Aznag, Cummings, and Elmachtoub}
\RUNTITLE{A Complexity Measure for Active Learning in Multi-group Mean Estimation}
\TITLE{A Complexity Measure for Active Learning in Multi-group Mean Estimation}
\ARTICLEAUTHORS{%
\AUTHOR{Abdellah Aznag\qquad\qquad Rachel Cummings \qquad\qquad Adam N. Elmachtoub}
\AFF{Department of Industrial Engineering and Operations Research \& Data Science Institute, Columbia University\\ Emails: \texttt{\{aa4693, rc2239, ae2516\}@columbia.edu}}
}
\ABSTRACT{We study a \emph{max-risk} objective for active learning in a multi-group mean estimation $d$-armed bandits: a learner adaptively allocates a budget of $T$ samples across $d$ groups to minimize the worst-case uncertainty index $\max_{k\in[d]}\sigma_k^2/n_k$, where $\sigma_k$ is the standard deviation of the distribution of arm $d$, and $n_k$ is the number of times arm $d$ is sampled.  We develop a local minimax framework and prove the first general lower bound for this objective, valid for any finite-variance hypothesis class. The bound separates difficulty into three orthogonal factors: a \emph{budget} term, a \emph{heteroscedasticity} index measuring how unevenly the uncertainty is spread across arms, and a model-dependent complexity measure, the \emph{Variance Local Curvature} ($\mathrm{VLC}$), which captures how much information a local change of variance creates inside the hypothesis class. 
For smooth classes, the $\mathrm{VLC}$ is a reparametrization of a variance--Fisher information, with closed-form values for common families. Benchmarking against the strongest available upper bound \citep{aznag2025activelearningframeworkmultigroup} shows near-optimality up to logarithmic factors in broad regimes, and pinpoints a systematic gap in highly heterogeneous instances. Our proof introduces two key ingredients: a loss-induced $\ell_1$ geometry on the decision space, and a representation-based instance generator that reduces hard-instance construction to an explicit random matrix calculation.}

\KEYWORDS{Active learning, multi-armed bandits, information-theoretic lower bounds, non-additive regrets.}

\maketitle

\section{Introduction}

Active learning studies sequential sampling rules that choose where to collect the next observation under budget constraints. This differs from classical estimation, where data are often treated as given. In this paper, we are concerned with settings where the key decision is where to sample from next. This setting appears in online experimentation, clinical trials, and any application where we estimate group-level quantities under a shared budget and want a controlled, balanced estimation noise. While a substantial literature designs high-performing adaptive policies for these problems, far less is known about the fundamental performance limits they face.

\paragraph{Model.} We consider a multi-armed bandit setting where the goal is to actively learn the means of the arms' distributions, and where accuracy is measured by the uncertainty index per-arm $\sigma^2/n$. Fix a hypothesis class $\mathcal H$ of distributions on $\R$ with finite variance.
An \emph{instance} is a $d$-tuple of distributions $\vct{D}=(D_1,\dots,D_d)\in\mathcal H^d$, where each $D_k$
has mean $\mu_k$ and standard deviation $\sigma(D_k)>0$. We write
$\vct{\sigma}(\vct{D}):=(\sigma_1(\vct{D}),\dots,\sigma_d(\vct{D})) \in\R_+^d$, where $\sigma_k(\vct{D}) := \sigma(D_k)$.

The class $\mathcal H$ is known, but the instance $\vct{D}$ (and hence $\vct{\sigma}(\vct{D})$) is unknown. For a fixed, known horizon $T$, a (possibly adaptive) \emph{policy} $\pi$ sequentially chooses arms $A_1,\dots,A_T\in[d]$ and observes the
corresponding samples. Let
\[
n_k \;:=\; \sum_{t=1}^T \mathbf 1\{A_t=k\}
\qquad (k\in[d])
\]
be the terminal number of pulls of arm $k$, and write $\vct{n}:=(n_1,\dots,n_d)$.
The terminal risk is
\[
R(\vct{n};\vct{\sigma})\;:=\;\max_{k\in[d]}\ \frac{\sigma_k^2}{n_k},
\]
and the corresponding optimal benchmark risk is
\begin{equation}\label{eq:oracle-risk}
R^\star(\vct{\sigma})\;:=\;\min_{\substack{\vct{n}\in\R_+^d:\\ \sum_k n_k=T}} R(\vct{n};\vct{\sigma})
\;=\;\frac{\|\vct{\sigma}\|_2^2}{T}.
\end{equation}
\begin{remark}[Confidence-interval representation]
Each per-arm index $\sigma_k^2/n_k$ is (up to a constant) the squared width of a confidence interval for the mean
$\mu_k$ built from the $n_k$ samples of arm $k$. The risk $R(\vct n;\vct\sigma)=\max_{k\in[d]}\sigma_k^2/n_k$ is
therefore the \emph{widest} such interval across arms, and minimizing it amounts to controlling the worst-estimated
mean rather than the best one.
\end{remark}

We measure performance by \textit{normalized regret}, defined for a policy $\pi$ and instance $\vct{D}\in\mathcal H^d$ as
\begin{equation}\label{def:regret}
    \Reg(\pi,\vct{D})
\;:=\;
\E_{\vct{n}\sim(\pi,\vct{D})}\!\left[\frac{R(\vct{n};\vct{\sigma}(\vct{D}))}{R^\star(\vct{\sigma}(\vct{D}))}-1\right],
\end{equation}
where the randomness is induced by the policy $\pi$ and the samples drawn from $\vct{D}$ through the random counts $\vct{n}$. We study information-theoretic lower bounds on the normalized regret defined in \eqref{def:regret}.
\begin{remark}[Effect of normalizing]
We use normalized rather than standard regret because it is scale-free and captures the right asymptotics: any
policy pulling every arm $\sim T$ times already drives the \emph{un}normalized excess risk to zero, so standard
regret cannot isolate the difficulty of learning the variances. Dividing by $R^\star(\vct\sigma)$ makes the
quantity vanish only as the optimal allocation is approached. One recovers a bound on standard regret simply by
multiplying the bound of Theorem~\ref{thm:main} through by $R^\star(\vct\sigma)$.
\end{remark}
\paragraph{Local neighborhood.}
Fix a base standard-deviation vector $\vct{\sigma}\in\R_+^d$ and define its normalized variance profile
\[
\mathcal N(\vct{\sigma})\;:=\;\frac{\vct{\sigma}^{\odot 2}}{\|\vct{\sigma}\|_2^2}\in\Delta_d.
\]
$\mathcal{N}(\vct{\sigma})$ is scale-free and encodes the optimal normalized allocation that yields $R^*(\vct{\sigma})$. For $\rho\in(0,1)$, define the local instance class
\[
\mathcal H_{\rho}(\vct{\sigma})
\;:=\;
\Big\{\vct{D}\in\mathcal H^d:\ \big\|\mathcal N(\vct{\sigma}(\vct{D}))/\mathcal N(\vct{\sigma})-\one\big\|_2\le \rho\Big\}.
\]

\paragraph{Local minimax lower bound.}
Since the arms are unlabeled, we restrict attention to permutation-invariant policies (i.e., policies invariant to permutations of arm labels). This is possible because any policy can be symmetrized by a random permutation of labels at time $0$ without worsening worst-case regret over a permutation-closed instance class; let $\Pi^{\mathrm{sym}}$ denote this class. Fix a radius $\rho\in(0,1)$ and a tolerance $\tau>0$. Define the $\tau$-tolerant symmetric policies by
\[
\Pi^{\mathrm{sym}}_{\rho,\tau}(\vct{\sigma})
\;:=\;
\Big\{\pi\in\Pi^{\mathrm{sym}}:\ \sup_{\vct{D}\in\mathcal H_{\rho}(\vct{\sigma})} \Reg(\pi,\vct{D}) \le \tau\Big\}.
\]

The tolerance $\tau$ restricts attention to policies whose worst-case normalized regret over the local class is
controlled, and is primarily a \emph{proof-technical} device: our lower-bound argument requires prior
control on how a policy allocates its budget on $\mathcal H_{\rho}(\vct{\sigma})$, and without such control the
information-theoretic reduction does not yield a non-vacuous bound. We make the precise role of $\tau$ explicit in
Remark~\ref{rem:tolerance} below. In the learnable asymptotic setting, one is naturally interested in regimes where
$\tau\downarrow 0$. The corresponding local minimax value is defined by
\[
\mathcal V_{\rho,\tau}(\vct{\sigma})
\;:=\;
\inf_{\pi\in\Pi^{\mathrm{sym}}_{\rho,\tau}(\vct{\sigma})}\ \sup_{\vct{D}\in\mathcal H_{\rho}(\vct{\sigma})}\Reg(\pi,\vct{D}),
\]
with the convention $\mathcal{V}_{\rho, \tau}(\boldsymbol{\sigma}) = +\infty$ if $\Pi^{\mathrm{sym}}_{\rho, \tau} = \emptyset$.

\begin{remark}[On the role of the tolerance $\tau$]\label{rem:tolerance}
The tolerance does not change the local minimax value in the learnable regime. Writing
$\tau^\star:=\inf_{\pi\in\Pi^{\mathrm{sym}}}\sup_{\vct D\in\mathcal H_\rho(\vct\sigma)}\Reg(\pi,\vct D)$ for the
unconstrained value, one indeed has $\mathcal V_{\rho,\tau}(\vct\sigma)=\tau^\star$ for every $\tau\ge\tau^\star$
and $\mathcal V_{\rho,\tau}(\vct\sigma)=+\infty$ for $\tau<\tau^\star$. The parameter $\tau$ instead enters through
\emph{analysis}: the lower bound we prove (Theorem~\ref{thm:main}) holds for $\tau$-tolerant policies and carries a
multiplicative penalty $1-\epsilon(\rho,\tau)$. Restricting to small $\tau$ (the learnable regime) sends
$\epsilon(\rho,\tau)\to 0$ and makes the bound sharp, whereas imposing no restriction at all ($\tau=+\infty$) only
yields the weaker penalty $\epsilon(\rho,+\infty)$, which can be vacuous. Since $\mathcal V_{\rho,\tau}=\tau^\star$
throughout the learnable regime, the resulting bound also lower bounds the unconstrained local minimax value.
\end{remark}
\paragraph{Benchmark and notion of sharpness.}
Our goal is to lower bound $\mathcal V_{\rho,\tau}(\vct{\sigma})$. To interpret such a lower bound, suppose that (by separate algorithmic work) one constructs a permutation-invariant policy $\pi^\star$ with a
uniform local worst-case guarantee
\[
\sup_{\vct D\in\mathcal H_\rho(\vct{\sigma})}\Reg(\pi^\star,\vct D) \leq \bar{\mathcal V}_\rho(\vct{\sigma}),
\]
where $\bar{\mathcal V}_\rho(\vct{\sigma})$ is an explicit function of $(T,d,\vct{\sigma}, \rho)$. Then, we have by construction,
\[
\bar{\mathcal V}_\rho(\vct{\sigma})
\;\ge\;
\sup_{\vct D\in\mathcal H_\rho(\vct{\sigma})}\Reg(\pi^{\star},\vct D)
\;\ge\;
\mathcal V_{\rho,\bar{\mathcal V}_\rho(\vct{\sigma})}(\vct{\sigma}).
\]

A meaningful lower bound therefore does two things. First, it must apply at a tolerance level $\bar{\mathcal V}_\rho(\vct{\sigma})$ for which an
admissible policy exists, i.e., $\Pi^{\mathrm{sym}}_{\rho,\bar{\mathcal V}_\rho(\vct{\sigma})}(\vct{\sigma})\neq\emptyset$. Second, it should satisfy a small gap between the achievable guarantee and
the minimax value $\frac{\bar{\mathcal V}_\rho(\vct{\sigma})}{\mathcal V_{\rho,\bar{\mathcal V}_\rho(\vct{\sigma})}(\vct{\sigma})}$. In particular, a $\tilde{O}(1)$ guarantee on this ratio implies local minimax near-optimality.

\subsection{Our Contributions}\label{sec:contrib}

Conceptually, our central novelty is an \emph{information-theoretic} approach of the non-additive, max-type
objective defined in \eqref{def:regret}. Specifically, we show that its local minimax difficulty is governed by a single curvature
functional, the Variance Local Curvature (VLC), rather than by tail conditions on $\mathcal H$. Our main contributions are threefold:

\begin{enumerate}[leftmargin=*, itemsep=0.35em]
    \item \textit{A general local minimax lower bound framework.}
    We introduce a local minimax formulation parameterized by a locality radius $\rho$ and a tolerance $\tau$,
    and prove a lower bound on $\mathcal V_{\rho,\tau}(\vct{\sigma})$ that cleanly separates budget, heteroscedasticity,
    and model curvature. The bound is expressed through a one-arm curvature functional $\mathrm{VLC}_\rho(\cdot\mid\mathcal H)$,
    and applies to general hypothesis classes $\mathcal H$ with finite variance (see Section \ref{sec:vlc}).

\item \textit{Sharp characterization for smooth classes and a principled benchmark.}
For smooth classes, we relate $\mathrm{VLC}_\rho$ to a variance--Fisher information quantity via a quadratic KL expansion,
yielding explicit expressions for many standard families. We then benchmark our lower bound against the strongest available
general upper bound, showing that it is close in broad regimes (up to logarithmic factors) and
pinpointing a systematic gap in highly heterogeneous instances. This comparison provides both a refined understanding of when
current algorithms are provably near-optimal and concrete guidance on how future upper-bound analyses might be sharpened
(see Section~\ref{sec:smooth}).

    \item \textit{A principled method for generating hard instances.}
    We develop an \emph{instance generation} viewpoint for lower bounds: rather than guessing an adversarial family
    directly in $\mathcal H^d$, we parameterize local hard families through a representation map. Concretely, we encode
    nearby decisions by a hypercube code and a linear map $A$, and view the adversary as choosing the full representation
    $x\mapsto Ax$. The construction
    reduces the core lower bound problem to an explicit matrix optimization problem, which can be handled with sharp random matrix theory machinery. To the best of our knowledge, this ``instance generator'' viewpoint is new in this setting and may be
    useful on its own for other non-additive objectives (see Section \ref{sec:proof}).
\end{enumerate}
\subsection{Related Work}

\paragraph{}Our setting is closely related to stochastic multi-armed bandits (MAB) \citep{lattimore} in that data are acquired by pulling arms.
The lower-bound literature for classical MAB is extensive \citep{magureanu2014lipschitz,he2022reduction,combes2014unimodal,cai2021lower},
and many proofs ultimately reduce to controlling the probability of confusing nearby instances via information-theoretic inequalities
(e.g., Pinsker-type arguments \citep{Yu1997}). A key difference is that these techniques are typically tailored to bandit objectives whose
regret is additive over time (cumulative regret) or has a canonical discrete structure (best-arm identification). Our objective is neither:
the normalized regret in \eqref{def:regret} is a non-additive, max-type functional of the terminal allocation. As a result, standard MAB
lower-bound templates do not apply directly and must be adapted to the loss-induced geometry of our decision problem.

\paragraph{}Early work introduced this problem for bounded feedback models, motivated in part by fairness constraints
\citep{sarkies2015data,NBERw26296,de2008choosing}. \cite{activelearning} studied the case where $\mathcal H$
consists of bounded distributions with known support and provided an algorithm with regret
$\tilde O\!\left(\frac{d^3\|\vct\sigma\|_2}{\sigma_{\min}\sqrt{T}}\right)$, leaving open the corresponding
lower bounds. \cite{carpentier2011upper} proposed a different algorithm and analyzed additional settings for
$\mathcal H$, including sub-Gaussian distributions with a known sub-Gaussian parameter bound and Gaussian
distributions; the lower-bound question remained unresolved.

\paragraph{}More recently, \cite{aznag2023an} obtained the near-optimal worst-case upper bound
$\tilde O\!\left(\sqrt{d/T}\right)$ for this objective and provided the first lower-bound results in
the Gaussian case. In particular, they showed that a suitably normalized isotropic Gaussian instance attains
regret on the order of $\tilde \Omega\!\left(\sqrt{d/T}\right)$, matching the worst-case upper bound up to
logarithmic factors. To the best of our knowledge, these are \emph{the only existing lower bounds for the problem we are studying}. Our main theorem provides a local minimax lower bound for any hypothesis class $\mathcal H$ with finite variance,
expressed in terms of a one-arm information curvature $\mathrm{VLC}_\rho(\cdot\mid\mathcal H)$ and the variance's heteroscedasticity. This yields a unified lower-bound framework for this problem.

\paragraph{}From a technical point of view, our work can be embedded within the information geometry literature \citep{nielsen2020elementary}. Recently, information geometry \citep{atz2021geometric,boguna2021network,isert2023structure,10.1145/3708498} has provided promising tools for tackling non-linearity aspects in learning problems. Our core idea, \textit{instance generation}, essentially assumes that $\mathcal{H}$ can be embedded in a statistical manifold. This establishes a natural correspondence between the set of instances $\mathcal{H}$, and a (locally) Euclidean space where calculations are more interpretable. In turn, this reduces the problem of finding a lower bound to optimizing over a (locally) Euclidean space.

\subsection{Paper Structure and Notation}
\paragraph{Paper structure.}
Section~\ref{sec:vlc} defines the Variance Local Curvature (VLC) and states our main local minimax lower bound (Theorem \ref{thm:main}),
together with its various implications. Section~\ref{sec:smooth} studies the particular case when the hypothesis class $\mathcal{H}$ represents smooth classes.
Section~\ref{sec:proof} provides a proof outline, highlighting the two core ideas (loss-induced geometry and
representation-based instance generation). Proofs are deferred to the appendix.

\paragraph{Notation.}
For an integer $n\ge 1$, we write $[n]:=\{1,\dots,n\}$. Vectors are denoted in bold, and $\one=(1,\dots,1)\in\R^d$
is the all-ones vector. For $\vct x,\vct y\in\R^d$, $\langle \vct x,\vct y\rangle$ denotes the Euclidean inner
product and $\|\vct x\|_p$ the $\ell_p$ norm. We write $\vct x^{\odot 2}$ for the element-wise square of $\vct x$.
For a vector $\vct u\in\R^d$, we write $\vct u^\perp:=\{\vct x\in\R^d:\langle \vct x,\vct u\rangle=0\}$. For a finite
set $S$, $|S|$ denotes its cardinality. We use $\Rpos:=(0, +\infty)$ and $\Rplus:=[0,\infty)$. For a random variable $X$, we write $\E[X]$ for its expectation and
$\Var(X)$ for its variance. For two distributions $P,Q$, $\KL(P\|Q)$ denotes the Kullback--Leibler divergence. When we write
$\tilde O(\cdot)$ (instead of $O(\cdot)$), polylogarithmic factors in $T$ are suppressed.
\section{Variance Local Curvature and Main Theorem}\label{sec:vlc}

\begin{definition}[Variance Local Curvature]\label{def:VLC-onearm}
Fix $(\sigma,\rho)\in\Rpos\times(0,1)$ and define
\[
B_\rho(\sigma)
\;:=\;
\Bigl\{v\in\Rpos:\ \Bigl|\frac{v^2-\sigma^2}{\sigma^2}\Bigr|\le \rho\Bigr\}.
\]
Let $\mathfrak{S}_\rho(\sigma)$ denote the set of measurable selectors
\[
\mathfrak{S}_\rho(\sigma)
\;:=\;
\Bigl\{D:B_\rho(\sigma)\to\mathcal H\ \text{measurable}:\ \Var(D(v))=v^2\ \ \forall v\in B_\rho(\sigma)\Bigr\}.
\]

Equivalently, $B_\rho(\sigma)=\bigl[\sigma\sqrt{1-\rho},\,\sigma\sqrt{1+\rho}\,\bigr]\cap\Rpos$ is an interval of
admissible standard deviations around $\sigma$.

The $\rho$-Variance Local Curvature is
\begin{equation}\label{eq:def-VLC}
\mathrm{VLC}_{\rho}(\sigma\mid\mathcal{H})
\;:=\;
\inf_{D\in \mathfrak{S}_\rho(\sigma)}\;
\sup_{v_a,v_b\in B_\rho(\sigma)}
\frac{\KL(D(v_a)\,\|\,D(v_b))}
{\Bigl(\frac{v_a^2-v_b^2}{\sigma^2}\Bigr)^2}
\;\in\Rplus \cup\{+\infty\},
\end{equation}
with the convention $\mathrm{VLC}_{\rho}(\sigma\mid\mathcal{H})=+\infty$ if $\mathfrak{S}_\rho(\sigma)=\emptyset$. Moreover, the Variance Local Curvature is defined as
\begin{equation*}
    \mathrm{VLC}(\sigma\mid\mathcal{H}) := \mathrm{VLC}_{0^+}(\sigma\mid\mathcal{H}) = \limsup_{\rho \to 0^+}\mathrm{VLC}_{\rho}(\sigma\mid\mathcal{H}).
\end{equation*}
\end{definition}

Although Definition~\ref{def:VLC-onearm} optimizes over all measurable selectors, the resulting curvature is usually
easy to evaluate, as it admits simple closed forms in many standard settings (see Table~\ref{tab:vlc-fisher}).

\begin{remark}[Examples]\label{rem:vlc-examples}
To build intuition for $\mathrm{VLC}$, we look at the following two extremes.
\emph{(i) A rich class with vanishing curvature.} Let
$\mathcal H=\{(1-\varepsilon)\delta_0+\varepsilon\delta_C:\varepsilon\in(0,1)\}$ with $C\gg 0$. A distribution here
has variance $\varepsilon(1-\varepsilon)C^2$, so the variance can be moved by an arbitrarily small change in
$\varepsilon$ once $C$ is large, while $\KL$ between two such distributions depends only on $(\varepsilon,\varepsilon')$
and stays bounded. Variance is thus ``cheap'' to change in $\KL$, and $\mathrm{VLC}(\sigma\mid\mathcal H)\to 0$ as
$C\to\infty$.
\emph{(ii) A rigid class with curvature bounded below.} Let $\mathcal H=\{\mathrm{Normal}(0,\sigma^2):\sigma>0\}$ be
the centered Gaussian family. By Lemma~\ref{lem:VLC-Fisher} (see the Normal row of Table~\ref{tab:vlc-fisher}),
$\mathrm{VLC}(\sigma\mid\mathcal H)=\tfrac14$ for every $\sigma>0$.
\end{remark}

\subsection{Main Result}\label{sec:main-result}

Our main theorem lower bounds the local minimax value $\mathcal V_{\rho,\tau}(\vct{\sigma})$ in terms of the
one-arm complexities $\mathrm{VLC}_{\rho}(\sigma_k\mid\mathcal H)$ and the heteroscedasticity of the standard deviations $\vct \sigma$. We introduce the heteroscedasticity index, which describes how uneven the randomness is between arms.
\begin{equation*}
    \hts_k(\vct \sigma) := \frac{\sigma_k^2}{\|\vct \sigma\|_2^2}\left(1-\frac{\sigma_k^4}{\|\vct\sigma^2\|_2^2}\right).
\end{equation*}
We are now ready to state our main theorem.

\begin{theorem}[Local minimax lower bound]\label{thm:main}
There exist universal constants $C>0$ such that for any $\vct{\sigma}\in\R_+^d$, there exists a minimal resolution $\rho^* \propto \sqrt{\frac{d}{T}}$ such that for any $\tau>0$, and any $\rho \;\ge\; \rho^\star$, the local minimax value satisfies
\[
\mathcal V_{\rho,\tau}(\vct{\sigma})
\;\ge\;
C\Bigl(1 - \epsilon(\rho, \tau)\Bigr)\sqrt{\frac{\|\vct \sigma\|_0}{T}}\,
\sqrt{\sum_{k=1}^d
\frac{\hts_k(\vct \sigma)}
{\mathrm{VLC}_\rho(\sigma_k\mid\mathcal H)}},
\]
where $\epsilon(\rho, \tau)$ vanishes when $(\rho, \tau) \to (0, 0)$.
\end{theorem}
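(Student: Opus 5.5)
The plan is to reduce the max-type regret to an $\ell_1$-type separation between allocations, and then apply an Assouad/Fano hypercube argument over a family of instances built from the VLC selectors.

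\textbf{Step 1: loss-induced geometry.} Write $\vct p=\vct n/T\in\Delta_d$ and $\vct q=\mathcal N(\vct\sigma(\vct D))$. Since $\max_k q_k/p_k\ge 1$ with equality only at $\vct p=\vct q$, a first-order expansion gives
\[
\max_k \frac{q_k}{p_k}-1 \;\gtrsim\; \max_k \frac{q_k-p_k}{p_k}.
\]
Because $\sum_k (q_k-p_k)=0$, the positive part of $\vct q-\vct p$ carries half its $\ell_1$ mass, so
\[
\Reg(\pi,\vct D)\;\gtrsim\;\E\Bigl[\sum_k |q_k-p_k| \cdot w_k\Bigr]
\]
for weights $w_k$ comparable to $1/\|\vct\sigma\|_0$ on the support. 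The tolerance enters here. Tolerant policies satisfy $\vct p\approx\vct q$ with high probability (Markov on $\Reg\le\tau$), so the linearization error is controlled by $\epsilon(\rho,\tau)$. The regret is then lower bounded by an $\ell_1$ distance between the realized allocation and the profile.

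\textbf{Step 2: instance generation.} For each arm $k$, fix a near-optimal selector $D_k\in\mathfrak S_\rho(\sigma_k)$ from the VLC definition. Encode instances by $\vct x\in\{-1,1\}^m$ through a linear map $A$: set
\[
\sigma_k^2(\vct x)=\sigma_k^2\bigl(1+\delta\,(A\vct x)_k\bigr),\qquad \vct D(\vct x)=\bigl(D_k(\sigma_k(\vct x))\bigr)_k .
\]
Require $A\vct x\perp \vct\sigma^{\odot 2}$ to first order, so that the change in the normalized profile is a genuine perturbation. This orthogonality is what produces the factor $1-\sigma_k^4/\|\vct\sigma^{2}\|_2^2$ in $\hts_k$; the scaling $\|A\vct x\|\le\rho/\delta$ keeps every $\vct D(\vct x)$ inside $\mathcal H_\rho(\vct\sigma)$. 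By the definition of VLC and the chain rule for adaptive sampling,
\[
\KL\bigl(P_{\vct x}\,\|\,P_{\vct x'}\bigr)\le \sum_k \E[n_k]\,\mathrm{VLC}_\rho(\sigma_k)\,\delta^2\bigl((A(\vct x-\vct x'))_k\bigr)^2 .
\]
Tolerance gives $\E[n_k]\approx T q_k$ with $q_k=\sigma_k^2/\|\vct\sigma\|_2^2$.

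\textbf{Step 3: Assouad.} Flipping one coordinate $j$ moves $\vct q$ by roughly $\delta\,\mathrm{diag}(q)(I-\text{proj})A e_j$ in $\ell_1$. It costs KL of order $T\delta^2\sum_k q_k\,\mathrm{VLC}_k A_{kj}^2$. Choose $\delta$ so that the per-coordinate KL is $O(1)$, and sum the $\ell_1$ separations over $j$. The bound then becomes a matrix ratio: the $\ell_1$ mass of columns of $\mathrm{diag}(q)PA$ over the square root of the weighted column norms. Optimizing $A$ suggests column weights $A_{kj}\propto 1/\sqrt{\mathrm{VLC}_k}$ on a random sign matrix with $m\asymp\|\vct\sigma\|_0$ columns. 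With $A$ a random $\pm$ matrix of this shape, concentration shows that $\|P A\vct x\|_1\approx\sqrt{m}\,\|\cdot\|$ and that the projection removes only the $\sigma_k^4/\|\vct\sigma^2\|^2$ component. This yields the stated expression $\sqrt{\|\vct\sigma\|_0/T}\sqrt{\sum_k \hts_k/\mathrm{VLC}_k}$.

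The requirement that the construction stay in the $\rho$-ball with $\delta\sim 1/\sqrt{T}$ per coordinate and $\sqrt d$ coordinates forces $\rho\ge\rho^\star\propto\sqrt{d/T}$.

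\textbf{Main obstacle.} The hard step is the random-matrix step: showing that a single $A$ simultaneously gives near-isometric $\ell_1$ separation for all hypercube flips after projection, uniformly in heterogeneous weights. A union bound is not enough here. I would first try expectation-level Assouad, averaging over $A$ drawn at random, which only needs second-moment and Khintchine bounds.
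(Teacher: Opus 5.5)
Your overall architecture matches the paper's: VLC selectors, a hypercube-indexed family $\vct\lambda^\star\odot(\one+\delta\vct Ax)$, an $\ell_1$ reduction of the max-ratio loss, the KL chain rule with $\E[n_k]\lesssim T\lambda^\star_k$ from tolerance, and a randomized embedding. However, Step~3 fails as written. Write $\vct M:=\vct\Lambda\,\mathrm{Proj}\,\vct A$. Assouad needs a loss that dominates $\sum_j\alpha_j\mathbf 1\{\hat x_j\neq x_j\}$, but $\|\vct Mu\|_1$ is not additive over the flipped coordinates. For a dense $\vct A$ it typically grows like $\sqrt{\|u\|_0}$ times the $\ell_1$ mass of one column. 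Summing column separations over $j$ therefore overcounts by $\sqrt m$. Combined with the exact bound $\Reg\ge\frac12\E\|\vct q-\vct p\|_1$, it would give regret $\gtrsim d/\sqrt T$ on homogeneous Gaussian instances, which exceeds the known $\tilde O(\sqrt{d/T})$ upper bound. The legitimate Assouad constant $\min_u\|\vct Mu\|_1/\|u\|_0$ is set by $\|u\|_0\asymp m$, i.e.\ by exactly the uniform small-ball bound you say you lack.

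The paper sidesteps additivity altogether. It uses Fano over a Gilbert--Varshamov code $\mathcal X\subset\{\pm1\}^d\cap\one^\perp$ with $\log|\mathcal X|\ge\kappa d$ and minimum Hamming distance $d/4$, a minimum-mismatch decoder, and the identity $\tilde r(\vct\lambda_a,\vct\lambda_b)=\frac12\|\vct\lambda_a-\vct\lambda_b\|_1$. Every error then has $\|\vct P_{\neq}x\|_2\ge\sqrt d/2$, so only the $\sqrt{d_H}$ scaling is needed. (Assouad is fine if the columns of $\vct A$ have disjoint supports, e.g.\ each label coordinate moves mass inside its own pair of arms.)

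Step~1 is also simpler than you make it, and weights $\asymp 1/\|\vct\sigma\|_0$ taken literally would cost a factor $\|\vct\sigma\|_0$. The mediant inequality gives $\max_kq_k/p_k-1\ge\sum_{q_k>p_k}(q_k-p_k)\big/\sum_{q_k>p_k}p_k\ge\frac12\|\vct q-\vct p\|_1$ exactly, with no linearization or Markov step. Tolerance is needed only for $\E[\bar\lambda_k]\le(1+\varepsilon(\rho,\tau))\lambda^\star_k$, which follows from Jensen on the convex map $\bar{\vct\lambda}\mapsto r(\bar{\vct\lambda}\|\vct\lambda_x)$ together with $\sum_k\E[\bar\lambda_k]=1$.

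The crux you flag also stays open, and averaging over a random $\vct A$ does not close it. The decoded label, hence $\vct P_{\neq}x$, is a function of a transcript generated under $\vct A$. So $\E_{\vct A}\|\vct M\vct P_{\neq}x\|_1$ cannot be computed by Khintchine as if $\vct P_{\neq}x$ were fixed. The paper's representation-design lemma performs exactly this averaging: it fixes the policy, draws Haar $\vct O$ with $\vct O\one/\sqrt d=\hat{\vct\lambda}$, and derandomizes through $\E[U^2]\ge(\E U)^2$. It evaluates $\E_{\vct O}|e_k^\top\vct O\vct P_{\neq}x|$ with $\vct P_{\neq}x$ held fixed, so following it imports the same step.

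What can be made rigorous is the following: fix $\vct A$ before the policy acts, union-bound over the $e^{O(d)}$ code pairs (not all $3^m$ sign patterns), and use the $\min_{b\neq x}\tilde r$ form of the projection lemma. Concentration on the sphere makes this work when the weights $w_k=\lambda^\star_ks_k$ of $\sum_kw_k|(\vct Ov)_k|$ are spread, roughly $(\sum_kw_k)^2\gtrsim d\sum_kw_k^2$.

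When the weights concentrate, the obstruction is genuine. Take near-uniform $\vct\lambda^\star$ with one arm at a variance level where $\mathrm{VLC}_\rho\asymp d^{-2}$ and the others at $\Theta(1)$. The right-hand side of the statement is then $\asymp d/\sqrt T$. The upper bound stated in Section~\ref{sec:smooth}, before the Cauchy--Schwarz step leading to \eqref{eq:upper-smooth}, is $O(\sqrt{d\log T/T})$. Taking $\tau$ of that order, so that $\Pi^{\mathrm{sym}}_{\rho,\tau}(\vct\sigma)\neq\emptyset$, the paper's $\varepsilon(\rho,\tau)=\rho+\tau/\min_j\lambda^\star_j$ is small once $T\gg d^3\log T$. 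So for $d\gg\log T$ no argument can deliver the stated bound there.

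Consistently with this, your choice $s_k\propto 1/\sqrt{\mathrm{VLC}_k}$ does not yield the stated expression. With $\hat q_k:=q_k/\|\vct q\|_2$, it yields the smaller $\sum_kq_k\sqrt{(1-\hat q_k^2)/\mathrm{VLC}_k}\le\sqrt{\sum_k\hts_k/\mathrm{VLC}_k}$. The stated form needs $s_k\propto\sqrt{1-\hat q_k^2}/\mathrm{VLC}_k$, which concentrates the weights whenever $q_k/\mathrm{VLC}_k$ is uneven.

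Finally, your resolution count is off. Each $(\vct Ax)_k$ sums $m\asymp d$ terms, so on homogeneous instances each arm moves by $\asymp\delta\sqrt m\asymp\sqrt{d/T}$. The $\ell_2$ radius of your family is therefore $\asymp d/\sqrt T$, not $\sqrt{d/T}$. The paper's resolution lemma reaches $\sqrt{d/T}$ through the claim $\|\sqrt{\vct\Gamma}\vct A_\star\sqrt{\vct\Sigma}\|_F^2=\Theta(d)$. But $\sup_x\|\vct A_\star x\|_2=\sqrt d$ only gives $\|\sqrt{\vct\Gamma}\vct A_\star\sqrt{\vct\Sigma}\|_F^2\le d\max_k\lambda^\star_k\mathrm{VLC}_\rho(\sigma_k\mid\mathcal H)$, which is $O(1)$ there.
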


Our lower bound decomposes the difficulty of mean estimation into three orthogonal ingredients: a \emph{budget}
term, a \emph{heteroscedasticity} term, and a \emph{structural} term.
\[
\underbrace{\bigl(1-\epsilon(\rho, \tau)\bigr)}_{\substack{\text{Error term}}}\;\cdot\;\underbrace{\sqrt{\frac{\|\vct \sigma\|_0}{T}}}_{\substack{\text{Budget}}}\;
\sqrt{\sum_{k=1}^d
\underbrace{\hts_k(\vct \sigma)}_{\substack{\text{Heteroscedasticity}}}\;
\underbrace{\frac{1}{\mathrm{VLC}_\rho(\sigma_k\mid\mathcal H)}}_{\substack{\text{Model curvature}}}}
.
\]
The budget contribution is standard and almost ubiquitously present in learning theory. The term $\|\vct \sigma\|_0$ represents the \emph{effective dimentionality} of the problem. Moreover, the multiplicative correction $1-\epsilon(\rho, \tau)$ captures two sources of approximation: $\rho$ controls
the locality scale (finer perturbations as $\rho\downarrow 0$), and $\tau$ restricts attention to relevant
policies. In the vanishing regime $\rho\downarrow 0$ and $\tau\downarrow 0$, this correction tends to $1$.
We discuss these regime conditions and their interpretation in Section \ref{sec:discussions}.

The heteroscedasticity weights $\hts_k(\vct \sigma)$ describe how unevenly the randomness is spread across arms.
They are scale invariant: multiplying $\vct{\sigma}$ by a constant does not change these weights,
but redistributing mass across coordinates does. When the variances are equal (more homogeneous 
uncertainty), the learner must hedge broadly, since many arms can potentially dominate the max-risk
objective. This is analogous to worst-case rates in learning where difficulty scales with the number of
coordinates that can be simultaneously active: a diffuse signal forces exploration across many directions.
By contrast, when uncertainty is concentrated on a few arms, the problem becomes effectively lower-dimensional:
only a small subset can plausibly determine the maximum, and a policy can focus its budget there. In this sense, concentrated uncertainty is easier because it reduces the number of arms that must be balanced at comparable precision to control the max-risk objective; equivalently, $\|\vct \sigma\|_0 \cdot \sum_k \hts_k(\vct \sigma)$ acts as an effective number of relevant arms.

The structural term $\mathrm{VLC}_\rho(\sigma_k\mid\mathcal H)$ measures how much \emph{information} is created by
a local change in variance within the model class. It is a curvature: larger $\mathrm{VLC}_\rho$ means that
nearby variance levels are strongly separated in KL, so variance is easier to learn and allocation mistakes
become less persistent; smaller $\mathrm{VLC}_\rho$ means the class is richer in ways that allow variance to
change while producing little KL signal, making it intrinsically harder to discriminate nearby variance
profiles. This mirrors familiar phenomena in statistical learning and nonparametrics: in rigid, well-specified
models, local parameters are easily identifiable, while in richer classes the
same functional can have weaker effective curvature and require substantially more data. Here this manifests
directly in the factor $1/\sqrt{\mathrm{VLC}_\rho}$: weaker curvature enlarges the ``local indistinguishability''
region, forcing larger regret even when the budget and heteroscedasticity are held fixed. A more detailed discussion on the $\mathrm{VLC}$ for smooth classes $\mathcal{H}$ is deferred to Section~\ref{sec:smooth}.

The bound in Theorem \ref{thm:main} separates the effect of the class $\mathcal{H}$ from the effect of the instance $\vct D$.
The class $\mathcal H$ enters through the operator $\mathrm{VLC}(\,\cdot\mid\mathcal H)$, which assigns to each
variance level $\sigma$ a local information curvature. The instance $\vct D$ influences the bound in two local
ways: first through the \emph{heteroscedasticity profile} $\hts_k(\vct\sigma(\vct D))$, which quantifies how much
arm $k$ contributes to the \emph{directional} uncertainty relevant for the allocation decision, and second through
the evaluation of the curvature operator at the realized variance levels, $\mathrm{VLC}(\sigma_k(\vct D)\mid\mathcal H)$.
Concretely, the contribution of arm $k$ takes the form
\[
\frac{\sqrt{\hts_k(\vct\sigma(\vct D))}}{\sqrt{\mathrm{VLC}(\sigma_k(\vct D)\mid\mathcal H)}},
\]
so $\mathcal H$ determines the curvature profile $\sigma\mapsto \mathrm{VLC}(\sigma\mid\mathcal H)$, while $\vct D$ determines
where this profile is evaluated and how the heteroscedasticity (and thus the effective difficulty) is distributed across arms.

\subsection{Discussions}\label{sec:discussions}
\paragraph{Worst-case bounds and interpolation over subclasses.}
For $\rho\in(0,1)$, call a distribution $D\in\mathcal H$ \emph{$\rho$-interior} if its
standard deviation $\sigma(D)=\sqrt{\Var(D)}$ admits a nonempty local selector set, i.e.,
\[
D\in\mathcal H^\circ_\rho
\qquad\Longleftrightarrow\qquad
\mathfrak S_\rho\big(\sigma(D)\big)\neq\emptyset.
\]
Given any subclass $\mathcal G\subseteq \mathcal H^\circ_\rho$, define its worst-case $\rho-$VLC by
\[
\mathrm{VLC}^\star_\rho(\mathcal G\mid\mathcal H)
\;:=\;
\inf_{D\in\mathcal G}\ \mathrm{VLC}_\rho\big(\sigma(D)\mid\mathcal H\big)
\ \in\ \Rplus\cup\{+\infty\}.
\]
Then, for each $(D, \ldots, D) \in \mathcal{G}^d$, we have
\begin{equation*}
    \sum_{k=1}^d 
\frac{\hts_k(\sigma \cdot \vct 1)}{\sqrt{\mathrm{VLC}_{\rho}(\sigma(D)\mid\mathcal H)}} = \frac{\|\hts(\vct 1)\|_1}{\sqrt{\mathrm{VLC}_{\rho}(\sigma(D)\mid\mathcal H)}} = \frac{1 - \frac{1}{d}}{\sqrt{\mathrm{VLC}_{\rho}(\sigma(D)\mid\mathcal H)}}.
\end{equation*}

Applying Theorem~\ref{thm:main} and the equality above yields the worst case lower bound over the $\rho-$interior subclass $\mathcal{G}^d$
\[
\sup_{\vct D \in \mathcal G^d}\mathcal V_{\rho,\tau}\big(\vct \sigma (\vct D)\big)
\;\geq\;
\Omega\!\left(
(1 - \epsilon(\rho, \tau))\left(1 - \frac{1}{d}\right)\sqrt{\frac{d}{\mathrm{VLC}^\star_\rho(\mathcal G\mid\mathcal H) T}}
\right).
\]
In particular, the bound in Theorem \ref{thm:main} is modular in the instance class, as any restriction to a subclass
$\mathcal G\subseteq \mathcal H^\circ_\rho$ simply replaces the global curvature constant by its worst-case
value over $\mathcal G$, yielding a graded family of guarantees. Finally, taking $\mathcal G=\mathcal H^\circ_\rho$ yields a worst-case lower bound over all
$\rho$-interior instances, i.e., 
\[
\sup_{\vct D \in (\mathcal H^\circ_\rho)^d}\mathcal V_{\rho,\tau}\big(\vct \sigma (\vct D)\big)
\;\geq\;
\Omega\!\left(
(1 - \epsilon(\rho, \tau))\left(1 - \frac{1}{d}\right)\sqrt{\frac{d}{\mathrm{VLC}^\star_\rho( \mathcal H^\circ_\rho \mid\mathcal H) T}}
\right).
\]

\paragraph{Resolution limit and vanishing-regime.}
When $\rho\ll \sqrt{d/T}$, the perturbations are below the statistical resolution of a horizon-$T$ experiment
with $d$ arms, and the information-theoretic reduction used in the proof of Theorem~\ref{thm:main} becomes
insensitive to such fine differences. In this sense,
\[
\rho^\star=\Theta\left(\sqrt{\tfrac{d}{T}}\right)
\]
marks the natural resolution limit of our proof technique. More broadly, while the theorem statement is valid for any tolerance $\tau$ and any locality radius $\rho$ (above
feasibility), it becomes non-vacuous only in regimes where both $\rho\to 0$ and $\tau\to 0$. Sending
$\rho\to 0$ corresponds to sending the granularity of the local discrimination problem to zero: we seek a bound
that remains nontrivial even under arbitrarily fine perturbations. 

The tolerance $\tau\to 0$ may appear unusual,
but it is conceptually natural: it restricts attention to learnable regimes in which there exists at least one
policy whose worst-case normalized regret over the local class vanishes. If $\tau$ is not small, the local minimax
formulation allows policies that perform poorly even on the local class, and the value $\mathcal V_{\rho,\tau}$
becomes less connected to the asymptotic learning question the lower bound is meant to capture. 

If $D\in\mathcal H^\circ_\rho$, then $D\in\mathcal H^\circ_{\rho'}$
for every $\rho'\in(0,\rho)$, since any selector on $B_\rho(\sigma(D))$ restricts to a selector on the smaller
neighborhood $B_{\rho'}(\sigma(D))$. Motivated by this ``nested balls'' property, define the interior of
$\mathcal H$ by
\[
\mathcal H^\circ
\;:=\;
\bigcap_{\rho\in(0,1)}\mathcal H^\circ_\rho.
\]
In the regime $\sqrt{\frac{d}{T}}\downarrow 0$,  $\tau\downarrow 0$, Theorem~\ref{thm:main} yields a non-vacuous asymptotic lower bound for every
interior instance. Formally, for any $\vct D_n\in(\mathcal H^\circ)^{d_n}$ and any sequence of $(\rho_n,\tau_n,T_n,d_n)$ satisfying
$\rho_n\ge\rho_n^\star$ and $\tau_n+\rho_n\to 0$, we have
\[
\limsup_n
\frac{\sqrt{T_n/d_n}\ \mathcal V_{\rho_n,\tau_n}\!\big(\vct\sigma(\vct D_n)\big)}
{\sqrt{\sum_{k=1}^{d_n} 
\frac{\hts_k(\vct \sigma(\vct D_n))}{\mathrm{VLC}_{\rho_n}(\sigma_k(\vct D_n)\mid\mathcal H)}}}
\;\ge\; C > 0,
\]
where $C$ is the universal constant from Theorem \ref{thm:main}.
This is the sense in which the theorem provides a ``true'' local minimax lower bound.

We emphasize that the locality radius and tolerance vanish at a rate tied to the horizon: the bound is informative
along \emph{schedules} with $\rho_T\gtrsim\sqrt{d/T}$ and $\tau_T\to 0$, rather than at a single fixed
$(\rho,\tau,T)$. Corollary~\ref{cor:fixed-instance} below specializes this to one fixed instance as $T\to\infty$. We first record the simplest instantiation, for a fixed instance and a fixed dimension as $T\to\infty$, before the
general worst-case version.
\begin{corollary}[Fixed instance, $T\to\infty$]\label{cor:fixed-instance}
Fix the dimension $d$ and an interior instance $\vct D\in(\mathcal H^\circ)^d$. For any schedule $(\rho_T,\tau_T)$
with $\rho_T\ge\rho_T^\star=\Theta(\sqrt{d/T})$ and $\tau_T\to 0$,
\[
\liminf_{T\to\infty}\
\frac{\sqrt{T/d}\;\mathcal V_{\rho_T,\tau_T}\big(\vct\sigma(\vct D)\big)}
{\sqrt{\sum_{k=1}^{d}\hts_k(\vct\sigma(\vct D))\big/\mathrm{VLC}(\sigma_k(\vct D)\mid\mathcal H)}}
\;\ge\;C\;>\;0,
\]
where $C$ is the universal constant of Theorem~\ref{thm:main}. Equivalently, for a fixed instance the local minimax
value obeys $\mathcal V_{\rho_T,\tau_T}(\vct\sigma(\vct D))=\Omega\!\big(\sqrt{d/T}\big)$, with the implied constant
set by the heteroscedasticity and curvature of $\vct D$.
\end{corollary}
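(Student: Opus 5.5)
The corollary should follow from Theorem~\ref{thm:main} with $\vct\sigma=\vct\sigma(\vct D)$ held fixed. The main care is in comparing $\mathrm{VLC}_{\rho_T}$ with its limit $\mathrm{VLC}$.

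First, I would check that the theorem applies along the schedule. For each $T$ set $\rho=\rho_T\ge\rho_T^\star\propto\sqrt{d/T}$ and $\tau=\tau_T$. Theorem~\ref{thm:main} then gives
\[
\sqrt{T/d}\;\mathcal V_{\rho_T,\tau_T}(\vct\sigma)\;\ge\;C\bigl(1-\epsilon(\rho_T,\tau_T)\bigr)\sqrt{\tfrac{\|\vct\sigma\|_0}{d}}\,\sqrt{\textstyle\sum_k \hts_k(\vct\sigma)/\mathrm{VLC}_{\rho_T}(\sigma_k\mid\mathcal H)}.
\]
Since every $\sigma_k>0$, we have $\|\vct\sigma\|_0=d$, so the factor $\sqrt{\|\vct\sigma\|_0/d}$ equals $1$.

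The theorem and its error term $\epsilon$ are only informative when $\rho_T\to 0$. I would therefore read the schedule as also satisfying $\rho_T\to 0$, matching the asymptotic statement in Section~\ref{sec:discussions} where $\tau_n+\rho_n\to0$. This is consistent with the lower constraint $\rho_T\ge\rho_T^\star$, because $\rho_T^\star\to 0$ when $d$ is fixed. With $(\rho_T,\tau_T)\to(0,0)$, the theorem gives $\epsilon(\rho_T,\tau_T)\to 0$.

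Next, I would replace $\mathrm{VLC}_{\rho_T}$ by $\mathrm{VLC}$. The key step is monotonicity of $\rho\mapsto\mathrm{VLC}_\rho(\sigma\mid\mathcal H)$. If $\rho'<\rho$, every selector on $B_\rho$ restricts to a selector on $B_{\rho'}$, and the inner supremum over the smaller ball is no larger. Taking the infimum over selectors gives $\mathrm{VLC}_{\rho'}\le\mathrm{VLC}_\rho$. So $\mathrm{VLC}_\rho$ is nondecreasing in $\rho$, and its $\limsup$ as $\rho\to0^+$ is a genuine limit that is bounded by every $\mathrm{VLC}_\rho$. In particular $\mathrm{VLC}_{\rho_T}(\sigma_k)\ge\mathrm{VLC}(\sigma_k)$, which points the wrong way for us: we need an upper bound on $\mathrm{VLC}_{\rho_T}$ in the denominator. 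What saves the argument is the convergence $\mathrm{VLC}_{\rho_T}(\sigma_k)\to\mathrm{VLC}(\sigma_k)$ as $\rho_T\to0$, together with finiteness. Interiority of $\vct D$ gives $\mathfrak S_\rho(\sigma_k)\neq\emptyset$ for all $\rho$; I would assume, implicitly as the paper does, that $\mathrm{VLC}_\rho(\sigma_k)<\infty$ for some $\rho>0$, hence for all smaller $\rho$.

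Under these conditions, continuity of $x\mapsto 1/x$ on $(0,\infty]$ (with $1/\infty=0$) gives $\sum_k\hts_k/\mathrm{VLC}_{\rho_T}(\sigma_k)\to\sum_k\hts_k/\mathrm{VLC}(\sigma_k)$. Dividing the display above by the limiting denominator and taking $\liminf_{T\to\infty}$ then yields the bound $\ge C$.

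The degenerate cases need separate handling. If some $\mathrm{VLC}(\sigma_k)=0$, that term of the sum is $+\infty$ and the stated ratio is read with the usual conventions. If the whole denominator is $0$ (all $\hts_k=0$, e.g.\ $d=1$), the claim is vacuous. The final equivalent form, $\mathcal V=\Omega(\sqrt{d/T})$, follows because the limiting denominator is a fixed positive constant depending only on $\vct D$.

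I expect the main obstacle to be this step of passing from $\mathrm{VLC}_{\rho_T}$ to $\mathrm{VLC}$. The monotonicity bound alone goes the wrong way, so the argument depends on convergence of $\mathrm{VLC}_\rho$ as $\rho\to0$ and on $\mathrm{VLC}_\rho$ being finite for small $\rho$.
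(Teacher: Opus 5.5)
Your argument is the one the paper intends. The corollary gets no separate proof and is read off Theorem~\ref{thm:main} at $\vct\sigma=\vct\sigma(\vct D)$, using $\|\vct\sigma\|_0=d$, $\epsilon\to0$, and $\mathrm{VLC}_{\rho_T}\to\mathrm{VLC}$. Your observation that $\rho\mapsto\mathrm{VLC}_\rho(\sigma\mid\mathcal H)$ is nondecreasing (by restricting selectors) is exactly what justifies the last step, since it makes the $\limsup$ defining $\mathrm{VLC}$ a monotone limit. The finiteness assumption you add there is unnecessary: $x\mapsto 1/x$ is continuous on $[0,\infty]$, so $1/\mathrm{VLC}_{\rho_T}\to 1/\mathrm{VLC}$ in $[0,\infty]$ regardless.

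The one genuine gap is that you prove the result only under the extra hypothesis $\rho_T\to0$. The statement does not impose this; it allows any schedule with $\rho_T\ge\rho_T^\star$ and $\tau_T\to0$. (The surrounding text suggests the paper also has vanishing radii in mind, but the claim as written covers more.) As you note, applying Theorem~\ref{thm:main} directly at a non-vanishing $\rho_T$ fails: $\epsilon(\rho_T,\tau_T)$ need not vanish, and $\mathrm{VLC}_{\rho_T}$ may strictly exceed $\mathrm{VLC}$.

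The missing observation is that $\mathcal V_{\rho,\tau}(\vct\sigma)$ is nondecreasing in $\rho$. If $\rho'\le\rho$, then $\mathcal H_{\rho'}(\vct\sigma)\subseteq\mathcal H_{\rho}(\vct\sigma)$, hence $\Pi^{\mathrm{sym}}_{\rho,\tau}(\vct\sigma)\subseteq\Pi^{\mathrm{sym}}_{\rho',\tau}(\vct\sigma)$, and
\[
\mathcal V_{\rho,\tau}(\vct\sigma)\;\ge\;\inf_{\pi\in\Pi^{\mathrm{sym}}_{\rho,\tau}(\vct\sigma)}\ \sup_{\vct D'\in\mathcal H_{\rho'}(\vct\sigma)}\Reg(\pi,\vct D')\;\ge\;\mathcal V_{\rho',\tau}(\vct\sigma).
\]
This is consistent with the $+\infty$ convention for empty policy classes.

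Given an arbitrary admissible schedule, set $\rho_T':=\min\{\rho_T,\sqrt{\rho_T^\star}\}$. For $T$ large, $\rho_T^\star\le\rho_T'\le\rho_T$ and $\rho_T'\to0$, so $\mathcal V_{\rho_T,\tau_T}\ge\mathcal V_{\rho_T',\tau_T}$. Your argument applied to $(\rho_T',\tau_T)$ then gives the stated $\liminf$ bound for the original schedule. With this reduction your proof establishes the corollary as written.
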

\begin{corollary}[Informal]\label{cor:informal-worstcase}
Assume the problem is learnable at rate $\bar v \to 0$ in an asymptotic regime with $d/T\to 0$. Then
Theorem~\ref{thm:main} implies the worst-case interior lower bound
\[
\sup_{\vct D\in(\mathcal H^\circ)^d}\ \mathcal V_{\Theta\left(\sqrt{\frac{d}{T}}\right),\bar v}\!\big(\vct\sigma(\vct D)\big)
\;\ge\;
\Omega\!\left(\sqrt{\frac{d}{T}}\ \frac{1}{\sqrt{\mathrm{VLC}(\mathcal H^\circ\mid\mathcal H)}}\right).
\]

\end{corollary}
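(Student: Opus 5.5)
The corollary follows from Theorem~\ref{thm:main} applied to a homogeneous instance whose curvature is nearly the worst in $\mathcal H^\circ$. We use $\mathrm{VLC}(\mathcal H^\circ\mid\mathcal H):=\inf_{D\in\mathcal H^\circ}\mathrm{VLC}(\sigma(D)\mid\mathcal H)$, which is the $\rho\to0^+$ analogue of $\mathrm{VLC}^\star_\rho$. The proof has three steps: a monotonicity property of $\rho\mapsto\mathrm{VLC}_\rho$, an exact computation of the heteroscedasticity for equal variances, and a passage to the limit in the error term $\epsilon$.

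\textbf{Step 1 (monotonicity in $\rho$).} We show that $\rho\mapsto\mathrm{VLC}_\rho(\sigma\mid\mathcal H)$ is nondecreasing. If $\rho'<\rho$, then $B_{\rho'}(\sigma)\subseteq B_\rho(\sigma)$. Any selector in $\mathfrak S_\rho(\sigma)$ restricts to a selector in $\mathfrak S_{\rho'}(\sigma)$, and the inner supremum over the smaller ball is no larger. Taking the infimum over selectors gives $\mathrm{VLC}_{\rho'}\le\mathrm{VLC}_\rho$. Hence the $\limsup$ in Definition~\ref{def:VLC-onearm} is a genuine limit, equal to $\inf_{\rho>0}\mathrm{VLC}_\rho$, and it is approached from above. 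This is the direction we need: along any schedule $\rho_T\downarrow0$ and for fixed $D$, we get $\mathrm{VLC}_{\rho_T}(\sigma(D)\mid\mathcal H)\to\mathrm{VLC}(\sigma(D)\mid\mathcal H)$ from above.

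\textbf{Step 2 (near-worst homogeneous instance).} Fix $\eta>0$ and choose $D\in\mathcal H^\circ$ with
\[
\mathrm{VLC}(\sigma(D)\mid\mathcal H)\le\mathrm{VLC}(\mathcal H^\circ\mid\mathcal H)+\eta.
\]
If the infimum is $+\infty$ the bound is trivial, so assume it is finite. Take the instance $\vct D=(D,\dots,D)$ and set $\sigma=\sigma(D)$, so $\vct\sigma=\sigma\one$. Then $\|\vct\sigma\|_0=d$ and
\[
\hts_k(\sigma\one)=\tfrac1d\bigl(1-\tfrac1d\bigr)\quad\text{for each }k,
\qquad\text{so}\qquad
\sum_k\hts_k(\sigma\one)=1-\tfrac1d\ge\tfrac12 \ \text{ for } d\ge2.
\]
Since $D$ is interior, $\mathfrak S_{\rho}(\sigma)\neq\emptyset$ for every $\rho$, so the curvature is well defined at every scale. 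Apply Theorem~\ref{thm:main} with $\rho_T=\max(\rho^\star_T,\cdot)=\Theta(\sqrt{d/T})$ and $\tau=\bar v$. This gives
\[
\mathcal V_{\rho_T,\bar v}(\sigma\one)\;\ge\;C\bigl(1-\epsilon(\rho_T,\bar v)\bigr)\sqrt{\tfrac dT}\,\sqrt{\frac{1-1/d}{\mathrm{VLC}_{\rho_T}(\sigma\mid\mathcal H)}}.
\]

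\textbf{Step 3 (limits).} In the regime $d/T\to0$ we have $\rho_T\to0$, and learnability gives $\bar v\to0$. Therefore $\epsilon(\rho_T,\bar v)\to0$, and eventually $1-\epsilon\ge\tfrac12$. By Step~1, eventually
\[
\mathrm{VLC}_{\rho_T}(\sigma\mid\mathcal H)\le\mathrm{VLC}(\sigma\mid\mathcal H)+\eta\le\mathrm{VLC}(\mathcal H^\circ\mid\mathcal H)+2\eta.
\]
Taking the supremum over interior instances and then letting $\eta\downarrow0$ yields the stated $\Omega\bigl(\sqrt{d/T}\,/\sqrt{\mathrm{VLC}(\mathcal H^\circ\mid\mathcal H)}\bigr)$. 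Learnability at rate $\bar v$ also guarantees $\Pi^{\mathrm{sym}}_{\rho_T,\bar v}\neq\emptyset$, so the left-hand side is a finite value rather than the $+\infty$ convention.

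\textbf{Main obstacle.} The delicate point is the order of quantifiers. The near-minimizing $D$ depends on $\eta$, while the convergence $\mathrm{VLC}_{\rho_T}\to\mathrm{VLC}$ in Step~1 is only pointwise in $D$, and the constant in $\rho^\star\propto\sqrt{d/T}$ could a priori depend on $\vct\sigma$. The plan is to fix $\eta$ and $D$ first, send $T\to\infty$, and only afterwards let $\eta\to0$. This is why the statement is informal and asymptotic rather than uniform in $T$.
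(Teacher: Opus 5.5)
Your proposal is correct and follows the paper's own route. You restrict to homogeneous instances $(D,\dots,D)$, use $\sum_k\hts_k(\sigma\one)=1-\tfrac1d$ and $\|\vct\sigma\|_0=d$, apply Theorem~\ref{thm:main} with $\tau=\bar v$ and $\rho=\Theta(\sqrt{d/T})$, and send $(\rho,\tau)\to(0,0)$; this is the paper's ``worst-case bounds over subclasses'' discussion with $\mathcal G=\mathcal H^\circ_\rho$. Your monotonicity of $\rho\mapsto\mathrm{VLC}_\rho$ and the ordering (fix $\eta$ and $D$, let $T\to\infty$, then $\eta\to 0$) spell out the passage from $\mathrm{VLC}^\star_\rho$ to $\mathrm{VLC}(\mathcal H^\circ\mid\mathcal H)$, which the paper leaves implicit.
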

\section{VLC for Smooth Classes}\label{sec:smooth}
This section characterizes the VLC in a smooth regime where the KL divergence
admits a quadratic expansion \emph{along variance perturbations}. Concretely, we show that the VLC is a reparametrization of a Fisher information type quantity.
\subsection{VLC and Fisher Information}\label{sec:fisher}
We show that $\mathrm{VLC}$ can be written as a re-parametrization of the Fisher information. Our Fisher quantity is defined as the \emph{best} (i.e., most
favorable for minimizing KL at fixed variance level) local curvature over all realizations in $\mathcal H$ with
the prescribed variance.

\begin{definition}[Variance-Fisher information]\label{def:I-variance}
For a variance level $\theta \in \Rplus$, define the \emph{variance-Fisher information} at $\theta$ by
\[
I_{\text{Var}}(\theta)
\;:=\;
2 \liminf_{\Delta\to 0}\;
\inf_{\substack{P\in\mathcal{H}:\\ \Var(P)=\theta}}
\inf_{\substack{Q\in\mathcal{H}:\\ \Var(Q)=\theta+\Delta}}
\frac{\KL(Q\,\|\,P)}{\Delta^{2}}
\;\in\;[0,+\infty].
\]
\end{definition}
Equivalently, $I_{\text{Var}}(\theta)$ is the smallest quadratic KL curvature (in the variance coordinate) that
can occur among \emph{all} distributions in $\mathcal H$ with variance $\theta$, allowing any nuisance features
of the distribution (e.g., mean or higher moments) to vary as $\theta$ is perturbed.
\begin{assumption}[Local quadratic KL expansion in the variance coordinate]\label{as:local-KL}
Fix an instance $D \in \mathcal{H}$ and its variance level $\theta = \text{Var}(D)$ with $0<I_{\text{Var}}(\theta)<\infty$. Assume that for every $\varepsilon>0$
there exists $\delta_\varepsilon>0$ such that for all $|\Delta|\le \delta_\varepsilon$,
\begin{equation}\label{eq:local-KL-bounds}
\Bigl(\frac{I_{\text{Var}}(\theta)}{2}-\varepsilon\Bigr)\Delta^2
\;\le\;
\inf_{\substack{P\in\mathcal{H}:\\ \Var(P)=\theta}}
\inf_{\substack{Q\in\mathcal{H}:\\ \Var(Q)=\theta+\Delta}}
\KL(Q\,\|\,P)
\;\le\;
\Bigl(\frac{I_{\text{Var}}(\theta)}{2}+\varepsilon\Bigr)\Delta^2.
\end{equation}
Moreover, assume the upper bound is attained (up to $o(\Delta^2)$) by a $C^2$ curve
$\{P_\vartheta:\vartheta\in(\theta-\eta,\theta+\eta)\}\subset\mathcal{H}$ with $\Var(P_\vartheta)=\vartheta$.
\end{assumption}

Assumption~\ref{as:local-KL} is a standard smoothness condition: locally, the KL divergence between two nearby
variance levels is quadratic in the variance increment, with curvature $I_{\text{Var}}(\theta)/2$ after optimizing
over nuisance directions within $\mathcal H$. Lemma \ref{lem:VLC-Fisher} below shows that $\mathrm{VLC}_\rho$ is the corresponding
scale-free curvature that appears in our minimax regret bounds. Its proof is deferred to Appendix \ref{sec:proof-aux}.

\begin{lemma}[VLC and variance-Fisher information]\label{lem:VLC-Fisher}
Under Assumption~\ref{as:local-KL}, there exists $\rho_0\in(0,1)$ such that for all $\rho\in(0,\rho_0)$,
\[
\mathrm{VLC}_{\rho}(\sigma\mid\mathcal{H})
\;=\;
\frac{\sigma^{4}}{2}\, I_{\text{Var}}(\sigma^2)\,\bigl(1+o_\rho(1)\bigr),
\]
where $o_\rho(1)\to 0$ as $\rho\downarrow 0$. In particular, $\mathrm{VLC}(\sigma\mid\mathcal{H})=\frac{\sigma^{4}}{2}\, I_{\text{Var}}(\sigma^2)$.
\end{lemma}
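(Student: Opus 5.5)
The plan is to prove matching upper and lower bounds on $\mathrm{VLC}_\rho(\sigma\mid\mathcal H)$, in both cases rewriting the ratio in Definition~\ref{def:VLC-onearm} in the variance coordinate. Set $\theta=\sigma^2$ and parametrize $v_a^2=\theta+\Delta_a$, $v_b^2=\theta+\Delta_b$ with $|\Delta_a|,|\Delta_b|\le\rho\theta$. The denominator is then $(\Delta_a-\Delta_b)^2/\theta^2$, so
\[
\mathrm{VLC}_\rho(\sigma\mid\mathcal H)=\inf_{D}\sup_{\Delta_a,\Delta_b}\theta^2\,\frac{\KL(D(v_a)\|D(v_b))}{(\Delta_a-\Delta_b)^2}.
\]
The target constant $\frac{\sigma^4}{2}I_{\mathrm{Var}}(\sigma^2)$ is exactly $\theta^2$ times the quadratic coefficient $I_{\mathrm{Var}}/2$ in Assumption~\ref{as:local-KL}. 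So everything reduces to showing that the worst ratio over a shrinking window behaves like $I_{\mathrm{Var}}/2$.

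\emph{Upper bound.} Take the $C^2$ curve $\{P_\vartheta\}$ from Assumption~\ref{as:local-KL}, and choose $\rho_0$ so that $\rho_0\theta<\eta$. For $\rho<\rho_0$, define the selector $D(v):=P_{v^2}$, which is measurable and has the prescribed variance. By a Taylor expansion of $\KL(P_{\vartheta_a}\|P_{\vartheta_b})$ in $\vartheta_a$ around $\vartheta_b$, the ratio equals $\frac12 I_{\mathrm{curve}}(\vartheta_b)+o(1)$ uniformly for $\vartheta_a,\vartheta_b\in[\theta-\rho\theta,\theta+\rho\theta]$. Here $I_{\mathrm{curve}}(\vartheta)$ is the Fisher information of the curve. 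Attainment of the upper bound in \eqref{eq:local-KL-bounds} forces $I_{\mathrm{curve}}(\theta)=I_{\mathrm{Var}}(\theta)$, and $C^2$ regularity gives continuity of $I_{\mathrm{curve}}$. Hence the supremum is at most $\theta^2\bigl(I_{\mathrm{Var}}(\theta)/2\bigr)(1+o_\rho(1))$.

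\emph{Lower bound.} Fix an arbitrary selector $D\in\mathfrak S_\rho(\sigma)$. Restrict the supremum to pairs with $v_b=\sigma$ and $v_a^2=\theta+\Delta$ for small $\Delta$. Then $D(v_b)$ is some $P$ with $\Var(P)=\theta$ and $D(v_a)$ is some $Q$ with variance $\theta+\Delta$. By the lower inequality in \eqref{eq:local-KL-bounds},
\[
\KL(D(v_a)\|D(v_b))\ge\Bigl(\frac{I_{\mathrm{Var}}(\theta)}{2}-\varepsilon\Bigr)\Delta^2
\]
for $|\Delta|\le\delta_\varepsilon$. Taking $\Delta$ small, which is admissible whenever $\rho\theta\ge$ some positive amount, this gives a supremum of at least $\theta^2(I_{\mathrm{Var}}/2-\varepsilon)$ for every selector, and hence for the infimum. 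Letting $\varepsilon\to0$ with $\rho$ closes the sandwich. Taking $\limsup_{\rho\to0}$ then gives the statement about $\mathrm{VLC}(\sigma\mid\mathcal H)$.

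\emph{Main obstacle.} I expect the upper bound to be the hard step. The assumption controls KL only near the base point $\theta$, whereas the supremum in the VLC ranges over all pairs in the window, including pairs whose reference point $\vartheta_b\neq\theta$. So I need uniformity of the second-order expansion of $\KL(P_{\vartheta_a}\|P_{\vartheta_b})$ over a neighborhood. My first attempt is to use the integral form of the Taylor remainder together with the $C^2$ regularity of the curve, applied to the log-likelihood or to the Hellinger or KL derivatives. This should make the Fisher information continuous in $\vartheta$, so the supremum over the window is $I_{\mathrm{Var}}(\theta)/2+o_\rho(1)$. If that regularity is not enough, I would interpret ``$C^2$ curve'' as including continuity of the Fisher metric and state this interpretation explicitly.
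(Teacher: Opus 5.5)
Your argument is correct and has the same skeleton as the paper's: a two-sided sandwich. The upper bound uses the same selector $D(v)=P_{v^2}$ and the same choice $\rho_0\theta<\eta$.

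\textbf{Lower bound.} This is where you differ. The paper keeps arbitrary pairs $(v_a,v_b)$ and invokes the quadratic lower bound with base point $\theta_b=v_b^2$ ranging over the window. It then asserts $I_{\text{Var}}(\theta_b)=I_{\text{Var}}(\theta)(1+o_\rho(1))$ uniformly. However, Assumption~\ref{as:local-KL} is stated only at the single level $\theta$ and says nothing about continuity of $I_{\text{Var}}$. Pinning $v_b=\sigma$ uses the lower inequality in \eqref{eq:local-KL-bounds} exactly as stated. Since any $\rho>0$ leaves room for $0<|\Delta|\le\min(\delta_\varepsilon,\rho\theta)$, it yields $\mathrm{VLC}_\rho(\sigma\mid\mathcal H)\ge\frac{\sigma^4}{2}I_{\text{Var}}(\sigma^2)$ for every $\rho\in(0,1)$. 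There is no $o_\rho(1)$ loss and no need to couple $\varepsilon$ to $\rho$. In fact the $\liminf$ in Definition~\ref{def:I-variance} alone suffices for this direction.

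\textbf{Upper bound.} The paper simply states that the $C^2$ curve property gives the quadratic bound uniformly over all pairs in $B_\rho(\sigma)$. The uniformity you flag as the main obstacle is exactly what it leaves implicit. Your fix has three pieces:
\begin{enumerate}
\item a second-order expansion of $\KL(P_{\vartheta_a}\|P_{\vartheta_b})$ with integral remainder;
\item continuity of $(t,s)\mapsto\partial_t^2\KL(P_t\|P_s)$ near $(\theta,\theta)$;
\item $I_{\mathrm{curve}}(\theta)\le I_{\text{Var}}(\theta)$, obtained from attainment.
\end{enumerate}
This is what either proof needs. You are right to state explicitly that you read ``$C^2$ curve'' as including this regularity, since $\mathcal H$ carries no differentiable structure that would supply it automatically.
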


In summary, in the smooth regime the difficulty of learning local variance profiles is controlled by the scalar
curvature $I_{\text{Var}}(\sigma^2)$, and VLC provides a scale-free normalization that is directly comparable across
models. In particular, whenever $I_{\text{Var}}(\sigma^2)$ is available in closed form, one immediately obtains an explicit formula
for $\mathrm{VLC}(\sigma\mid\mathcal H)$. Table~\ref{tab:vlc-fisher} records several standard examples.
\begin{table}[t]
\centering
\renewcommand{\arraystretch}{1.25} % more vertical space between rows
\begin{tabular}{@{}llll@{}}
\toprule
Class of distributions $\mathcal H$ &
Variance $\sigma^2$ &
$I_{\text{Var}}(\sigma^2)$ &
$\mathrm{VLC}(\sigma\mid\mathcal H)$ \\
\midrule
$\{\mathrm{Normal}(\mu,\theta):\mu\in\R,\ \theta>0\}$ &
$\theta$ &
$\dfrac{1}{2\sigma^{4}}$ &
$\dfrac14$ \\
\addlinespace[0.4em]
$\{\mathrm{Laplace}(\mu,b):\mu\in\R,\ b>0\}$ &
$2b^2$ &
$\dfrac{1}{4\sigma^{4}}$ &
$\dfrac18$ \\
\addlinespace[0.4em]
$\{\mathrm{Exponential}(\lambda):\lambda>0\}$ &
$\dfrac{1}{\lambda^2}$ &
$\dfrac{1}{4\sigma^{4}}$ &
$\dfrac18$ \\
\addlinespace[0.4em]
$\{\mathrm{Gamma}(\alpha,\beta):\beta>0\}$ (fixed $\alpha>0$) &
$\dfrac{\alpha}{\beta^2}$ &
$\dfrac{\alpha}{4\sigma^{4}}$ &
$\dfrac{\alpha}{8}$ \\
\addlinespace[0.4em]
$\{\mathrm{Bernoulli}(p):p\in(0,1)\}$ &
$p(1-p)$ &
$\dfrac{1}{\sigma^{2}(1-4\sigma^{2})}$ &
$\dfrac{\sigma^{2}}{2(1-4\sigma^{2})}$ \\
\bottomrule
\end{tabular}\caption{Examples of variance-Fisher information (minimized over $\mathcal H$ at a fixed variance level $\sigma^2$) and the corresponding VLC via Lemma~\ref{lem:VLC-Fisher}.}
\label{tab:vlc-fisher}
\end{table}
\subsection{Distance to Optimality Under The Smoothness Assumption}

Under Assumption~\ref{as:local-KL}, we benchmark Theorem~\ref{thm:main} against the strongest available general upper bound, namely Theorem~4.1 of~\cite{aznag2025activelearningframeworkmultigroup}. This comparison shows that our guarantee is near-optimal (up to an $O(\sqrt{\log T})$ factor) in a broad class of structured regimes, and we exhibit the discrepancies in the general case. Moreover, a side-by-side reading of the proof of Theorem~4.1 and our argument highlights the main sources of slack in the current upper-bound technique, and suggests concrete, high-level directions for sharpening it. 

% (e.g., exploiting problem geometry beyond coordinatewise confidence widths).

Their bound relies on a general UCB framework, which we instantiate by constructing an explicit UCB for the variance and then invoke their regret guarantee directly.

\paragraph{A computable UCB for the variance.}
Assume each arm belongs to a smooth one-parameter subfamily $\{P_\vartheta:\vartheta\in\Theta\}\subset\mathcal H$.
Write $\Var(P_\vartheta)$ for the variance under parameter $\vartheta$. For an arm with $n$ samples, let
$\widehat\vartheta_n$ be an estimator of $\vartheta$ (e.g., the MLE) and set $\beta_t:=\log(T)$. Define the
(data-driven) upper confidence bound on the variance by likelihood-ratio inversion:
\begin{equation}\label{eq:ucb-var}
(\sigma_n^{+})^{2}
\;:=\;
\sup\Bigl\{\Var(P_{\vartheta'}) : \vartheta'\in\Theta,\ n\,\KL\!\bigl(P_{\widehat\vartheta_n}\,\|\,P_{\vartheta'}\bigr)\le \beta_t\Bigr\}.
\end{equation}

\paragraph{Allocation rule (Variance-UCB).}
Initialize by pulling each arm once. For $t>d$, let $n_{k,t}$ be the number of pulls of arm $k$ up to time $t$,
and let $\sigma_{k,t}^{+}$ be the bound \eqref{eq:ucb-var} computed from the samples of arm $k$.
At time $t+1$, pull an arm maximizing
\[
k_{t+1}\in\arg\max_{k\in[d]}\ \frac{(\sigma_{k,t}^{+})^2}{n_{k,t}}.
\]

\paragraph{Upper bound under smoothness.}
Under Assumption~\ref{as:local-KL}, the confidence radius induced by
\eqref{eq:ucb-var} has a Fisher scaling, hence a $\mathrm{VLC}$ scaling. Consequently, applying Theorem 4.1 in 
\cite{aznag2025activelearningframeworkmultigroup} implies the following upper bound
\begin{equation*}
\Reg(\pi,\vct{D})
=
O\!\left(\frac{\sqrt{\log T}}{\sqrt{T}}\,
\sum_{k=1}^d
\frac{\sigma_k(\vct{D})}{\|\vct{\sigma}(\vct{D})\|_2}\cdot
\frac{1}{\sqrt{\mathrm{VLC}(\sigma_k(\vct{D})\mid\mathcal H)}}\right).
\end{equation*}

\paragraph{Discrepancies between the upper and lower bounds.}

To make the gap between the best known upper bound and our lower bound transparent, we rewrite
the leading term in Theorem~4.1 of~\cite{aznag2025activelearningframeworkmultigroup} in our notation.
Applying Cauchy--Schwarz to the right-hand side and introducing the normalized variance weights
\[
\hts_k^{+}(\vct\sigma)
\;:=\;
\frac{\sigma_k^2}{\|\vct\sigma\|_2^2},
\]
we obtain
\begin{equation}\label{eq:upper-smooth}
    \Reg(\pi,\vct D)
    \;=\;
    O\!\left(
    \sqrt{\frac{\log T\;\|\vct\sigma\|_0}{T}}\;
    \sqrt{\sum_{k=1}^d
    \frac{\hts_k^{+}(\vct\sigma (\vct D))}{\mathrm{VLC}(\sigma_k(\vct D)\mid\mathcal H)}}
    \right).
\end{equation}

Up to polylogarithmic factors, the only mismatch between~\eqref{eq:upper-smooth} and the lower bound in
Theorem~\ref{thm:main} is the appearance of $\hts^{+}$ in place of $\hts$.
By construction, $\hts_k(\vct\sigma)\le \hts_k^{+}(\vct\sigma)$ for every coordinate $k$, so the bound
\eqref{eq:upper-smooth} may be loose precisely in regimes where this inequality is far from tight. The key distinction is that $\hts^+_k(\vct\sigma):=\sigma_k^2/\|\vct\sigma\|_2^2$ is \emph{only} the oracle Neyman weight: it measures the share of total variance carried by arm $k$, but it does not encode how this share interacts with the \emph{rest} of the instance. In contrast, our heteroscedasticity term
\[
\hts_k(\vct\sigma)
\;=\;
\hts^+_k(\vct\sigma)\Bigl(1-\frac{\sigma_k^4}{\|\vct\sigma^2\|_2^2}\Bigr)
\]
penalizes directions in which the variance profile is nearly one-dimensional: it is large only when arm $k$ carries non-negligible variance \emph{and} there remains substantial ``variance-squared mass'' outside coordinate $k$. Consequently, when the variance is reasonably spread across the active arms (no single $\sigma_k^4$ dominates $\|\vct\sigma^2\|_2^2$), the correction factor is $\Theta(1)$ and $\hts$ and $\hts^+$ are comparable (indeed they coincide up to constants in the homogeneous-on-support regime). By contrast, in spiky regimes where many arms have vanishing variance and one (or a few) arms dominate $\|\vct\sigma^2\|_2$, the factor $1-\sigma_k^4/\|\vct\sigma^2\|_2^2$ collapses for the dominant coordinates, driving $\hts$ (and thus the intrinsic difficulty) to $0$ in the extreme case of a single nonzero variance, whereas $\hts^+$ remains order-one. This is exactly the phenomenon $\hts$ captures that $\hts^+$ misses: the \emph{directional} uncertainty relevant for identifying the optimal allocation vanishes as the instance approaches the boundary of the simplex, even though the largest variance weight itself does not. This behavior is also algorithmically natural: if all but one arm are (nearly) deterministic, then after a handful of samples one can certify their variance is (close to) zero and essentially stop allocating budget to them. The remaining uncertainty is concentrated on the single noisy arm, so an optimal procedure should quickly focus its pulls there, yielding vanishing ``allocation identification'' regret in that boundary regime.

\paragraph{Guidelines for sharpening future upper bounds.} This comparison suggests concrete guidelines for sharpening future upper-bound analyses. First, rather than controlling each $\sigma_k$ independently and then union-bounding over $k$, one should analyze the \emph{normalized} variance profile (equivalently, the Neyman weight vector) and its fluctuations in directions tangent to the simplex. Moreover, one should allow confidence budgets and error metrics that are \emph{instance-weighted} (e.g., weighted by $\lambda^\star$) rather than uniform across arms, so that arms with vanishing variance can be eliminated quickly. Currently, such refinements are not captured by the current upper bound strategies.
\section{Proof Outline}\label{sec:proof}

\paragraph{Overview.}Lower bounds are obtained by randomizing over instances that are close enough to generate nearly indistinguishable data, yet far enough that their optimal allocations differ substantially, forcing any policy to make costly mistakes. While we follow this template here as well, we introduce two key ingredients that will be crucial in overcoming two separate obstacles.

\textbf{(i) Identifying the geometry:}
the first obstacle is to identify the right geometry on the set of decisions: we need a notion of distance that is compatible with the regret. In the classic multi-armed bandit
setting with additive regret, one typically has
\[
\Reg_T \;=\; \sum_{k=1}^d \E[n_k]\cdot \Delta_k = \E\left[\langle \vct n, \vct \Delta \rangle\right],
\]
where $\vct \Delta$ is the suboptimality gap vector. In particular, regret is linear in the sampling counts
$\vct n$, which makes separation essentially linear. In other words, the right geometry that measures between two different decisions is the euclidean geometry. In our setting, the normalized excess risk takes the max form
\[
\frac{R(\vct n;\vct\sigma)}{R^\star(\vct\sigma)}-1
\;=\;
\frac{\max_{k\in[d]}\ \sigma_k^2/n_k}{\|\vct\sigma\|_2^2/T}-1,
\]
which is non-additive and non-linear in $\vct n$, and therefore the right geometry is not euclidean. We show that the right geometry for our problem is the $\ell_1$ geometry: separation between decisions is measured (up to a universal constant) by $\|\cdot\|_1$.

\textbf{(ii) Identifying the adversarial instance:} the second obstacle is to construct the adversarial family in a way that is both local and analyzable. In an additive objective one can often propose a simple perturbation family and check that it is hard. Here this is much less clear: the hardness depends on anisotropy coming from the instance and from the class $\mathcal H$ (through $\mathrm{VLC}_\rho$), so there is no single ``obvious'' hard direction. Our approach is therefore not to guess a hard family, but to parameterize \emph{all} local hard families we care about through a representation map. Concretely, we encode nearby decisions by a hypercube code $x\in\mathcal X$ and a linear map $A$, and view the adversary as choosing the representation $x\mapsto Ax$. This renders the lower bound calculable: the same representation controls both distinguishability (via average KL) and decision separation (via the $\ell_1$ geometry), and the problem reduces to a tractable matrix optimization. This viewpoint is what makes the construction interpretable and what allows us to use sharp first-order calculations to handle anisotropy in closed form. In particular, it is this viewpoint that will allow us to capture the refined heteroscedasticity that we believe is relevant to the problem.

\textbf{From the two ideas to the bound.} Combining these ingredients gives the following pipeline, carried out
in full in Appendix~\ref{app:proof-overview}. (1) We fix the decision-relevant level set (the scale-free variance
profile). (2) Over this slice we select
maximally indistinguishable distributions, quantified by $\mathrm{VLC}_\rho(\cdot\mid\mathcal H)$. (3) We re-index
the resulting hard instances by nearby \emph{decisions}, turning the problem into a testing problem on the decision
set. (4) We embed a hypercube code $\mathcal X$ through a linear map $A$ and lower bound the regret by a
decision-mismatch (classification) cost measured in the $\ell_1$ geometry.

\begin{figure}[t]
\centering
\begin{tikzpicture}[>=Stealth]
  % --- left: hypercube code X in the hyperplane orthogonal to 1 ---
  \begin{scope}
    \draw[rounded corners, fill=gray!6] (-1.8,-1.6) rectangle (1.8,1.8);
    \node[anchor=north east] at (1.75,1.75) {\small $\one^{\perp}$};
    \draw[->,gray!70] (0,0) -- (1.4,0);
    \draw[->,gray!70] (0,0) -- (0,1.4);
    \foreach \p in {(-1,-1),(-1,1),(1,-1),(1,1),(-1,0),(1,0),(0,1),(0,-1)}
      {\fill[blue!70] \p circle (1.6pt);}
    \node at (0,-2.05) {\small $\mathcal X\subset\{\pm1\}^{d}$};
  \end{scope}
  % --- the linear map A ---
  \draw[->,very thick] (2.2,0) -- (4.0,0) node[midway,above] {$A$};
  % --- right: image A(X) in the hyperplane orthogonal to sigma^2 ---
  \begin{scope}[shift={(6.2,0)}]
    \draw[rounded corners, fill=gray!6] (-1.8,-1.6) rectangle (1.8,1.8);
    \node[anchor=north east] at (1.75,1.75) {\small $(\vct\sigma^{2})^{\perp}$};
    \draw[->,gray!70] (0,0) -- (1.4,0) node[right] {\scriptsize $\vct\sigma^{2}$};
    \draw[->,gray!70] (0,0) -- (0,1.4);
    \foreach \p in {(-1.2,-0.6),(-0.8,1.1),(1.1,-1.0),(1.3,0.7),(-1.0,0.2),(1.0,0.1),(0.2,1.0),(-0.1,-1.1)}
      {\fill[red!70] \p circle (1.6pt);}
    \node at (0,-2.05) {\small $A(\mathcal X)$};
  \end{scope}
\end{tikzpicture}
\caption{Illustration of the instance-generation process: a subset of the hypercube
$\mathcal X\subset\{\pm1\}^{d}$, lying in the hyperplane $\one^{\perp}$, is transformed through a linear map $A$
into a new set $A(\mathcal X)$ in the hyperplane $(\vct\sigma^{2})^{\perp}$. Each $A(x)$ is identified with a
generated instance $D(A,x)$.}
\label{fig:instance-generation}
\end{figure}

\section{Conclusions}\label{sec:closing}

Our results characterize the local minimax difficulty of adaptive allocation in the non-degenerate regime where
$\mathrm{VLC}_\rho(\sigma\mid\mathcal H)\in(0,\infty)$ at the relevant locality scales. Two natural extensions go
beyond this setting.

First, the curvature $\mathrm{VLC}_\rho$ is defined through a quadratic normalization in the variance coordinate. For extremely rich classes, KL separation under local variance perturbations can be smaller than quadratic, in
which case the quadratic curvature collapses and the canonical $\sqrt{T}$ scaling may no longer be the right resolution. Conversely, for classes that are too constrained, local variance neighborhoods may be infeasible or induce singular KL behavior. A refined local complexity notion that captures the correct power-law (or more general) modulus of KL with respect to variance perturbations would provide a principled way to treat both degeneracies within a single framework.

Second, the proof strategy we introduce (namely, an adapted geometry for decision separation, and a representation-based instance generation for constructing hard families) should extend to other non-additive objectives on allocations, where standard bandit lower-bound templates do not apply directly. Understanding the corresponding geometries and deriving sharp performance limits in these settings is, in its own right, an interesting direction for future work.

\ACKNOWLEDGMENT{A.A. and A.E. supported in part by NSF grant IIS-2147361. R.C. supported in part by NSF grants CNS-2138834 (CAREER) and IIS-2147361.}

\bibliographystyle{informs2014}
\bibliography{yourbibfile}
\newpage
\begin{APPENDICES}

\section{Proof Overview of Theorem \ref{thm:main}}\label{app:proof-overview}
In this section, we provide the road map to derive Theorem \ref{thm:main}, and defer the technical proofs for the intermediary lemmas to Appendix \ref{sec:proof-aux}.

The proof builds a lower bound by designing a collection of nearby instances that strategically confuse any learning policy. The idea is to make these instances close enough to be statistically indistinguishable but not so close that misidentifying them yields negligible regret. Achieving this balance requires controlling both their scale (how far they are) and their orientation (how they are arranged). We face mainly two obstacles:

\textbf{Obstacle (i): geometry.} The first challenge, that of scale, is addressed by introducing a problem-specific notion of distance that measures how costly it is to allocate resources incorrectly.

\textbf{Obstacle (ii): representation.} The second challenge, that of orientation, recognizes that even when instances are at the right distance, their relative positioning still matters: rotating or reshaping them can make the problem significantly harder. 

To systematically optimize these two aspects, we lift the search for hard instances into a structured space of configurations and reduce the lower bound construction to an explicit optimization tradeoff over this space. This yields a principled framework that identifies the most confusing yet consequential perturbations of the original problem, establishing the desired lower bound.

\subsection{Constructing the Lower Bound Framework}\label{app:framework}

To establish a lower bound on the local minimax value $\mathcal V_{\rho,\tau}(\vct{\sigma})$, our strategy is to
restrict the full, complex local class $\mathcal H_{\rho}(\vct{\sigma})$ to a smaller, more tractable, and maximally
challenging subset, that will be bijective to a ball inside the $d-$simplex $\Delta_d$. This construction proceeds in three main steps. First, we restrict attention to a
decision-relevant \emph{level set} to isolate the regret due to learning (rather than changes in the benchmark
risk) This is \textbf{Step 1}. Second, over this restricted set, we select distributions that are as statistically indistinguishable as
possible, a notion quantified by $\mathrm{VLC}_\rho(\cdot\mid\mathcal H)$. This is \textbf{Step 2}. Finally, we show that the resulting class
of hard instances can be re-parameterized by a simple geometric set of nearby decisions, which forms the basis of
the subsequent reduction to testing. This is \textbf{Step 3}.

\paragraph{Step 1: fixing the decision-relevant level set.}
The benchmark risk $R^\star(\vct{\sigma})=\|\vct{\sigma}\|_2^2/T$ depends only on the overall scale of $\vct{\sigma}$.
To separate regret due to learning from changes in the benchmark, we work on a fixed-scale slice by passing to the
scale-free variance profile
\[
\mathcal N(\vct{\sigma}) \;:=\; \frac{\vct{\sigma}^{\odot 2}}{\|\vct{\sigma}\|_2^2}\in\Delta_d.
\]
Fixing $\mathcal N(\vct{\sigma})$ fixes the decision-relevant profile while quotienting out global scale. For a radius
$\rho\in(0,1)$, we consider instances whose normalized profile lies in a $\rho$-neighborhood of $\mathcal N(\vct{\sigma})$:
\[
\tilde{\mathcal H}_{\rho}(\vct{\sigma})
\;:=\;
\Bigl\{\vct{D}\in\mathcal H^d:\ \bigl\|\mathcal N(\vct{\sigma}(\vct D))/\mathcal N(\vct{\sigma})-\one\bigr\|_2\le \rho, \quad R^\star(\vct{\sigma}(\vct{D})) = R^\star(\vct{\sigma}) \Bigr\}.
\]

\paragraph{Step 2: selecting maximally indistinguishable distributions (VLC selectors).}
To construct a hard family over $\tilde{\mathcal H}_{\rho}(\vct{\sigma})$, we must specify, for each arm $k$ and each nearby
standard deviation level $v_k$, a distribution in $\mathcal H$ with variance $v_k^2$. To make learning maximally
difficult, we choose these distributions to be as statistically similar as possible across nearby variances. This is
exactly what $\mathrm{VLC}_\rho(\sigma_k\mid\mathcal H)$ captures: it is the smallest achievable KL curvature under local
variance perturbations at scale $\rho$.

Concretely, for each arm $k$, let $\{D^{(i)}_{k}\}_{i\ge 1}$ be a sequence of measurable selectors on $B_\rho(\sigma_k)$ such that
\begin{equation}\label{eq:optimalrepresentation-app}
\sup_{v_a,v_b\in B_\rho(\sigma_k)}
\frac{\KL\!\bigl(D_{k}^{(i)}(v_a)\,\|\,D_{k}^{(i)}(v_b)\bigr)}
{\Bigl(\frac{v_a^2-v_b^2}{\sigma_k^2}\Bigr)^2}
\;\longrightarrow\;
\mathrm{VLC}_\rho(\sigma_k\mid\mathcal H),
\qquad i\to\infty.
\end{equation}
Given $\vct v\in\R_+^d$, define the induced $d$-arm instance
\[
D^{(i)}(\vct v)\;:=\;\bigl(D_{1}^{(i)}(v_1),\dots,D_{d}^{(i)}(v_d)\bigr)\in\mathcal H^d.
\]

\paragraph{Step 3: re-parameterizing hard instances by nearby decisions.}
Rather than working directly with variance vectors, we parameterize local alternatives by decisions in the simplex.
Let $\vct{\lambda}^\star:=\mathcal N(\vct{\sigma})$. Define the local set of decisions
\[
\Delta_{\rho}(\vct{\lambda}^\star)
\;:=\;
\Bigl\{\vct{\lambda}\in\Delta_d:\ \bigl\|\vct{\lambda}/\vct{\lambda}^\star-\one\bigr\|_2\le \rho\Bigr\}.
\]
For each $\vct{\lambda}\in\Delta_\rho(\vct{\lambda}^\star)$, we define a canonical representative on the same scale as
$\vct{\sigma}$ by
\[
\vct{\sigma}(\vct{\lambda})\;:=\;\|\vct{\sigma}\|_2\,\sqrt{\vct{\lambda}},
\]
so that $\mathcal N(\vct{\sigma}(\vct{\lambda}))=\vct{\lambda}$. For each $i\ge 1$, we then define the corresponding
hard instance
\[
D_i(\vct{\lambda})\;:=\;D^{(i)}\!\bigl(\vct{\sigma}(\vct{\lambda})\bigr)\in\mathcal H^d.
\]

The next lemma formalizes the resulting correspondence: decisions in $\Delta_\rho(\vct{\lambda}^\star)$ index a local
family of hard instances, and restricting to this family can only strengthen a lower bound.

\begin{lemma}[Decision-indexed hard instances]\label{lem:identification}
For each $i\ge 1$ and each $\vct{\lambda}\in\Delta_\rho(\vct{\lambda}^\star)$,
\[
\mathcal N\!\bigl(\vct{\sigma}(D_i(\vct{\lambda}))\bigr)=\vct{\lambda}.
\]
Moreover, for any policy $\pi$,
\[
\sup_{\vct D\in\mathcal H_{\rho}(\vct{\sigma})}\Reg(\pi,\vct D)
\;\ge\;
\limsup_{i\to\infty}\ \sup_{\vct{\lambda}\in\Delta_\rho(\vct{\lambda}^\star)}\Reg\!\bigl(\pi, D_i(\vct{\lambda})\bigr),
\]
and the same inequality holds after taking $\inf_\pi$. In particular,
\begin{equation*}
    \mathcal V_{\rho,\tau}(\vct{\sigma}) \geq \inf_{\pi \in \Pi^{\mathrm{sym}}_{\rho, \tau}(\vct{\sigma})}\limsup_{i\to\infty}\ \sup_{\vct{\lambda}\in\Delta_\rho(\vct{\lambda}^\star)}\Reg\!\bigl(\pi, D_i(\vct{\lambda})\bigr).
\end{equation*}
\end{lemma}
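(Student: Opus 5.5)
The plan is to verify the identity $\mathcal N(\vct\sigma(D_i(\vct\lambda)))=\vct\lambda$ directly from the construction, and then to deduce the inequality by showing that every $D_i(\vct\lambda)$ lies in $\mathcal H_\rho(\vct\sigma)$. The inequality then follows from monotonicity of the supremum under set inclusion.

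\textbf{Step 1: the normalized profile identity.} The first thing to check is that $\sigma_k(\vct\lambda):=\|\vct\sigma\|_2\sqrt{\lambda_k}$ lies in the selector domain $B_\rho(\sigma_k)$. By definition of $B_\rho$ this means
\[
\bigl|\sigma_k(\vct\lambda)^2/\sigma_k^2-1\bigr| = \bigl|\lambda_k/\lambda^\star_k-1\bigr|\le\rho,
\]
where I used $\lambda^\star_k=\sigma_k^2/\|\vct\sigma\|_2^2$. This coordinate bound holds because
\[
\bigl|\lambda_k/\lambda^\star_k-1\bigr|\le \bigl\|\vct\lambda/\vct\lambda^\star-\one\bigr\|_2\le\rho .
\]
Here I restrict to coordinates with $\sigma_k>0$; zero-variance arms are handled by the convention that they stay fixed. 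Hence $D^{(i)}_k(\sigma_k(\vct\lambda))$ is well defined. By the selector property, its variance is $\sigma_k(\vct\lambda)^2=\|\vct\sigma\|_2^2\lambda_k$. Since $\sum_k\lambda_k=1$, we get $\|\vct\sigma(D_i(\vct\lambda))\|_2^2=\|\vct\sigma\|_2^2$, and dividing gives $\mathcal N(\vct\sigma(D_i(\vct\lambda)))=\vct\lambda$.

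\textbf{Step 2: membership in the local class.} Combining Step 1 with $\vct\lambda\in\Delta_\rho(\vct\lambda^\star)$ gives
\[
\bigl\|\mathcal N(\vct\sigma(D_i(\vct\lambda)))/\mathcal N(\vct\sigma)-\one\bigr\|_2=\|\vct\lambda/\vct\lambda^\star-\one\|_2\le\rho .
\]
So $D_i(\vct\lambda)\in\mathcal H_\rho(\vct\sigma)$, and in fact $D_i(\vct\lambda)\in\tilde{\mathcal H}_\rho(\vct\sigma)$, since $R^\star$ is preserved. Consequently, for every $i$ and every policy $\pi$,
\[
\sup_{\vct\lambda\in\Delta_\rho(\vct\lambda^\star)}\Reg(\pi,D_i(\vct\lambda))\le\sup_{\vct D\in\mathcal H_\rho(\vct\sigma)}\Reg(\pi,\vct D).
\]
The right-hand side does not depend on $i$, so taking $\limsup_{i\to\infty}$ of the left-hand side preserves the inequality.

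\textbf{Step 3: passing to the minimax value.} Taking $\inf_\pi$ on both sides over any common policy set preserves the inequality. Specializing the set to $\Pi^{\mathrm{sym}}_{\rho,\tau}(\vct\sigma)$ gives exactly $\mathcal V_{\rho,\tau}(\vct\sigma)$ on the right. If $\Pi^{\mathrm{sym}}_{\rho,\tau}(\vct\sigma)$ is empty, both infima equal $+\infty$ by convention, so the statement holds trivially.

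The only delicate point is well-definedness: the selectors $D^{(i)}_k$ must exist, i.e.\ $\mathfrak S_\rho(\sigma_k)\neq\emptyset$. If $\mathfrak S_\rho(\sigma_k)=\emptyset$ for some arm, then $\mathrm{VLC}_\rho(\sigma_k\mid\mathcal H)=+\infty$, and that arm contributes zero to the bound in Theorem~\ref{thm:main}. In that case I would simply freeze the arm at its original distribution $D_k$. Its variance is then constant in $\vct\lambda$, so the corresponding coordinate of the decision family must be held at $\lambda^\star_k$. This requires restricting $\Delta_\rho(\vct\lambda^\star)$ to the face where such coordinates are fixed, and the monotonicity argument of Steps 2 and 3 applies verbatim to this smaller set.
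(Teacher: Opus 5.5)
Your proposal is correct and follows the paper's proof essentially step for step: you verify $\|\vct\sigma\|_2\sqrt{\lambda_k}\in B_\rho(\sigma_k)$ via the coordinatewise bound $|\lambda_k/\lambda_k^\star-1|\le\|\vct\lambda/\vct\lambda^\star-\one\|_2\le\rho$, use the selector variance identity and $\sum_k\lambda_k=1$ to get $\mathcal N(\vct\sigma(D_i(\vct\lambda)))=\vct\lambda$, deduce $D_i(\vct\lambda)\in\mathcal H_\rho(\vct\sigma)$, and conclude by monotonicity of the supremum, then $\limsup_i$, then $\inf_\pi$ over $\Pi^{\mathrm{sym}}_{\rho,\tau}(\vct\sigma)$. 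Your closing paragraph on freezing arms with $\mathfrak S_\rho(\sigma_k)=\emptyset$ is not needed, since the lemma presupposes that the selectors $D^{(i)}_k$ exist; it also would only give the identity on the restricted face, not for all $\vct\lambda\in\Delta_\rho(\vct\lambda^\star)$ as stated.
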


Lemma~\ref{lem:identification} shows that the search for a worst-case instance reduces to a search over a local
set of decisions $\Delta_\rho(\vct{\lambda}^\star)$, with statistical indistinguishability controlled armwise by
$\mathrm{VLC}_\rho(\cdot\mid\mathcal H)$ through the selectors \eqref{eq:optimalrepresentation-app}.

\subsection{Decision Mismatch}\label{app:mismatch}

Next, we rewrite the normalized regret into a form that highlights the discrepancy between a policy's terminal
decision and the target decision induced by the instance. Fix $i\ge 1$ and a decision
$\vct{\lambda}\in\Delta_\rho(\vct{\lambda}^\star)$. By construction, the decision-indexed instance
$D_i(\vct{\lambda})$ has decision-relevant profile $\mathcal N(\vct{\sigma}(D_i(\vct{\lambda})))=\vct{\lambda}$.
When a policy $\pi$ is run on this instance, it produces terminal counts $\vct n=(n_1,\dots,n_d)$ and hence a
random terminal decision
\[
\bar{\vct{\lambda}}\;:=\;\frac{\vct n}{T}\in\Delta_d.
\]
The regret can therefore be viewed as the expected cost of implementing $\bar{\vct{\lambda}}$ when the target
decision is $\vct{\lambda}$.

We formalize this by introducing the \emph{decision mismatch} function $r(\cdot\|\cdot)$, which quantifies the
penalty for acting according to one decision when another is the target. In our setting,
\begin{equation}\label{eq:def-r-app}
r(\bar{\vct{\lambda}}\|\vct{\lambda})
\;:=\;
\left\|\frac{\vct{\lambda}}{\bar{\vct{\lambda}}}\right\|_\infty - 1
\;=\;
\max_{k\in[d]}\frac{\lambda_k}{\bar\lambda_k}-1,
\end{equation}
with the convention $r(\bar{\vct{\lambda}}\|\vct{\lambda})=+\infty$ if $\bar\lambda_k=0$ for some $k$.
A direct calculation from the definition of normalized regret shows that, for each $i$ and $\vct{\lambda}$,
\begin{equation}\label{eq:regret-as-mismatch}
\Reg\!\bigl(\pi, D_i(\vct{\lambda})\bigr)
\;=\;
\E\!\left[r(\bar{\vct{\lambda}}\|\vct{\lambda})\right],
\end{equation}
where the expectation is over the randomness of the policy and the samples through the random terminal decision
$\bar{\vct{\lambda}}$.

The advantage of \eqref{eq:regret-as-mismatch} is that it decouples analytic and statistical components. The
non-additive structure of the objective is entirely encapsulated by the deterministic function $r(\cdot\|\cdot)$,
while the behavior of any policy $\pi$ is reduced to the distributional properties of a single random vector, the
terminal decision $\bar{\vct{\lambda}}$. This highlights a fundamental departure from classic bandit frameworks:
in those settings, regret is typically a cumulative sum of separable per-round losses, whereas here the objective
is evaluated holistically on the terminal allocation. Consequently, the relevant geometry is induced by $r$ on the
decision set, rather than by additive per-arm contributions.

Finally, while the notation $r(\cdot\|\cdot)$ is suggestive of a divergence, it is simply a problem-specific cost
function derived from the regret definition. It satisfies the basic separation property
$r(\vct{\lambda}\|\vct{\lambda})=0 \implies \vct{\lambda} = \bar{\vct{\lambda}}$ and $r(\bar{\vct{\lambda}}\|\vct{\lambda})>0$ for $\bar{\vct{\lambda}}\neq\vct{\lambda}$,
and increases as $\bar{\vct{\lambda}}$ deviates from the target $\vct{\lambda}$.

\subsection{Structured Adversarial Construction}\label{app:construction}

The analysis now requires simplifying the supremum over the local decision set $\Delta_\rho(\vct{\lambda}^\star)$
to make the lower bound tractable. The core strategy is to replace this continuous set with a finite, structured
collection of hardest-to-distinguish alternatives. This reduction is principled: shrinking the set of possible
instances into a finite set makes the problem easier for the policy (in fact, it makes the problem as easy as classification on a specific loss function), hence any lower bound derived on the restricted family remains
valid for the original problem. The resulting task is a discrete multi-class testing problem.

\paragraph{A finite ``sign-cube'' of directions.}
The central step is to replace the continuum of deviation directions by a finite set that preserves the essential
combinatorial difficulty. We restrict to a finite design
\[
\mathcal X \subset \{-1,+1\}^d \cap \one^\perp,
\]
so that instead of estimating a continuous decision, the policy must identify which label $x\in\mathcal X$ was
chosen by the adversary. 

\paragraph{Embedding the code into the local decision set.}
With the combinatorial structure fixed by $\mathcal X$, we embed these discrete directions into the local geometry
around the base decision $\vct{\lambda}^\star\in\Delta_d$. A raw direction $x\in\mathcal X$ must be mapped to a
valid nearby decision. We do so by choosing a linear map $\vct A\in\R^{d\times d}$ and defining
\begin{equation}\label{eq:lambda-param-app}
\vct{\lambda}_x
\;:=\;
\vct{\lambda}^\star\odot\Bigl(\one+\frac{\rho}{\sqrt d}\,\vct{A}x\Bigr),
\qquad x\in\mathcal X,
\end{equation}
where $\odot$ denotes the Hadamard product. This parameterization cleanly separates (i) the discrete directions
$x\in\mathcal X$ from (ii) the orientation/anisotropy encoded by $\vct A$.

For \eqref{eq:lambda-param-app} to be valid, we must ensure
$\vct{\lambda}_x\in\Delta_\rho(\vct{\lambda}^\star)$. This is equivalent to
\begin{equation}\label{eq:scaling_validity-app}
\vct{A}\mathcal{X} \perp (\vct{\lambda}^\star), \text{ }\text{and}\quad \Bigl\|\vct A \mathcal{X}\Bigr\|_\infty \le \sqrt d.
\end{equation}
 
\begin{lemma}[Exact feasibility conditions for the local simplex]\label{lem:feasibility_app}
Fix $\vct\lambda^\star\in\Delta_d$ and $\rho\in(0,1)$. For each $x\in\mathcal X$, define
\[
\vct\lambda_x \;:=\; \vct\lambda^\star\odot\Bigl(\one+\frac{\rho}{\sqrt d}\,\vct A x\Bigr).
\]
Then the following are equivalent:
\begin{enumerate}[label=\textup{(\roman*)}, leftmargin=*, itemsep=0.15em]
\item $\vct\lambda_x\in\Delta_\rho(\vct\lambda^\star)$, i.e.
$\bigl\|\vct\lambda_x/\vct\lambda^\star-\one\bigr\|_2\le \rho$ and $\vct\lambda_x\in\Delta_d$.
\item The vector $\vct u_x:=\vct A x$ satisfies the two constraints
\[
\langle \vct\lambda^\star,\vct u_x\rangle=0
\qquad\text{and}\qquad
\|\vct u_x\|_2\le \sqrt d .
\]
\end{enumerate}
In particular, if $\vct A\mathcal X\subset (\vct\lambda^\star)^\perp$ and $\sup_{x\in\mathcal X}\|\vct A x\|_2\le \sqrt d$,
then $\{\vct\lambda_x\}_{x\in\mathcal X}\subset \Delta_\rho(\vct\lambda^\star)$.
\end{lemma}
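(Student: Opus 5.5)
The plan is a direct coordinatewise computation. Nothing beyond the definitions of $\Delta_d$ and $\Delta_\rho(\vct\lambda^\star)$ is needed. The key identity is that, wherever $\lambda^\star_k>0$,
\[
\frac{\vct\lambda_x}{\vct\lambda^\star}-\one \;=\; \frac{\rho}{\sqrt d}\,\vct u_x,
\qquad \vct u_x=\vct A x .
\]
So the ratio constraint defining $\Delta_\rho(\vct\lambda^\star)$ becomes a constraint on $\vct u_x$ alone. Membership in $\Delta_d$ splits into an affine sum-to-one condition and a nonnegativity condition. I will show that the first gives the orthogonality condition and the second is automatic.

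\textbf{Step 1 (ratio constraint).} From the identity, $\|\vct\lambda_x/\vct\lambda^\star-\one\|_2=\frac{\rho}{\sqrt d}\|\vct u_x\|_2$. Since $\rho>0$, this is $\le\rho$ if and only if $\|\vct u_x\|_2\le\sqrt d$.

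\textbf{Step 2 (sum to one).} Summing coordinates gives
\[
\sum_k(\vct\lambda_x)_k=\sum_k\lambda^\star_k+\frac{\rho}{\sqrt d}\langle\vct\lambda^\star,\vct u_x\rangle
=1+\frac{\rho}{\sqrt d}\langle\vct\lambda^\star,\vct u_x\rangle,
\]
because $\vct\lambda^\star\in\Delta_d$. Hence $\sum_k(\vct\lambda_x)_k=1$ if and only if $\langle\vct\lambda^\star,\vct u_x\rangle=0$.

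\textbf{Step 3 (nonnegativity).} This is the only step needing an argument, since it is the one simplex condition with no counterpart in (ii). I will show it follows from the norm bound. Given $\|\vct u_x\|_2\le\sqrt d$, each coordinate satisfies $|(\vct u_x)_k|\le\|\vct u_x\|_2\le\sqrt d$. Therefore
\[
1+\frac{\rho}{\sqrt d}(\vct u_x)_k\;\ge\;1-\rho\;>\;0,
\]
using $\rho<1$. Multiplying by $\lambda^\star_k\ge0$ gives $(\vct\lambda_x)_k\ge0$.

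\textbf{Assembling the equivalence.} For (ii)$\Rightarrow$(i), Steps 1–3 give, respectively, the ratio bound, the unit sum, and nonnegativity, so $\vct\lambda_x\in\Delta_\rho(\vct\lambda^\star)$. For (i)$\Rightarrow$(ii), only Steps 1 and 2 are needed, read in the reverse direction.

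\textbf{Boundary coordinates.} The main subtlety is bookkeeping when some $\lambda^\star_k=0$. Then $(\vct\lambda_x)_k=0$ regardless of $\vct u_x$, and the ratio $\lambda_{x,k}/\lambda^\star_k$ must be read with the same convention used to define $\Delta_\rho(\vct\lambda^\star)$. I will either restrict to $\operatorname{supp}(\vct\lambda^\star)$, consistent with the effective dimension $\|\vct\sigma\|_0$ in Theorem~\ref{thm:main}, or note that the problem is only well posed when $\vct\lambda^\star>0$ coordinatewise, so that the identity above holds on every coordinate.

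\textbf{Final claim.} The ``in particular'' statement follows by applying the equivalence to each $x\in\mathcal X$ separately.
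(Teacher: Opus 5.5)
Your proposal is correct and follows the paper's proof essentially step for step. You use the ratio identity to turn the neighborhood bound into $\|\vct u_x\|_2\le\sqrt d$, sum coordinates to get $\langle\vct\lambda^\star,\vct u_x\rangle=0$, and use $1+\frac{\rho}{\sqrt d}(\vct u_x)_k\ge 1-\rho$ for nonnegativity. On boundary coordinates, the paper takes your second option and simply assumes $\vct\lambda^\star>0$ coordinatewise. Your first option, restricting to the support of $\vct\lambda^\star$, would only bound the support coordinates of $\vct u_x$, so (ii) would have to be restated on the support for the equivalence to remain exact.
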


\paragraph{From decisions to hard instances.}
Using the decision-indexed generator from Section~\ref{app:framework}, each candidate decision $\vct{\lambda}_x$
induces a hard instance
\[
\vct D_i(x)\;:=\;D_i(\vct{\lambda}_x)\in\mathcal H^d,
\qquad x\in\mathcal X.
\]
Thus, the adversary randomizes over the finite family $\{\vct D_i(x):x\in\mathcal X\}$.

\paragraph{Separation: regret reduces to decision mismatch on the constructed family.}
The key point is that, on this family, normalized regret is exactly the expected mismatch between the random
terminal decision $\bar{\vct{\lambda}}=\vct n/T$ and the target decision $\vct{\lambda}_x$.

\begin{lemma}[Separation on the constructed family]\label{lem:separation_app}
For every policy $\pi$, every $i\ge 1$, and every $x\in\mathcal X$,
\[
\Reg(\pi,\vct D_i(x))
\;=\;
\E_{\bar{\vct \lambda}\sim(\pi,\vct D_i(x))}\!\left[
r\!\left(\bar{\vct \lambda}\ \Big\|\ \vct{\lambda}_x\right)\right],
\]
where $r(\cdot\|\cdot)$ is the decision mismatch defined in \eqref{eq:def-r-app}.
\end{lemma}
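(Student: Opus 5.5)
The statement to prove is Lemma~\ref{lem:separation_app}. The plan is to reduce it to the general identity \eqref{eq:regret-as-mismatch} and verify that identity directly from the definitions. First, by Lemma~\ref{lem:feasibility_app} (under the standing feasibility conditions on $\vct A$), each $\vct\lambda_x$ lies in $\Delta_\rho(\vct\lambda^\star)$. So $\vct D_i(x)=D_i(\vct\lambda_x)$ is a well-defined decision-indexed instance. By Lemma~\ref{lem:identification} it satisfies $\mathcal N(\vct\sigma(\vct D_i(x)))=\vct\lambda_x$. Moreover, by construction $\vct\sigma(\vct\lambda_x)=\|\vct\sigma\|_2\sqrt{\vct\lambda_x}$, and the selectors satisfy $\Var(D_k^{(i)}(v))=v^2$. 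Hence the standard-deviation vector of $\vct D_i(x)$ is exactly $\|\vct\sigma\|_2\sqrt{\vct\lambda_x}$, whose squared norm is $\|\vct\sigma\|_2^2\sum_k\lambda_{x,k}=\|\vct\sigma\|_2^2$.

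Next I compute the integrand of \eqref{def:regret} pathwise. Write $\vct s:=\vct\sigma(\vct D_i(x))$. For any realized counts $\vct n$ with all $n_k>0$ and $\bar{\vct\lambda}=\vct n/T$, the ratio is
\[
\frac{R(\vct n;\vct s)}{R^\star(\vct s)}=\frac{\max_k s_k^2/n_k}{\|\vct s\|_2^2/T}=\max_k\frac{\|\vct\sigma\|_2^2\lambda_{x,k}}{T\bar\lambda_k}\cdot\frac{T}{\|\vct\sigma\|_2^2}=\max_k\frac{\lambda_{x,k}}{\bar\lambda_k}.
\]
Subtracting $1$ gives exactly $r(\bar{\vct\lambda}\|\vct\lambda_x)$ as in \eqref{eq:def-r-app}.

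If some $n_k=0$, I use $\lambda_{x,k}>0$. This holds because $\lambda^\star_k>0$ on the arms considered and $|\tfrac{\rho}{\sqrt d}(\vct Ax)_k|\le\rho<1$. Then both sides equal $+\infty$ under the convention $\sigma^2/0=+\infty$, matching the convention for $r$. Arms with $\sigma_k=0$ would need separate handling, for instance by restricting to the support $\|\vct\sigma\|_0$. Taking expectations over the law of $\vct n$ induced by $(\pi,\vct D_i(x))$ then yields the claim, since $\bar{\vct\lambda}$ is a deterministic function of $\vct n$.

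There is no real obstacle in this argument. The only point needing care is that the variance of each selected distribution is exactly $v^2$. This is what makes the benchmark $R^\star$ of the perturbed instance equal to $\|\vct\sigma\|_2^2/T$, so that the normalization cancels. That exactness is guaranteed by the definition of $\mathfrak S_\rho$, together with $\sigma(\vct\lambda_x)_k\in B_\rho(\sigma_k)$. The latter follows from $|\lambda_{x,k}/\lambda^\star_k-1|\le\rho$, which I would check explicitly as part of the first step.
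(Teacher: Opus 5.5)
Your proposal is correct and matches the paper's proof: both use $\mathcal N(\vct\sigma(\vct D_i(x)))=\vct\lambda_x$ to write $s_k^2=\|\vct s\|_2^2\lambda_{x,k}$, cancel this factor against $R^\star(\vct s)=\|\vct s\|_2^2/T$ pathwise to get $\max_k\lambda_{x,k}/\bar\lambda_k-1$, and then take expectations, and your handling of the $n_k=0$ case via $\lambda_{x,k}>0$ is slightly more explicit than the paper's convention. One small correction: the cancellation works for any common scale, so you do not need $\|\vct s\|_2=\|\vct\sigma\|_2$; the only essential input is the profile identity $\mathcal N(\vct s)=\vct\lambda_x$.
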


Together, Lemmas~\ref{lem:gv_design_tuning}--\ref{lem:separation_app} reduce the lower bound problem to analyzing a
finite multi-class testing problem over $x\in\mathcal X$, with statistical indistinguishability controlled by
$\mathrm{VLC}_\rho$ through the instance generator $\vct{A}$ and decision separation controlled by the loss-induced geometry $r$.

\subsection{Reduction to a classification problem}\label{app:reduction}

A direct lower-bound analysis of the expected mismatch cost
$\E\big[r(\bar{\vct{\lambda}}\|\vct{\lambda})\big]$ is difficult, since it requires characterizing the distribution
of the policy's random terminal decision $\bar{\vct{\lambda}}=\vct n/T$ for an arbitrary adaptive policy. To bridge
this continuous--discrete gap, we follow a standard information-theoretic template: we discretize the local decision
set by a finite packing and reduce the problem to a multi-class testing task. The novelty in our setting is that,
because the loss is non-additive and non-Euclidean, we must first identify a loss-compatible geometry that turns
misclassification into a quantitative regret penalty.

\subsubsection*{The minimum-mismatch decoder}

Fix a finite set of candidate decisions $\{\vct{\lambda}_x\}_{x\in\mathcal X}\subset\Delta_\rho(\vct\lambda^\star)$ (as constructed in
Section~\ref{app:construction}). Given an output $\bar{\vct{\lambda}}\in\Delta_\rho(\vct\lambda^\star)$, we decode a label by nearest-neighbor
in the mismatch loss $r$:
\begin{definition}[Minimum-mismatch decoder]\label{def:decoder-app}
Given $\bar{\vct{\lambda}}\in\Delta_\rho(\vct\lambda^\star)$, define
\begin{equation}\label{eq:decoder_rule_app}
\bar{x}\ \in\ \operatorname*{argmin}_{x'\in\mathcal X}\ r\!\left(\bar{\vct{\lambda}}\,\big\|\,\vct{\lambda}_{x'}\right).
\end{equation}
\end{definition}
This decoder is canonical for our objective: it projects the continuous terminal decision onto the discrete packing
using exactly the loss that defines regret.

\subsubsection*{A loss-induced geometry on decisions}

To lower bound regret by a term that depends only on the discrete event $\{\bar x\neq x\}$, we need a function that
quantifies the intrinsic cost of confusing two labels. We define this cost through a minimax ambiguity principle.

\begin{definition}[Symmetric ambiguity cost]\label{def:symm_dissimilarity_app}
For $\vct{\lambda}_a,\vct{\lambda}_b\in\Delta_d$, define
\[
\tilde r(\vct{\lambda}_a,\vct{\lambda}_b)
\;:=\;
\inf_{\vct{\lambda}\in\Delta_d}\ \max\!\left\{
r(\vct{\lambda}\|\vct{\lambda}_a),\ r(\vct{\lambda}\|\vct{\lambda}_b)
\right\}.
\]
\end{definition}
By construction, $\tilde r(\vct{\lambda}_a,\vct{\lambda}_b)$ is the smallest worst-case mismatch incurred by a single
decision when the target is either $\vct{\lambda}_a$ or $\vct{\lambda}_b$. This is the loss-induced notion of
separation we use throughout the lower bound.

\begin{lemma}[Projection reduces regret]\label{lem:projection-reduces-regret-app}
Fix $\{\vct{\lambda}_x\}_{x\in\mathcal X}\subset\Delta_\rho(\vct\lambda^\star)$ and let $x\in\mathcal X$ be the true label. For any policy
and the decoder $\bar x$ from Definition~\ref{def:decoder-app},
\[
\E\!\left[r\!\left(\bar{\vct{\lambda}}\,\big\|\,\vct{\lambda}_x\right)\right]
\;\ge\;
\E\!\left[\tilde r\!\left(\vct{\lambda}_x,\vct{\lambda}_{\bar x}\right)\right]
\;\ge\;
\Pr(\bar x\neq x)\cdot \min_{b\neq x}\tilde r(\vct{\lambda}_x,\vct{\lambda}_b).
\]
\end{lemma}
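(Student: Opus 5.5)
The statement to prove is Lemma~\ref{lem:projection-reduces-regret-app}. The plan is to prove it pointwise: fix a realization of the terminal decision $\bar{\vct\lambda}$, establish the deterministic inequality
\[
r(\bar{\vct\lambda}\|\vct\lambda_x)\ \ge\ \tilde r(\vct\lambda_x,\vct\lambda_{\bar x}),
\]
and then take expectations. The second inequality will follow from splitting on the event $\{\bar x\neq x\}$.

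\textbf{Step 1 (pointwise bound via the decoder).} Let $\bar x$ be the minimum-mismatch label given by \eqref{eq:decoder_rule_app}. Since $x\in\mathcal X$ is itself a candidate in the argmin, optimality of $\bar x$ gives
\[
r(\bar{\vct\lambda}\|\vct\lambda_{\bar x})\le r(\bar{\vct\lambda}\|\vct\lambda_x).
\]
Hence
\[
\max\{r(\bar{\vct\lambda}\|\vct\lambda_x),\,r(\bar{\vct\lambda}\|\vct\lambda_{\bar x})\}=r(\bar{\vct\lambda}\|\vct\lambda_x).
\]
In Definition~\ref{def:symm_dissimilarity_app}, $\tilde r(\vct\lambda_x,\vct\lambda_{\bar x})$ is an infimum over $\vct\lambda\in\Delta_d$. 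Using the particular choice $\vct\lambda=\bar{\vct\lambda}\in\Delta_d$ bounds that infimum above by this maximum, which proves the pointwise claim.

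There is a degenerate case where some $\bar\lambda_k=0$, so that $r(\bar{\vct\lambda}\|\vct\lambda_x)=+\infty$. There the inequality holds trivially, provided the decoder is still defined; any tie-breaking or measurable choice of argmin works. I would note that a measurable selection of $\bar x$ exists because $\mathcal X$ is finite.

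\textbf{Step 2 (expectation and split).} Take expectations of the pointwise bound under $(\pi,\vct D_i(x))$ to get the first inequality. For the second inequality, note that $\tilde r\ge 0$, because $r(\vct\lambda\|\vct\mu)\ge 0$ whenever both arguments lie in the simplex: $\max_k \mu_k/\lambda_k\ge 1$ because $\sum_k\mu_k=\sum_k\lambda_k$. Therefore
\[
\tilde r(\vct\lambda_x,\vct\lambda_{\bar x})\ \ge\ \mathbf 1\{\bar x\neq x\}\,\tilde r(\vct\lambda_x,\vct\lambda_{\bar x})\ \ge\ \mathbf 1\{\bar x\neq x\}\min_{b\neq x}\tilde r(\vct\lambda_x,\vct\lambda_b).
\]
Taking expectations gives $\Pr(\bar x\neq x)\cdot\min_{b\neq x}\tilde r(\vct\lambda_x,\vct\lambda_b)$.

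\textbf{Main obstacle.} The argument is essentially bookkeeping; the only delicate point is the domain of the decoder. Definition~\ref{def:decoder-app} is stated for $\bar{\vct\lambda}\in\Delta_\rho(\vct\lambda^\star)$, whereas the policy's output $\vct n/T$ may fall outside that set. I would extend the decoder to all of $\Delta_d$ using the same argmin rule. Nothing in Step~1 used locality of $\bar{\vct\lambda}$, only $\bar{\vct\lambda}\in\Delta_d$, so this extension costs nothing.
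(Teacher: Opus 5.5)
Your proof is correct and follows the paper's argument. Like the paper, you get the pointwise bound $r(\bar{\vct\lambda}\|\vct\lambda_x)\ge\tilde r(\vct\lambda_x,\vct\lambda_{\bar x})$ by plugging $\vct\lambda=\bar{\vct\lambda}$ into the infimum and using decoder optimality, then split on $\{\bar x\neq x\}$. Your explicit check that $\tilde r\ge 0$ is what actually licenses dropping the $\{\bar x=x\}$ term; the paper only notes that $\tilde r(\vct\lambda_x,\vct\lambda_x)=0$ by taking $\vct\lambda=\vct\lambda_x$, which by itself gives only an upper bound on that infimum, so your version is slightly tighter.
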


The first inequality converts regret into an ambiguity penalty, and the second isolates the misclassification
probability. Thus, the lower bound reduces to controlling two factors: a statistical classification error and a
geometric separation term.

\subsubsection*{A tractable $\ell_1$ separation}

A key simplification in our setting is that the geometry induced by $\tilde r$ is equivalent to the
$\ell_1$ geometry on $\Delta_d$.

\begin{lemma}[$\ell_1$ lower bound for $\tilde r$]\label{lem:rtilde_l1_app}
For all $\vct{\lambda}_a,\vct{\lambda}_b\in\Delta_d$,
\[
\tilde r(\vct{\lambda}_a,\vct{\lambda}_b)\ = \frac12\|\vct{\lambda}_a-\vct{\lambda}_b\|_1.
\]
\end{lemma}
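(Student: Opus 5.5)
The plan is to compute $\tilde r$ in closed form by solving the inner minimization over $\vct\lambda\in\Delta_d$ explicitly. By the definition \eqref{eq:def-r-app} of the mismatch, for any $\vct\lambda$ with full support,
\[
\max\bigl\{r(\vct\lambda\|\vct\lambda_a),\,r(\vct\lambda\|\vct\lambda_b)\bigr\}
=\max_{k\in[d]}\frac{m_k}{\lambda_k}-1,
\qquad m_k:=\max\{\lambda_{a,k},\lambda_{b,k}\}.
\]
If some $\lambda_k=0$, the left-hand side is $+\infty$ by convention. So $\tilde r(\vct\lambda_a,\vct\lambda_b)=\inf_{\vct\lambda\in\Delta_d,\ \vct\lambda>0}\max_k m_k/\lambda_k-1$. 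This is a simple minimax problem over the simplex.

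\textbf{Lower bound.} For any $\vct\lambda>0$ in $\Delta_d$, we have $m_k\le \bigl(\max_j m_j/\lambda_j\bigr)\lambda_k$ for every $k$. Summing over $k$ and using $\sum_k\lambda_k=1$ gives $\max_k m_k/\lambda_k\ge \|\vct m\|_1$. Hence $\tilde r\ge \|\vct m\|_1-1$.

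\textbf{Upper bound.} If every $m_k>0$, take $\vct\lambda=\vct m/\|\vct m\|_1\in\Delta_d$. Then every ratio $m_k/\lambda_k$ equals $\|\vct m\|_1$, and the bound is attained. If some $m_k=0$, the point $\vct m/\|\vct m\|_1$ is not admissible, since it has a zero coordinate. In that case I would use $\vct\lambda_\varepsilon=(1-\varepsilon)\vct m/\|\vct m\|_1+\varepsilon\one/d$, which has full support. It gives $\max_k m_k/\lambda_{\varepsilon,k}\le \|\vct m\|_1/(1-\varepsilon)$, which tends to $\|\vct m\|_1$ as $\varepsilon\downarrow 0$. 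So the infimum equals $\|\vct m\|_1-1$ in all cases. This boundary case is the only delicate point, and it is why the statement is an infimum rather than a minimum.

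\textbf{Conclusion.} It remains to use the identity $\max\{s,t\}=\tfrac{s+t}{2}+\tfrac{|s-t|}{2}$ coordinatewise. Together with $\sum_k\lambda_{a,k}=\sum_k\lambda_{b,k}=1$, it gives
\[
\|\vct m\|_1=\tfrac12(1+1)+\tfrac12\|\vct\lambda_a-\vct\lambda_b\|_1 .
\]
Therefore $\tilde r(\vct\lambda_a,\vct\lambda_b)=\tfrac12\|\vct\lambda_a-\vct\lambda_b\|_1$. No earlier lemma is needed beyond the definitions of $r$ and $\tilde r$. I do not expect a real obstacle: the only care needed is the zero-coordinate case, which the approximation argument handles.
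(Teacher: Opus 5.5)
Your proof is correct and follows essentially the same route as the paper: both reduce $\tilde r(\vct\lambda_a,\vct\lambda_b)$ to $\sum_k \max(\lambda_{a,k},\lambda_{b,k})-1$, then apply $\max(s,t)=\tfrac{s+t+|s-t|}{2}$ together with the unit-sum constraints. The paper computes the infimum by characterizing the strict sublevel sets $\{\tilde r<t\}$, using the fact that $\Delta_d$ meets the open orthant above $\vct u$ iff $\sum_k u_k<1$, whereas you give the summation lower bound and the explicit near-minimizer $\vct m/\|\vct m\|_1$ directly; a small advantage of your version is that it handles coordinates with $m_k=0$ explicitly, while the paper's orthant fact is stated only for $\vct u$ with positive entries.
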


In particular, if the finite family $\{\vct{\lambda}_x\}$ is well separated in $\ell_1$, then any misclassification
event $\{\bar x\neq x\}$ forces a nontrivial regret penalty. In the next subsections, we bound the classification
error via Fano (using structured KL control from $\mathrm{VLC}_\rho$) and bound the decision separation via the
explicit parameterization of $\vct{\lambda}_x$.

\subsection{Bounding the classification error}\label{app:classification_error}

The reduction in Lemma~\ref{lem:projection-reduces-regret-app} shows that the regret lower bound factors through the
misclassification probability $\Pr(\bar x\neq x)$. This is a multi-class testing problem: the adversary draws
a label $x\sim\mathcal U(\mathcal X)$, the policy observes the bandit transcript $\mathcal F_T$, and outputs a
decoded label $\bar x=\bar x(\mathcal F_T)$. The ability of any policy to succeed is limited by the amount of
information the transcript carries about $x$. A standard information-theoretic tool is Fano's inequality, which
lower bounds $\Pr(\bar x\neq x)$ when the mutual information $I(x;\mathcal F_T)$ is small.

For bandit models, $I(x;\mathcal F_T)$ can be upper bounded by average pairwise transcript KL, and transcript KL
decomposes into a sum over arms of expected pulls times arm-level KL. Our local construction provides control of
both terms. First, by construction of the VLC selectors (Step~2 in Section~\ref{app:framework}), the arm-level KL
between two labels is proportional to the squared decision perturbation, with proportionality constant
$\mathrm{VLC}_\rho(\sigma_{k}\mid\mathcal H)$ (up to a vanishing $o(1)$ term along $i\to\infty$). Second, for
policies in the local minimax class, the tolerance condition forces sampling proportions to remain in the local
regime, yielding the pull-count control $\E[n_k]\approx T\,\lambda_{k}^\star$ (up to a vanishing error as
$(\rho,\tau)\to(0,0)$). Combining these two controls yields a geometric upper bound on mutual information in terms
of a Frobenius norm, stated next.

\begin{lemma}[Fano bound with structured KL]\label{lem:fano_bound_app}
Let $\mathcal X\subset\{\pm1\}^d\cap\one^\perp$ be nonempty and let $x\sim\mathcal U(\mathcal X)$.
Let $\{\vct\lambda_x\}_{x\in\mathcal X}$ be defined by \eqref{eq:lambda-param-app} for some $\vct A\in\R^{d\times d}$,
and set
\[
\vct{\Sigma}:=\Cov\big(\mathcal U(\mathcal X)\big),
\qquad
\vct\Gamma:=\mathrm{diag}\!\Big(\lambda_{k}^\star\,\mathrm{VLC}_{\rho}(\sigma_{k}\mid\mathcal H)\Big)_{k=1}^d,
\]
where $\vct{\lambda}^\star=\mathcal N(\vct{\sigma})$ is the base decision. There exists a function
$\varepsilon(\rho,\tau)$ with $\varepsilon(\rho,\tau)\to 0$ as $(\rho,\tau)\to(0,0)$ such that for any policy $\pi\in\Pi^{\mathrm{sym}}_{\rho,\tau}(\vct{\sigma})$,
\[
\limsup_{i\to\infty}\ \Pr(\bar x\neq x)
\ \ge\
1-\frac{\frac{2T\rho^2}{d}\,\big\|\sqrt{\vct{\Gamma}}\,\vct{A}\,\sqrt{\vct{\Sigma}}\big\|_F^2\,
\bigl(1+\varepsilon(\rho,\tau)\bigr)+\log 2}{\log|\mathcal X|}.
\]
In particular, the error term does not depend on the choice of the policy $\pi \in \Pi^{\mathrm{sym}}_{\rho,\tau}(\vct{\sigma})$.
\end{lemma}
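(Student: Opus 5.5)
The plan is the standard Fano route. Fano gives $\Pr(\bar x\neq x)\ge 1-\frac{I(x;\mathcal F_T)+\log 2}{\log|\mathcal X|}$. We bound the mutual information by the average pairwise transcript KL, and then apply the bandit divergence decomposition to that KL. Since $I(x;\mathcal F_T)\le \E_{x,x'}\KL(\mathbb P_x\|\mathbb P_{x'})$ for $x,x'$ i.i.d. uniform on $\mathcal X$, and since for an adaptive policy $\KL(\mathbb P_x\|\mathbb P_{x'})=\sum_k \E_x[n_k]\,\KL(D^{(i)}_k(v_{x,k})\|D^{(i)}_k(v_{x',k}))$, the problem splits into two parts. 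One is a per-arm KL bound; the other is a pull-count bound.

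\emph{Per-arm KL.} For $\vct\lambda_x$, the canonical representative has $v_{x,k}^2=\|\vct\sigma\|_2^2\lambda_{x,k}=\sigma_k^2(1+\frac{\rho}{\sqrt d}(\vct Ax)_k)$. Hence
\[
\frac{v_{x,k}^2-v_{x',k}^2}{\sigma_k^2}=\frac{\rho}{\sqrt d}\bigl(\vct A(x-x')\bigr)_k ,
\]
and $v_{x,k}\in B_\rho(\sigma_k)$ by Lemma~\ref{lem:feasibility_app}. By \eqref{eq:optimalrepresentation-app}, the arm-$k$ KL is at most
\[
\bigl(\mathrm{VLC}_\rho(\sigma_k\mid\mathcal H)+o_i(1)\bigr)\frac{\rho^2}{d}\bigl(\vct A(x-x')\bigr)_k^2 .
\]
The $o_i(1)$ term disappears under $\limsup_i$ (equivalently, we take the liminf of the KL upper bound, which gives the limsup of the error lower bound).

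\emph{Pull counts.} We want $\E_x[n_k]\le T\lambda_k^\star(1+\varepsilon(\rho,\tau))$ uniformly over $\pi\in\Pi^{\mathrm{sym}}_{\rho,\tau}(\vct\sigma)$ and $x$. The tolerance gives $\E_x[r(\bar{\vct\lambda}\|\vct\lambda_x)]\le\tau$. Since $r\ge \lambda_{x,j}/\bar\lambda_j-1$ for every $j$, we have $\bar\lambda_j\ge \lambda_{x,j}/(1+r)$. Using $\sum_j\bar\lambda_j=1$, this yields $\bar\lambda_k\le 1-\frac{1-\lambda_{x,k}}{1+r}$, that is, $\bar\lambda_k-\lambda_{x,k}\le \frac{r}{1+r}(1-\lambda_{x,k})\le r$. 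Taking expectations gives $\E_x[n_k]/T\le\lambda_{x,k}+\tau$. Finally, $\lambda_{x,k}\le\lambda_k^\star(1+\rho)$ by the $\ell_2$ constraint.

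\emph{Main obstacle.} The pull-count bound is the delicate step. The additive error $\tau$ must become multiplicative relative to $\lambda_k^\star$. This is harmless when $\lambda^\star_{\min}$ is bounded below. Otherwise we absorb it into $\varepsilon(\rho,\tau)$ by taking $\varepsilon=\rho+\tau/\lambda^\star_{\min}$ (the theorem allows $\varepsilon$ to depend on the fixed base $\vct\sigma$). Alternatively, we use the sharper relative bound $\bar\lambda_k\le\lambda_{x,k}+\frac{r}{1+r}(1-\lambda_{x,k})$ together with $\E[r]\le\tau$.

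\emph{Averaging.} Combining the two bounds,
\[
I\le \frac{T\rho^2}{d}(1+\varepsilon)\sum_k \lambda^\star_k\mathrm{VLC}_\rho(\sigma_k\mid\mathcal H)\,\E_{x,x'}\bigl(\vct A(x-x')\bigr)_k^2 .
\]
For i.i.d. uniform $x,x'$ we have $\E(x-x')(x-x')^\top=2\vct\Sigma$. The sum therefore equals $2\Tr(\vct\Gamma\vct A\vct\Sigma\vct A^\top)=2\|\sqrt{\vct\Gamma}\vct A\sqrt{\vct\Sigma}\|_F^2$. Substituting this into Fano gives the claimed bound. It is uniform in $\pi$ because the bound on $\E_x[n_k]$ is.
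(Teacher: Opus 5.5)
Your proposal is correct and matches the paper's proof step for step: Fano, the pairwise-KL bound on mutual information, the bandit divergence decomposition, the per-arm bound from the $\mathrm{VLC}_\rho$ selectors, tolerance-based pull-count control with $\varepsilon(\rho,\tau)=\rho+\tau/\lambda^\star_{\min}$, and $\E[(x-x')(x-x')^\top]=2\vct\Sigma$. The only cosmetic difference is in the pull-count step: you take expectations of the pointwise inequality $\bar\lambda_k-\lambda_{x,k}\le r$, whereas the paper uses Jensen (via convexity of $r(\cdot\,\|\,\vct\lambda_x)$) to get $\E[\bar\lambda_k]\ge\lambda_{x,k}/(1+\tau)$ and then $\sum_k\E[\bar\lambda_k]=1$; both give $\E[\bar\lambda_k]\le\lambda_{x,k}+\tau$.
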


Lemma~\ref{lem:fano_bound_app} isolates the statistical difficulty in a single geometric quantity:
$\|\sqrt{\vct\Gamma}\,\vct A\,\sqrt{\vct\Sigma}\|_F^2$, where $\|\cdot\|_F$ is the Frobenius norm. The diagonal matrix $\vct\Gamma$ encodes armwise information
curvatures through $\mathrm{VLC}_\rho(\sigma_{k}\mid\mathcal H)$, while $\vct\Sigma$ encodes the geometry of the
design $\mathcal X$. The embedding map $\vct A$ transports the design into the decision neighborhood, and the
Frobenius term captures their interaction at the level of mutual information.
\subsection{Bounding the decision cost}\label{app:decision_cost}

Having bounded the classification error, we now lower bound the second component in the regret reduction: the
conditional decision cost $\E[\tilde r(\vct{\lambda}_x,\vct{\lambda}_{\bar x})\mid \bar x\neq x]$. The key point
is that our hard family is indexed by hypercube labels $x\in\mathcal X\subset\{\pm1\}^d$, so the disagreement
structure of $x-\bar x$ has a simple combinatorial form. Define the disagreement projector
\[
\vct P_{\neq}\ :=\ \mathrm{diag}\!\big(\mathbf 1\{x_k\neq \bar x_k\}\big),
\qquad\text{so that}\qquad
x-\bar x \;=\; 2\,\vct P_{\neq}x .
\]
This representation allows us to connect decision separation directly to the $\ell_1$ geometry of the loss.

\begin{lemma}[Lower bound on conditional decision cost]\label{lem:decision_cost_app}
Assume the candidates are parameterized as
\[
\vct{\lambda}_x
=
\vct{\lambda}^\star\odot\Bigl(\one+\frac{\rho}{\sqrt d}\,\vct{A}x\Bigr),
\qquad x\in\mathcal X\subset\{\pm1\}^d,
\]
and define $\vct\Lambda:=\mathrm{diag}(\vct{\lambda}^\star)$. Then on the event $\{\bar x\neq x\}$,
\[
\tilde r(\vct{\lambda}_x,\vct{\lambda}_{\bar x})
=
\frac{\rho}{\sqrt d}\,\big\|\vct{\Lambda}\,\vct{A}\,(\vct P_{\neq}x)\big\|_1,
\]
and hence
\[
\E\!\left[\tilde r(\vct{\lambda}_x,\vct{\lambda}_{\bar x})\mid \bar x\neq x\right]
\ =
\frac{\rho}{\sqrt d}\,
\E\!\left[\big\|\vct{\Lambda}\,\vct{A}\,(\vct P_{\neq}x)\big\|_1\ \Big|\ \bar x\neq x\right].
\]
\end{lemma}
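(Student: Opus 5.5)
The argument is a direct computation built on the closed form of Lemma~\ref{lem:rtilde_l1_app}. Both $\vct\lambda_x$ and $\vct\lambda_{\bar x}$ lie in $\Delta_d$, so that lemma applies and gives
\[
\tilde r(\vct\lambda_x,\vct\lambda_{\bar x})=\tfrac12\|\vct\lambda_x-\vct\lambda_{\bar x}\|_1 .
\]
Membership in $\Delta_d$ is guaranteed by the feasibility conditions of Lemma~\ref{lem:feasibility_app}. For $\bar x$ it holds because the decoder of Definition~\ref{def:decoder-app} always returns a label in $\mathcal X$. The remaining work is to express this difference through the parameterization \eqref{eq:lambda-param-app} and the disagreement projector.

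\textbf{Step 1 (linearity of the embedding).} Subtracting the two parameterizations, the $\vct\lambda^\star\odot\one$ terms cancel. Writing the Hadamard product with $\vct\lambda^\star$ as left multiplication by $\vct\Lambda=\mathrm{diag}(\vct\lambda^\star)$, we get
\[
\vct\lambda_x-\vct\lambda_{\bar x}=\frac{\rho}{\sqrt d}\,\vct\Lambda\vct A\,(x-\bar x).
\]

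\textbf{Step 2 (hypercube disagreement identity).} Since $x,\bar x\in\{\pm1\}^d$, each coordinate satisfies one of two cases. Either $x_k=\bar x_k$, in which case $(x-\bar x)_k=0$, or $\bar x_k=-x_k$, in which case $(x-\bar x)_k=2x_k$. Hence $x-\bar x=2\vct P_{\neq}x$ coordinatewise. Substituting into Step 1 and using homogeneity of the $\ell_1$ norm,
\[
\tilde r(\vct\lambda_x,\vct\lambda_{\bar x})=\tfrac12\cdot\frac{2\rho}{\sqrt d}\,\|\vct\Lambda\vct A\vct P_{\neq}x\|_1=\frac{\rho}{\sqrt d}\|\vct\Lambda\vct A(\vct P_{\neq}x)\|_1 .
\]
This identity holds pointwise, and in particular on $\{\bar x\neq x\}$, where $\vct P_{\neq}\neq 0$.

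\textbf{Step 3 (conditional expectation).} The conditional statement follows by taking $\E[\,\cdot\mid\bar x\neq x]$ of the pointwise identity and pulling out the deterministic constant $\rho/\sqrt d$. Both sides are measurable functions of $(x,\bar x)$, and they are bounded because $\mathcal X$ is finite, so the expectation is well defined.

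I expect no real obstacle, since the substance is already in Lemma~\ref{lem:rtilde_l1_app}. The only points needing care are that $\bar x\in\mathcal X$, so $\vct\lambda_{\bar x}$ is a genuine simplex point and that lemma applies, and that the factor $2$ from the disagreement identity exactly cancels the factor $\tfrac12$ in the $\ell_1$ formula.
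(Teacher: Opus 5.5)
Your proof is correct and follows the paper's argument essentially step for step: linearity of the parameterization gives $\vct\lambda_x-\vct\lambda_{\bar x}=\frac{\rho}{\sqrt d}\vct\Lambda\vct A(x-\bar x)$, Lemma~\ref{lem:rtilde_l1_app} turns $\tilde r$ into half the $\ell_1$ distance, and the identity $x-\bar x=2\vct P_{\neq}x$ cancels the factor $\tfrac12$. Your explicit check that $\vct\lambda_x,\vct\lambda_{\bar x}\in\Delta_d$ is needed, because Lemma~\ref{lem:rtilde_l1_app} uses unit sums, and the paper leaves it implicit; note that it comes from the feasibility conditions imposed on $\vct A$ in the construction, not from the parameterization alone.
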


The equality in Lemma~\ref{lem:decision_cost_app} consists of a conventional scale factor $\rho/\sqrt d$ and a
core structural term $\|\vct\Lambda\,\vct A\,\vct P_{\neq}x\|_1$. This term quantifies the magnitude of the
decision penalty by composing the policy-dependent disagreement vector $\vct P_{\neq}x$ with the embedding
$\vct A$ and the baseline scaling $\vct\Lambda$. It is precisely this interaction that creates the adversary's
tradeoff: the same embedding $\vct A$ that controls statistical indistinguishability through the Frobenius term
in Lemma~\ref{lem:fano_bound_app} also controls the magnitude of the penalty incurred when the policy makes an
error.

\subsection{Tuning the design}\label{app:tuning}

The structured construction reduces the lower bound to a discrete testing problem indexed by $(\mathcal X,\vct A)$.
We now explain how these objects are chosen so as to (i) make the classification problem hard and (ii) ensure that
any error is costly, while maintaining feasibility of the local parameterization.

\paragraph{Design of $\mathcal X$.}
The set $\mathcal X$ plays a purely combinatorial role: it defines the discrete set of candidate directions and
hence the number of labels the policy must distinguish. To make the testing problem intrinsically hard, we want
$|\mathcal X|$ to be exponentially large in $d$. At the same time, to amplify the cost of any error, we want
$\mathcal X$ to be well-separated in Hamming distance so that any misclassification produces a large disagreement
pattern (and hence a large projector $\vct P_{\neq}$). These two requirements are met by a standard
Gilbert--Varshamov construction. Such a design only exists in even dimensions $d \in 2 \cdot \mathbb{N}$. For simplicity, we assume that $d$ is even. Proving the lower bound on odd dimensions can be easily extended by dropping one arm or padding with a dummy coordinate.

\begin{lemma}[Hypercube design]\label{lem:gv_design_tuning}
There exists a universal constant $\kappa>0$ and a set $\mathcal X\subset\{-1,1\}^d\cap\one^\perp$ such that
$\min_{x\neq x'} d_H(x,x')\ge d/4$ and $\log|\mathcal X|\ge \kappa d$.
\end{lemma}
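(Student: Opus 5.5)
The statement to prove is Lemma~\ref{lem:gv_design_tuning}: a balanced code $\mathcal X\subset\{-1,1\}^d\cap\one^\perp$ with Hamming distance at least $d/4$ and $\log|\mathcal X|\ge\kappa d$. We will use a Gilbert--Varshamov greedy (maximal packing) argument carried out inside the balanced slice
\[
\mathcal B_d:=\{-1,1\}^d\cap\one^\perp .
\]
For $d$ even, this slice is the set of sign vectors with exactly $d/2$ entries equal to $+1$, so $|\mathcal B_d|=\binom{d}{d/2}$. The standard bound $\binom{d}{d/2}\ge 2^d/(d+1)$ gives
\[
\log|\mathcal B_d|\ \ge\ d\log 2-\log(d+1).
\]

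\textbf{Step 1: greedy packing.} Choose $\mathcal X\subset\mathcal B_d$ maximal subject to pairwise Hamming distance at least $d/4$. By maximality, the Hamming balls of radius $r:=\lceil d/4\rceil-1<d/4$ centered at the points of $\mathcal X$ cover $\mathcal B_d$. Hence
\[
|\mathcal X|\ \ge\ \frac{|\mathcal B_d|}{V(d,r)},\qquad V(d,r)=\sum_{j\le r}\binom dj .
\]

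\textbf{Step 2: volume bound.} The entropy bound gives $V(d,r)\le 2^{dH_2(1/4)}$, where $H_2$ is the binary entropy in bits. Since $H_2(1/4)\approx0.811<1$, combining with Step~1 yields
\[
\log|\mathcal X|\ \ge\ d\log2\,\bigl(1-H_2(1/4)\bigr)-\log(d+1).
\]

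\textbf{Step 3: extracting a universal constant.} This step is the only delicate point. The right-hand side above exceeds $\kappa d$ for $\kappa:=\tfrac12\log2\,(1-H_2(1/4))$ once $d\ge d_0$ for some absolute $d_0$. Small even $d<d_0$ need separate treatment, and there the condition $\log|\mathcal X|\ge\kappa d$ requires $|\mathcal X|\ge2$. Two balanced vectors suffice: take $x$, and $-x$, which is also balanced and lies at Hamming distance $d\ge d/4$ from $x$. This gives $\log|\mathcal X|\ge\log 2$. Shrinking $\kappa$ to $\min\{\kappa,\log2/d_0\}$ then covers every even $d\ge2$ with one universal constant.

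The main obstacle is therefore only this bookkeeping: the polynomial loss from restricting to the balanced slice, $\log(d+1)$, must be absorbed by the linear entropy gap. The rest of the argument is the classical Gilbert--Varshamov construction.
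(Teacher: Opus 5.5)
Your proposal is correct and uses the same Gilbert--Varshamov skeleton as the paper: a maximal $d/4$-separated packing inside the balanced slice $\{-1,1\}^d\cap\one^\perp$, whose covering property gives $|\mathcal X|\ge\binom{d}{d/2}/\text{(ball volume)}$. Where you differ, your version is the sounder one. Bounding by the ambient Hamming ball with $V(d,r)\le 2^{dH_2(1/4)}$ gives a genuine tail estimate, whereas the paper's slice-restricted count ends with a ``half the Vandermonde sum'' bound that, with the correct binomials $\binom{d/2}{u'}$, only yields $|\mathcal X|\ge 2$; and your separate treatment of small even $d$ via $\{x,-x\}$ is what makes $\kappa$ genuinely universal.
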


For the remainder of the analysis we fix such a design $\mathcal X$.

\paragraph{Design of the representation map $\vct A$.}
Once $\mathcal X$ is fixed, the remaining degree of freedom is the embedding $\vct A$, which controls the
\emph{orientation} of the packing inside the local decision neighborhood through
$\vct\lambda_x=\vct\lambda^\star\odot(\one+\frac{\rho}{\sqrt d}\vct A x)$. The key point is that the same
$\vct A$ governs both sides of the lower bound:
\begin{itemize}[leftmargin=*, itemsep=0.2em]
\item \emph{Distinguishability (classification).} The mutual information bound from Fano yields a term of the form
$\|\sqrt{\vct \Gamma}\,\vct A\,\sqrt{\vct\Sigma}\|_F^2$, where $\vct\Sigma=\Cov(\mathcal U(\mathcal X))$ and
$\vct \Gamma=\mathrm{diag}(\lambda_{k}^\star\,\mathrm{VLC}_\rho(\sigma_{k}\mid\mathcal H))$.
\item \emph{Decision cost.} On the event $\{\bar x\neq x\}$, the $\ell_1$ geometry yields a term of the form
$\|\vct\Lambda\,\vct A(\vct P_{\neq}x)\|_1$, where $\vct\Lambda=\mathrm{diag}(\vct\lambda^\star)$.
\end{itemize}
Thus, choosing $\vct A$ amounts to choosing an embedding that is simultaneously hard to distinguish and costly to
confuse. We select $\vct A$ by a deterministic derandomization argument; let us explain the motivation.

At this point the lower bound depends on $\vct A$ through two competing geometric quantities (one governing
distinguishability, one governing the cost of confusion), and optimizing this tradeoff directly over all matrices
is analytically hard. Random matrix theory provides a mature calculus for computing expectations of spectral and
norm functionals under rotations, so averaging over a symmetric matrix ensemble often yields clean first-order
expressions for the quantities we need.

We separate anisotropy from orientation by writing a right polar-type decomposition
\[
\vct A \;=\; \vct S \vct O,
\]
where $\vct S$ is a deterministic positive semi-definite matrix that encodes the anisotropy we wish to impose, and $\vct O$ is an
orthogonal matrix that randomizes orientation. 

The anisotropy matrix $\vct S$ fixes the local shape of the
construction and prepares the problem for a uniform orthogonal randomization: since $\vct{S}$ is supposed to deal with any anisotropies,
averaging uniformly over $\vct O$ is the natural way to remove orientation effects. Concretely, we do not average over the whole orthogonal group: to maintain feasibility we restrict the orientation
to orthogonal matrices that send the (normalized) base direction $\vct\lambda$ to the all-ones direction $\one$.
Since orthogonal maps preserve Euclidean norms, we impose this constraint at the level of unit vectors and sample
$\vct O$ from the compact set
\[
\mathsf{O}(\vct\lambda\!\to\!\one)
\;:=\;
\Bigl\{\vct O\in\mathsf{O}(d):\ \vct O\,\frac{\vct\lambda}{\|\vct\lambda\|_2}
=
\frac{\one}{\sqrt d}\Bigr\}.
\]
We take $\vct O$ to be \emph{uniform} over $\mathsf{O}(\vct\lambda\!\to\!\one)$ in the canonical way induced by Haar
measure: fix any deterministic $\vct Q\in\mathsf{O}(d)$ such that
$\vct Q\,\frac{\vct\lambda}{\|\vct\lambda\|_2}=\frac{\one}{\sqrt d}$, draw $\vct U$ Haar-uniform from the stabilizer
subgroup $\mathrm{Stab}(\one):=\{\vct U\in\mathsf{O}(d):\vct U\one=\one\}$, and set $\vct O=\vct U\vct Q$.
This law is supported on $\mathsf{O}(\vct\lambda\!\to\!\one)$, is invariant under the natural symmetries of the
constraint, and does not depend on the particular choice of $\vct Q$. Since $\mathsf{O}(\vct\lambda\!\to\!\one)$ is
compact, the resulting distribution is a bona fide probability measure and the expectations we use are
well-defined; we therefore omit the standard measure-theoretic formalities.

After this symmetrization step, the
analysis depends on $\vct S$ only through rotation-invariant quantities. We then derandomize: once we compute the
average tradeoff over $\vct O$, there must exist a fixed orthogonal $\vct O$ (and hence a fixed $\vct A=\vct S\vct O$)
that achieves at least the same tradeoff. This reduces the hard instance design to choosing the deterministic
anisotropy $\vct S$ (optimized later), while the orthogonal factor $\vct O$ serves as the analytic device that
makes the calculation tractable.

\begin{lemma}[Representation Design]\label{lem:good_representation_tuning}
Let $\vct\Sigma=\Cov(\mathcal U(\mathcal X))$, $\vct\Lambda=\mathrm{diag}(\vct\lambda^\star)$, and
\[
\vct\Gamma
:=
\mathrm{diag}\!\Big(\lambda_k^\star\,\mathrm{VLC}_\rho(\sigma_k\mid\mathcal H)\Big)_{k=1}^d.
\]
There exist $\vct A_\star\in\R^{d\times d}$ such that
\begin{equation*}
    \frac{
\E_{x \sim \mathcal{U}(\mathcal{X}), \bar{x}}\!\left[\big\|\vct\Lambda\vct A_\star(\vct P_{\neq}x)\big\|_1\ \middle|\ \bar x\neq x\right]
}{
\big\|\sqrt{\vct\Gamma} \vct A_\star\,\sqrt{\vct\Sigma}\big\|_F
}
\ \gtrsim \sqrt{\sum_{k=1}^d
\frac{\hts_k(\vct \lambda^\star)}
{\mathrm{VLC}_\rho(\sigma_k\mid\mathcal H)}},
\end{equation*}
where $\gtrsim$ hides universal constants and a possible error term that is only a function of $d$. Moreover, the inequality above is scale invariant for $\vct{A}_\star$, i.e., it remains valid for any $t \cdot \vct{A}_\star$ for $t > 0$.
\end{lemma}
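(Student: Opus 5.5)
We construct $\vct A_\star=\vct S\vct O$ with $\vct O$ uniform on $\mathsf O(\vct\lambda\!\to\!\one)$. We compute the average of the numerator and of the squared denominator over $\vct O$, choose $\vct S$ to optimize the resulting ratio, and then derandomize.

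\emph{Denominator.} The design $\mathcal X\subset\one^\perp\cap\{\pm1\}^d$ can be taken symmetric (closed under $x\mapsto -x$), so $\vct\Sigma$ is supported on $\one^\perp$ with $\Tr\vct\Sigma=d$. Write $\vct O=\vct U\vct Q$ with $\vct U$ Haar on $\mathrm{Stab}(\one)$. Then
\[
\E\|\sqrt{\vct\Gamma}\vct S\vct O\sqrt{\vct\Sigma}\|_F^2=\Tr\bigl(\vct S\vct\Gamma\vct S\,\E[\vct O\vct\Sigma\vct O^\top]\bigr).
\]
Here $\E[\vct O\vct\Sigma\vct O^\top]$ is (after conjugating by $\vct Q$) a multiple $\frac{d}{d-1}$ of the projector onto $\vct Q^\top\one^\perp=(\vct\lambda^\star)^\perp$. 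The averaged denominator is therefore $\frac{d}{d-1}\Tr(\vct S\vct\Gamma\vct S\,\Pi_{\vct\lambda^{\star\perp}})$. By Jensen, $\E\|\cdot\|_F\le(\E\|\cdot\|_F^2)^{1/2}$.

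\emph{Numerator.} Lemma~\ref{lem:gv_design_tuning} gives $d_H(x,\bar x)\ge d/4$ on $\{\bar x\neq x\}$, so $\vct y:=\vct P_{\neq}x$ is a $\pm1/0$ vector with at least $d/4$ nonzero entries. For each such $\vct y$, $\vct O\vct y$ behaves like a vector uniformly spread on a sphere of radius $\|\vct y\|_2\ge\sqrt d/2$, up to a deterministic component along $\vct\lambda^\star$ coming from the constraint. By a Gaussian-approximation / spherical-moment calculation,
\[
\E_{\vct O}\|\vct\Lambda\vct S\vct O\vct y\|_1\gtrsim \sum_k\lambda_k^\star\,\bigl\|(\vct S\,\Pi_{\vct\lambda^{\star\perp}})_{k,\cdot}\bigr\|_2,
\]
up to a $d$-dependent error. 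We will take $\vct S$ diagonal and nonnegative, $\vct S=\mathrm{diag}(s_k)$, so that the problem becomes explicit with $\pi_k:=\lambda_k^{\star2}/\|\vct\lambda^\star\|_2^2$:
\[
\|(\vct S\Pi)_{k,\cdot}\|_2=s_k\sqrt{1-\pi_k},\qquad
\Tr(\vct S\vct\Gamma\vct S\Pi)=\sum_k s_k^2\lambda_k^\star\mathrm{VLC}_k(1-\pi_k).
\]
Since $\lambda^\star_k$ is the $\mathcal N(\vct\sigma)$ profile, $1-\pi_k=1-\sigma_k^4/\|\vct\sigma^2\|_2^2$.

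\emph{Optimizing $\vct S$ and derandomizing.} Set $a_k=\lambda_k^\star\sqrt{1-\pi_k}$ and $b_k=\lambda_k^\star\mathrm{VLC}_k(1-\pi_k)$. The ratio $\sum_k s_ka_k/\sqrt{\sum_k s_k^2b_k}$ is maximized by Cauchy--Schwarz at $s_k\propto a_k/b_k=1/(\mathrm{VLC}_k\sqrt{1-\pi_k})$, which gives the value
\[
\sqrt{\sum_k a_k^2/b_k}=\sqrt{\sum_k\lambda_k^\star(1-\pi_k)/\mathrm{VLC}_k}=\sqrt{\sum_k\hts_k/\mathrm{VLC}_k}.
\]
Conditioning on the policy's decoder is harmless because the numerator bound holds for every fixed $\vct y$. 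To derandomize, note the bound is of the form $\E N\ge c\,(\E D^2)^{1/2}\cdot V$, where $N$ and $D$ are the (random) numerator and denominator and $V=\sqrt{\sum_k\hts_k/\mathrm{VLC}_k}$. Then $\E[N-c'VD]\ge0$ for a suitable $c'<c$, so some $\vct O$ satisfies $N\ge c'VD$. Scale invariance in $t$ is immediate, since numerator and denominator are both $1$-homogeneous in $\vct A$.

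\emph{Main obstacle.} The hardest step is the $\ell_1$ lower bound for $\vct\Lambda\vct S\vct O\vct y$ under the constrained Haar measure, uniformly over the policy-dependent $\vct y$. The issue is that $\vct O\vct y$ has a fixed, non-random component along $\vct\lambda^\star$ (from the constraint $\langle x,\one\rangle$, which is not zero for $\vct P_{\neq}x$). I would first decompose $\vct y=\frac{\langle\vct y,\one\rangle}{d}\one+\vct y^\perp$, use $\|\vct y^\perp\|_2^2\gtrsim d$, and apply the Khintchine-type estimate $\E|\langle\vct w,\vct\theta\rangle|\ge c\|\vct w\|_2\|\vct\theta\|_2/\sqrt{d}$ for $\vct\theta$ uniform on a sphere in a $(d-1)$-dimensional subspace. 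The error terms from the mismatch between that subspace and $\vct\lambda^{\star\perp}$ are absorbed into the $d$-dependent error allowed by the statement.
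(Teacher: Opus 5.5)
Your argument is essentially the paper's proof: $\vct A=\vct S\vct O$ with $\vct S$ diagonal and $\vct O$ constrained Haar, a spherical first moment for the numerator, a trace identity for the averaged squared denominator, Cauchy--Schwarz in $\vct s$, and a first-moment derandomization. Your $\E[N-c'VD]\ge 0$ already works with $c'=c$, since $\E D\le(\E D^2)^{1/2}$, and does the same job as the paper's $\E[U^2-c^2V^2]\ge 0$. It goes through at the paper's level of rigor, but three specific claims need correcting.

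First, the ``main obstacle'' does not exist. Since $2\vct P_{\neq}x=x-\bar x$ and $x,\bar x\in\one^\perp$, you have $\vct y=\vct P_{\neq}x\in\one^\perp$. So $\vct O\vct y$ is exactly uniform on the sphere of radius $\|\vct y\|_2$ in $(\vct\lambda^\star)^\perp$, and $\E|e_k^\top\vct O\vct y|=\|\vct y\|_2\,\alpha\sqrt{1-\pi_k}$ with $\alpha=\E|U_1|$, $U$ uniform on $\mathbb S^{d-2}$. This is what the paper uses, with no Khintchine step or error absorption.

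Second, fix the orientation. You need $\vct O\one\propto\vct\lambda^\star$, i.e.\ $\vct O$ mapping $\one^\perp$ onto $(\vct\lambda^\star)^\perp$, as in the paper's $\mathsf O(\one\mapsto\hat{\vct\lambda})$. With $\vct O=\vct U\vct Q$ and $\vct O\hat{\vct\lambda}=\one/\sqrt d$ as you wrote, it is $\E[\vct O^\top\vct\Sigma\vct O]$, not $\E[\vct O\vct\Sigma\vct O^\top]$, that equals $\frac{\Tr\vct\Sigma}{d-1}\Pi_{(\vct\lambda^\star)^\perp}$.

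Third, you do not need a symmetric design. Symmetrizing can break the $d/4$ separation, since $d_H(x,-x')=d-d_H(x,x')$. The bound $\Tr\vct\Sigma\le\E\|x\|_2^2=d$ holds always and is all the denominator estimate uses.

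The one real difference from the paper is that you keep the projector in the denominator, giving $\frac{\Tr\vct\Sigma}{d-1}\sum_k s_k^2\lambda_k^\star\mathrm{VLC}_\rho(\sigma_k\mid\mathcal H)(1-\pi_k)$. The paper instead replaces $\Tr(\vct S\vct\Gamma\vct S\,\Pi_{(\vct\lambda^\star)^\perp})$ by the larger $\Tr(\vct S\vct\Gamma\vct S)$; its ``$=$'' in Step 1 is really ``$\le$''.

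Your final simplification is wrong, though. In fact $a_k^2/b_k=\lambda_k^\star/\mathrm{VLC}_\rho(\sigma_k\mid\mathcal H)$: the factors $1-\pi_k$ cancel. Your computation therefore yields $\sqrt{\sum_k\lambda_k^\star/\mathrm{VLC}_\rho(\sigma_k\mid\mathcal H)}$, i.e.\ the weights $\hts_k^{+}$ instead of $\hts_k$. Since $\hts_k\le\hts_k^{+}=\lambda_k^\star$, the stated bound follows a fortiori. The paper lands exactly on $\hts_k$ only because it discards the projector before optimizing over $\vct s$.

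Two caveats apply equally to the paper's proof. First, this sharpening concerns only the ratio in the lemma. It does not address the later requirement $\vct A_\star\mathcal X\perp\vct\lambda^\star$, which $\vct S\vct O$ with $\vct O\one\propto\vct\lambda^\star$ does not guarantee unless $\vct S$ is scalar. Second, your remark that conditioning on the decoder is harmless matches the paper's ``fix a policy (and hence a decoder)''. Both use Fubini with the law of $(x,\bar x)$ held fixed as $\vct O$ varies, even though that law is generated by instances that themselves depend on $\vct A=\vct S\vct O$.
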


\paragraph{Feasibility and resolution.}
Finally, the packing must remain inside the local neighborhood. Feasibility requires that the multiplicative
perturbations in $\vct\lambda_x=\vct\lambda^\star\odot(\one+\frac{\rho}{\sqrt d}\vct A x)$ preserve positivity and
normalization. This imposes a resolution constraint relating $\rho$ and $(d,T)$.

\begin{lemma}[Feasibility and resolution]\label{lem:resolution_tuning}
Setting $\rho^* = \Theta\left(\sqrt{\frac{d}T{}}\right)$ implies that for all $\rho \gtrsim \rho^*$ $\vct{A}_\star \mathcal{X} \perp \vct \lambda^\star$ and $\|\vct{A}_\star\mathcal{X}\|_2 \leq \sqrt{d}$.
\end{lemma}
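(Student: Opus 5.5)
The plan is to separate the two constraints: the orthogonality constraint is structural and does not depend on scale, while the norm constraint is purely a matter of scale. Lemma~\ref{lem:good_representation_tuning} leaves the scale of $\vct A_\star$ free. The quantity $\rho^\star$ will appear when we fix that scale so that Fano's bound (Lemma~\ref{lem:fano_bound_app}) stays non-vacuous, and then ask when the resulting scaled matrix satisfies the norm constraint of Lemma~\ref{lem:feasibility_app}.

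\emph{Step 1 (orthogonality).} Write $\hat{\vct\lambda}:=\vct\lambda^\star/\|\vct\lambda^\star\|_2$ and $\vct P:=\vct I-\hat{\vct\lambda}\hat{\vct\lambda}^\top$. If $\vct O\in\mathsf O(\vct\lambda\!\to\!\one)$, then $\vct O^\top\one\propto\hat{\vct\lambda}$. Hence, for every $x\in\mathcal X\subset\one^\perp$, we get $\langle \vct O^\top x,\hat{\vct\lambda}\rangle\propto\langle x,\one\rangle=0$, i.e.\ $\vct O^\top$ maps $\one^\perp$ onto $(\vct\lambda^\star)^\perp$. I will check that the matrix produced in Lemma~\ref{lem:good_representation_tuning} can be taken of the form $\vct A_\star=\vct P\vct S\vct P\,\vct O^\top$, with $\vct S\succeq 0$ the anisotropy. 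This amounts to reading the derandomized orientation with the transpose convention and compressing $\vct S$ onto $(\vct\lambda^\star)^\perp$. The compression changes the Frobenius and $\ell_1$ quantities only by the rank-one direction $\hat{\vct\lambda}$, which I expect to be absorbed into the ``error term that is only a function of $d$''. With this form, $\vct A_\star\mathcal X\subset(\vct\lambda^\star)^\perp$ holds identically, for every scale $t>0$.

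\emph{Step 2 (fixing the scale).} Replace $\vct A_\star$ by $t\vct A_\star$. Lemma~\ref{lem:gv_design_tuning} gives $\log|\mathcal X|\ge\kappa d$. To keep the Fano bound at, say, $\Pr(\bar x\neq x)\ge 1/4$ (up to the $\log 2$ term), we choose
\[
t^2=\frac{\kappa d^2}{4T\rho^2\,\|\sqrt{\vct\Gamma}\vct A_\star\sqrt{\vct\Sigma}\|_F^2}.
\]
By the scale invariance in Lemma~\ref{lem:good_representation_tuning}, the ratio bound is unaffected by this choice.

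\emph{Step 3 (norm constraint).} It remains to show $t\sup_{x\in\mathcal X}\|\vct A_\star x\|_2\le\sqrt d$. Squaring, this is equivalent to
\[
\rho^2\ \ge\ \frac{\kappa d}{4T}\cdot\frac{\sup_x\|\vct A_\star x\|_2^2}{\|\sqrt{\vct\Gamma}\vct A_\star\sqrt{\vct\Sigma}\|_F^2}.
\]
Since $\vct\Sigma\approx\vct I$ on $\one^\perp$ for a balanced design and $\|x\|_2^2=d$, the ratio on the right is a normalized operator-to-Frobenius ratio of $\sqrt{\vct\Gamma}\vct A_\star$ with $\sup_x\|\vct A_\star x\|_2^2\le d\|\vct A_\star\|_{\mathrm{op}}^2$. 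The goal is to bound this ratio by a constant (more precisely, a quantity depending on $(\vct\lambda^\star,\mathrm{VLC}_\rho)$ but not on $T$). That yields the threshold $\rho\ge\rho^\star=\Theta(\sqrt{d/T})$, with the implicit constant absorbing the instance-dependent factor. The restriction $\rho<1$ is also needed for positivity of $\vct\lambda_x$, and Lemma~\ref{lem:feasibility_app} supplies it.

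The main obstacle is Step 3: controlling $\|\vct A_\star\|_{\mathrm{op}}^2 d$ against $\|\sqrt{\vct\Gamma}\vct A_\star\sqrt{\vct\Sigma}\|_F^2$ uniformly. My first attempt will use the explicit form of the optimizing $\vct S$, which is diagonal with entries proportional to the weights $\sqrt{\hts_k/\mathrm{VLC}_\rho}$ suggested by Lemma~\ref{lem:good_representation_tuning}. For this $\vct S$ the Frobenius term equals $d\sum_k\Gamma_{kk}S_{kk}^2$ up to constants. The ratio is then $\max_k S_{kk}^2/\sum_k\Gamma_{kk}S_{kk}^2$, which is finite and $T$-independent. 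Should it degrade with $d$, I would fall back on a $\|\cdot\|_\infty$-type sup over the finite code via a union bound, and let that $d$-dependence enter the constant in $\rho^\star$.
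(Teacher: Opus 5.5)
Your reduction is the paper's argument run in the opposite order. The paper normalizes $\sup_x\|\vct A_\star x\|_2=\sqrt d$ and reads \eqref{eq:tune-A-fano} as a lower bound on $\rho$. You fix the scale by Fano and read the norm constraint as a lower bound on $\rho$. Both give the same condition,
\[
\rho^2\gtrsim \frac{\log|\mathcal X|}{T}\cdot \frac{\sup_x\|\vct A_\star x\|_2^2}{\|\sqrt{\vct\Gamma}\vct A_\star\sqrt{\vct\Sigma}\|_F^2}.
\]
Your Step 1 is more careful than the paper. The paper asserts orthogonality for $\vct S_\star\vct O_\star$, yet $\langle\vct\lambda^\star,\vct S\vct O x\rangle=\langle\vct S\vct\lambda^\star,\vct O x\rangle$ need not vanish when the diagonal $\vct S$ is not a scalar multiple of the identity. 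You would still have to recompute the $\ell_1$ and Frobenius averages for $\vct P\vct S\vct P$ rather than assume the change is absorbed. The genuine gap is Step 3, which is also the step the paper asserts without proof. The paper claims $\|\sqrt{\vct\Gamma}\vct A_\star\sqrt{\vct\Sigma}\|_F^2=\Theta(d)$ at the normalization $\sup_x\|\vct A_\star x\|_2=\sqrt d$, which is equivalent to your ratio being $O(1)$.

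Your statement that the Frobenius term is $d\sum_k\Gamma_{kk}S_{kk}^2$ carries a spurious factor $d$. Since $\Sigma_{kk}=\Var(x_k)\le 1$, Step 1 of the proof of Lemma~\ref{lem:good_representation_tuning} gives $\E_{\vct O}\|\sqrt{\vct\Gamma}\vct S\vct O\sqrt{\vct\Sigma}\|_F^2=\Tr(\vct S\vct\Gamma\vct S)\Tr(\vct\Sigma)/(d-1)\le\frac{d}{d-1}\sum_k\Gamma_{kk}S_{kk}^2$. More decisively, for every $\vct A$,
\[
\|\sqrt{\vct\Gamma}\vct A\sqrt{\vct\Sigma}\|_F^2=\Tr(\vct\Gamma\vct A\vct\Sigma\vct A^\top)\le\|\vct\Gamma\|_{\mathrm{op}}\Tr(\vct A\vct\Sigma\vct A^\top)\le\|\vct\Gamma\|_{\mathrm{op}}\sup_{x\in\mathcal X}\|\vct A x\|_2^2 .
\]
So your ratio is at least $1/\|\vct\Gamma\|_{\mathrm{op}}=1/\max_k\lambda_k^\star\mathrm{VLC}_\rho(\sigma_k\mid\mathcal H)$. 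Take a homogeneous instance, with $\lambda_k^\star=1/d$ and common curvature $v$. The ratio is then at least $d/v$, so your condition becomes $\rho\gtrsim d/\sqrt{vT}$, not $\sqrt{d/T}$. Together with $\rho<1$, this even forces $T\gtrsim d^2$. The weights $\lambda_k^\star\approx 1/d$ inside $\vct\Gamma$ are exactly what your accounting drops.

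Neither fallback repairs this. Pushing the $d$-dependence into the constant abandons the $\sqrt{d/T}$ scaling, which is the content of the lemma. An $\ell_\infty$ union bound over the code is beside the point, because Lemma~\ref{lem:feasibility_app} constrains $\|\vct A x\|_2$ and the obstruction already appears through $\E_x\|\vct A x\|_2^2\ge\Tr(\vct A\vct\Sigma\vct A^\top)$. Since the displayed inequality holds for every $\vct A$ and every code, no choice of representation in this parametrization certifies a threshold below order $\sqrt{\log|\mathcal X|/(T\|\vct\Gamma\|_{\mathrm{op}})}$. That is of order $d/\sqrt{T}$ on balanced instances. Reaching $\sqrt{d/T}$ would require changing the construction or the geometry of the neighbourhood, not a sharper estimate of the ratio.
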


Together, Lemmas~\ref{lem:gv_design_tuning}--\ref{lem:resolution_tuning} complete the tuning of the adversarial
construction: $\mathcal X$ provides a large, well-separated label set; $\vct A$ provides a favorable embedding
capturing both orientation and anisotropy; and $\rho$ must exceed the budget-imposed resolution $\rho^\star = \frac{1}{2}\sqrt{d/T}$.

\subsection{Putting everything together: Proof of Theorem \ref{thm:main}}\label{app:proof_of_main_theorem}

We now synthesize the preceding components into a proof of Theorem~\ref{thm:main}. Throughout, fix a base profile
$\vct\sigma\in\R_+^d$, a radius $\rho\in(0,1)$, and a tolerance $\tau>0$ such that the local minimax class
$\Pi^{\mathrm{sym}}_{\rho,\tau}(\vct\sigma)$ is nonempty. Let $\vct\lambda^\star:=\mathcal N(\vct\sigma)$ and
$\vct\Lambda:=\mathrm{diag}(\vct\lambda^\star)$.

Fix $\rho\ge \rho^\star:=\tfrac12\sqrt{d/T}$. Let $\mathcal X\subset\{-1,1\}^d\cap\one^\perp$ be a design with
$\log|\mathcal X|\ge \kappa d$ and $\min_{x\neq x'} d_H(x,x')\ge d/4$ (which existence is guaranteed via Lemma~\ref{lem:gv_design_tuning}). For a
linear map $\vct A\in\R^{d\times d}$ (chosen below), define candidate decisions
\[
\vct\lambda_x
=
\vct\lambda^\star\odot\Bigl(\one+\frac{\rho}{\sqrt d}\,\vct A x\Bigr),
\qquad x\in\mathcal X,
\]
and let $\vct D_i(x):=D_i(\vct\lambda_x)$ be the corresponding decision-indexed instances constructed from the
$\mathrm{VLC}_\rho$ selectors. For any policy $\pi\in\Pi^{\mathrm{sym}}_{\rho,\tau}(\vct\sigma)$, let
$\bar{\vct\lambda}=\vct n/T$ be its terminal decision and let $\bar x$ be the minimum-mismatch decoder
(see Definition~\ref{def:decoder-app}).

\paragraph{Step 1: reduce regret to classification error and decision cost.}
By Lemma~\ref{lem:separation_app}, for each $x\in\mathcal X$,
\[
\Reg(\pi,\vct D_i(x))
=
\E\!\left[r(\bar{\vct\lambda}\|\vct\lambda_x)\right].
\]
Applying Lemma~\ref{lem:projection-reduces-regret-app} yields
\begin{equation}\label{eq:reduce-product}
\Reg(\pi,\vct D_i(x))
= \E\!\left[r(\bar{\vct\lambda}\|\vct\lambda_x)\right]
\ \ge\
\Pr(\bar x\neq x)\cdot \E\!\left[\tilde r(\vct\lambda_x,\vct\lambda_{\bar x})\mid \bar x\neq x\right].
\end{equation}

\paragraph{Step 2: bound the classification error by Fano.}
Let $x_{\star}\sim\mathcal U(\mathcal X)$ be uniform on $\mathcal X$. Lemma~\ref{lem:fano_bound_app} gives
\begin{equation}\label{eq:fano-main}
\limsup_{i\to\infty}\ \Pr(\bar x\neq x_{\star})
\ \ge\
1-\frac{\frac{2T\rho^2}{d}\,\big\|\sqrt{\vct\Gamma}\,\vct A\,\sqrt{\vct\Sigma}\big\|_F^2\,(1+\varepsilon(\rho,\tau))+\log 2}{\log|\mathcal X|},
\end{equation}
where $\vct\Sigma=\Cov(\mathcal U(\mathcal X))$ and $\vct\Gamma :=\mathrm{diag}\!\Big(\lambda_k^\star\,\mathrm{VLC}_\rho(\sigma_k\mid\mathcal H)\Big)_{k=1}^d.
$
Since $\log|\mathcal X|\ge \kappa d$, it suffices to choose the resolution $\rho > 0$ so that the fraction in
\eqref{eq:fano-main} is bounded away from $1$. This is possible if we choose
\begin{equation}\label{eq:tune-A-fano}
\frac{2T\rho^2}{d \log |\mathcal{X}|}\,\big\|\sqrt{\vct\Gamma}\,\vct A\,\sqrt{\vct\Sigma}\big\|_F^2\ \geq \frac{1}{2} \iff \rho \geq \frac{1}{2}\sqrt{\frac{d \log |\mathcal{X}|}{T\big\|\sqrt{\vct\Gamma}\,\vct A\,\sqrt{\vct\Sigma}\big\|_F^2}}.
\end{equation}
Then, we have
\[
\Pr(\bar x\neq x_\star)\ \ge\ 1 - \frac{1 + \epsilon(\rho, \tau)}{2} - \frac{\log 2/\kappa}{ d} = \Omega(1 - \epsilon(\rho, \tau)).
\]

\paragraph{Step 3: bound the conditional decision cost.}
On $\{\bar x\neq x\}$, Lemma~\ref{lem:decision_cost_app} yields
\begin{equation}\label{eq:cost-main}
\E\!\left[\tilde r(\vct\lambda_x,\vct\lambda_{\bar x})\mid \bar x\neq x\right]
\ =
\frac{\rho}{\sqrt d}\,
\E\!\left[\big\|\vct\Lambda\,\vct A\,(\vct P_{\neq}x)\big\|_1\ \Big|\ \bar x\neq x\right] = \frac{1}{2}\sqrt{\frac{\log |\mathcal{X}|}{T}}\frac{\E\!\left[\big\|\vct\Lambda\,\vct A\,(\vct P_{\neq}x)\big\|_1\ \Big|\ \bar x\neq x\right]}{\big\|\sqrt{\vct\Gamma}\,\vct A\,\sqrt{\vct\Sigma}\big\|_F},
\end{equation}
where $\vct P_{\neq}=\mathrm{diag}(\mathbf 1\{x_k\neq \bar x_k\})$.

\paragraph{Step 4: choose a good orientation and conclude.}
The remaining task is to choose an embedding $\vct A$ that makes the ratio between the decision-cost term in
\eqref{eq:cost-main} and the distinguishability term in \eqref{eq:tune-A-fano} large. This is achieved by the
representation lemma (Lemma~\ref{lem:good_representation_tuning}), which provides a choice of $\vct A$ such that, up to universal constants and vanishing terms,
\begin{equation}\label{eq:ratio-main}
    \frac{
\E\!\left[\big\|\vct\Lambda\vct A_\star(\vct P_{\neq}x)\big\|_1\ \middle|\ \bar x\neq x\right]
}{
\big\|\sqrt{\vct\Gamma}\vct A_\star\,\sqrt{\vct\Sigma}\big\|_F
}
\ \gtrsim \sqrt{\sum_{k=1}^d
\frac{\lambda_k^\star\left(1-\left(\lambda_k^\star/\|\vct\lambda^\star\|_2\right)^2\right)}
{\mathrm{VLC}_\rho(\sigma_k\mid\mathcal H)}} = \sqrt{\sum_{k=1}^d
\frac{\hts_k(\vct \lambda^\star)}
{\mathrm{VLC}_\rho(\sigma_k\mid\mathcal H)}},
\end{equation}
Combining
\eqref{eq:reduce-product}, \eqref{eq:cost-main}, \eqref{eq:ratio-main},  the constant lower bound on
$\Pr(\bar x\neq x)$, as well as $\log |\mathcal{X}| \geq \kappa d$ yields
\[
\limsup_{i\to\infty}\ \sup_{x\in\mathcal X}\Reg(\pi,\vct D_i(x))
\ \gtrsim\
(1-\epsilon(\rho,\tau))\sqrt{\frac{d}{T}}\sqrt{\sum_{k=1}^d
\frac{\hts_k(\vct \lambda^\star)}
{\mathrm{VLC}_\rho(\sigma_k\mid\mathcal H)}},
\]
where we implicitly used that the constructed instance is feasible by Lemma \ref{lem:feasibility_app}. By recalling that
$\vct \lambda^\star \propto \vct \sigma$ and that $\hts$ is scale-invariant, and taking $\inf_{\pi\in\Pi^{\mathrm{sym}}_{\rho,\tau}(\vct\sigma)}$ (which is possible since the error term $\epsilon$ does not depend on the choice of $\pi$, by Lemma \ref{lem:fano_bound_app}) and using Lemma~\ref{lem:identification}, we complete the proof for Theorem \ref{thm:main}.

\section{Proof of Technical Lemmas}\label{sec:proof-aux}
\begin{proof}{\textit{Proof of Lemma~\ref{lem:VLC-Fisher}.}}
Fix $\sigma\in\Rpos$ and write $\theta:=\sigma^2$. Recall
\[
\mathrm{VLC}_{\rho}(\sigma\mid\mathcal H)
=
\inf_{D\in\mathfrak S_\rho(\sigma)}\ 
\sup_{v_a,v_b\in B_\rho(\sigma)}
\frac{\KL(D(v_a)\,\|\,D(v_b))}
{\Bigl(\frac{v_a^2-v_b^2}{\sigma^2}\Bigr)^2},
\qquad
B_\rho(\sigma)=\Bigl\{v\in\Rpos:\Bigl|\frac{v^2-\sigma^2}{\sigma^2}\Bigr|\le\rho\Bigr\}.
\]
Let $\theta_a:=v_a^2$ and $\theta_b:=v_b^2$. Then $v_a,v_b\in B_\rho(\sigma)$ implies
\[
\theta_a,\theta_b \in [\,\theta(1-\rho),\ \theta(1+\rho)\,].
\]
Moreover, the denominator becomes
\[
\Bigl(\frac{v_a^2-v_b^2}{\sigma^2}\Bigr)^2=\Bigl(\frac{\theta_a-\theta_b}{\theta}\Bigr)^2.
\]

\paragraph{Step 1: lower bound.}
Fix any selector $D\in\mathfrak S_\rho(\sigma)$. For any $v_a,v_b\in B_\rho(\sigma)$,
$D(v_a)\in\mathcal H$ has variance $\theta_a$ and $D(v_b)\in\mathcal H$ has variance $\theta_b$. Therefore
\[
\KL(D(v_a)\,\|\,D(v_b))
\ \ge\
\inf_{\substack{Q\in\mathcal H:\\ \Var(Q)=\theta_a}}
\inf_{\substack{P\in\mathcal H:\\ \Var(P)=\theta_b}}
\KL(Q\,\|\,P).
\]
Take $\Delta:=\theta_a-\theta_b$. By Assumption~\ref{as:local-KL}, for every $\varepsilon>0$ there exists
$\delta_\varepsilon>0$ such that if $|\Delta|\le\delta_\varepsilon$ then
\[
\inf_{\Var(P)=\theta_b}\inf_{\Var(Q)=\theta_a}\KL(Q\|P)
\ \ge\
\Bigl(\frac{I_{\text{Var}}(\theta_b)}{2}-\varepsilon\Bigr)\Delta^2.
\]
Since $\theta_b\in[\theta(1-\rho),\theta(1+\rho)]$ and $I_{\text{Var}}(\cdot)$ is finite at $\theta$ under the assumption,
we may choose $\rho$ small enough so that $I_{\text{Var}}(\theta_b)=I_{\text{Var}}(\theta)\,(1+o_\rho(1))$ uniformly over
$\theta_b$ in this interval. Hence, uniformly for all
$v_a,v_b\in B_\rho(\sigma)$ with $|\theta_a-\theta_b|\le\delta_\varepsilon$,
\[
\KL(D(v_a)\,\|\,D(v_b))
\ \ge\
\Bigl(\frac{I_{\text{Var}}(\theta)}{2}-\varepsilon\Bigr)(\theta_a-\theta_b)^2\cdot(1+o_\rho(1)).
\]
Dividing by $\bigl((\theta_a-\theta_b)/\theta\bigr)^2$ gives
\[
\frac{\KL(D(v_a)\,\|\,D(v_b))}{\bigl((\theta_a-\theta_b)/\theta\bigr)^2}
\ \ge\
\theta^2\Bigl(\frac{I_{\text{Var}}(\theta)}{2}-\varepsilon\Bigr)\cdot(1+o_\rho(1)).
\]
Taking the supremum over $v_a,v_b\in B_\rho(\sigma)$ (restricting to pairs with sufficiently small $|\theta_a-\theta_b|$ is
allowed since the supremum is over a larger set), then taking the infimum over selectors $D$, yields
\[
\mathrm{VLC}_\rho(\sigma\mid\mathcal H)
\ \ge\
\theta^2\Bigl(\frac{I_{\text{Var}}(\theta)}{2}-\varepsilon\Bigr)\cdot(1+o_\rho(1)).
\]
Finally, send first $\rho\downarrow 0$ (so $o_\rho(1)\to 0$) and then $\varepsilon\downarrow 0$ to obtain
\begin{equation}\label{eq:vlc-lb-proof}
\liminf_{\rho\downarrow 0}\ \mathrm{VLC}_\rho(\sigma\mid\mathcal H)
\ \ge\
\frac{\theta^2}{2}\,I_{\text{Var}}(\theta)
\ =\
\frac{\sigma^4}{2}\,I_{\text{Var}}(\sigma^2).
\end{equation}

\paragraph{Step 2: upper bound.}
We show the matching upper bound by constructing a selector whose KL matches the quadratic upper expansion.
Fix $\varepsilon>0$. By Assumption~\ref{as:local-KL}, there exists $\delta_\varepsilon>0$ such that for all
$|\Delta|\le\delta_\varepsilon$,
\[
\inf_{\substack{P\in\mathcal H:\\ \Var(P)=\theta}}
\inf_{\substack{Q\in\mathcal H:\\ \Var(Q)=\theta+\Delta}}
\KL(Q\|P)
\ \le\
\Bigl(\frac{I_{\text{Var}}(\theta)}{2}+\varepsilon\Bigr)\Delta^2.
\]
Moreover, the assumption provides a $C^2$ curve $\{P_\vartheta:\vartheta\in(\theta-\eta,\theta+\eta)\}\subset\mathcal H$
with $\Var(P_\vartheta)=\vartheta$ that attains this upper bound up to $o(\Delta^2)$. Choose $\rho_0>0$ such that
$\theta(1+\rho_0)<\theta+\eta$ and $\theta(1-\rho_0)>\theta-\eta$. For any $\rho\in(0,\rho_0)$, define a selector
$D\in\mathfrak S_\rho(\sigma)$ by
\[
D(v)\;:=\;P_{v^2},\qquad v\in B_\rho(\sigma).
\]
Then for any $v_a,v_b\in B_\rho(\sigma)$, writing $\Delta=\theta_a-\theta_b$, the $C^2$ curve property yields
\[
\KL(D(v_a)\|D(v_b))
=
\KL(P_{\theta_a}\|P_{\theta_b})
\ \le\
\Bigl(\frac{I_{\text{Var}}(\theta)}{2}+\varepsilon\Bigr)\Delta^2\cdot(1+o_\rho(1)),
\]
uniformly over $v_a,v_b\in B_\rho(\sigma)$ as $\rho\downarrow 0$. Dividing by $\bigl(\Delta/\theta\bigr)^2$ gives
\[
\sup_{v_a,v_b\in B_\rho(\sigma)}
\frac{\KL(D(v_a)\|D(v_b))}
{\bigl((v_a^2-v_b^2)/\sigma^2\bigr)^2}
\ \le\
\theta^2\Bigl(\frac{I_{\text{Var}}(\theta)}{2}+\varepsilon\Bigr)\cdot(1+o_\rho(1)).
\]
Taking the infimum over selectors yields
\[
\mathrm{VLC}_\rho(\sigma\mid\mathcal H)
\ \le\
\theta^2\Bigl(\frac{I_{\text{Var}}(\theta)}{2}+\varepsilon\Bigr)\cdot(1+o_\rho(1)).
\]
Sending $\rho\downarrow 0$ and then $\varepsilon\downarrow 0$ gives
\begin{equation}\label{eq:vlc-ub-proof}
\limsup_{\rho\downarrow 0}\ \mathrm{VLC}_\rho(\sigma\mid\mathcal H)
\ \le\
\frac{\theta^2}{2}\,I_{\text{Var}}(\theta)
\ =\
\frac{\sigma^4}{2}\,I_{\text{Var}}(\sigma^2).
\end{equation}

\paragraph{Step 3: conclude.}
Combining \eqref{eq:vlc-lb-proof} and \eqref{eq:vlc-ub-proof} yields
\[
\mathrm{VLC}_{\rho}(\sigma\mid\mathcal H)
\;=\;
\frac{\sigma^{4}}{2}\, I_{\text{Var}}(\sigma^2)\,\bigl(1+o_\rho(1)\bigr),
\]
for $\rho$ small enough, where $o_\rho(1)\to 0$ as $\rho\downarrow 0$. This proves the lemma.
\end{proof}

\begin{proof}{\textit{Proof of Lemma~\ref{lem:identification}.}}
Fix $i\ge 1$ and $\vct\lambda\in\Delta_\rho(\vct\lambda^\star)$, and set
\[
\vct v \;:=\; \vct\sigma(\vct\lambda)\;=\;\|\vct\sigma\|_2\,\sqrt{\vct\lambda}.
\]
Since $\vct\lambda^\star=\mathcal N(\vct\sigma)$, we have $\sigma_k^2=\|\vct\sigma\|_2^2\,\lambda_k^\star$ for each
$k$, hence
\[
\Bigl|\frac{v_k^2-\sigma_k^2}{\sigma_k^2}\Bigr|
\;=\;
\Bigl|\frac{\lambda_k}{\lambda_k^\star}-1\Bigr|
\;\le\;\rho,
\qquad k\in[d],
\]
so $v_k\in B_\rho(\sigma_k)$ and the selector $D_k^{(i)}(v_k)$ is well-defined. By construction,
\[
D_i(\vct\lambda)
\;=\;
D^{(i)}(\vct v)
\;=\;
\bigl(D_1^{(i)}(v_1),\dots,D_d^{(i)}(v_d)\bigr),
\]
and each selector satisfies $\Var(D_k^{(i)}(v_k))=v_k^2$. Therefore
$\vct\sigma\!\bigl(D_i(\vct\lambda)\bigr)=\vct v$, and hence
\[
\mathcal N\!\bigl(\vct\sigma(D_i(\vct\lambda))\bigr)
\;=\;
\mathcal N(\vct v)
\;=\;
\frac{\vct v^{\odot 2}}{\|\vct v\|_2^2}
\;=\;
\frac{\|\vct\sigma\|_2^2\,\vct\lambda}{\|\vct\sigma\|_2^2}
\;=\;
\vct\lambda,
\]
where we used $\|\vct v\|_2^2=\|\vct\sigma\|_2^2\sum_{k=1}^d\lambda_k=\|\vct\sigma\|_2^2$.

Next, since $\vct\lambda\in\Delta_\rho(\vct\lambda^\star)$ and
$\mathcal N(\vct\sigma(D_i(\vct\lambda)))=\vct\lambda$, we have
$D_i(\vct\lambda)\in\mathcal H_\rho(\vct\sigma)$ (and in fact also
$R^\star(\vct\sigma(D_i(\vct\lambda)))=\|\vct\sigma\|_2^2/T=R^\star(\vct\sigma)$ by the identity above). Hence,
for any policy $\pi$ and any $i\ge 1$,
\[
\sup_{\vct D\in\mathcal H_{\rho}(\vct\sigma)}\Reg(\pi,\vct D)
\;\ge\;
\sup_{\vct\lambda\in\Delta_\rho(\vct\lambda^\star)}\Reg\!\bigl(\pi, D_i(\vct\lambda)\bigr).
\]
Taking $\limsup_{i\to\infty}$ yields the second display in the lemma. Applying $\inf_\pi$ to both sides preserves
the inequality, and restricting the infimum to $\Pi^{\mathrm{sym}}_{\rho,\tau}(\vct\sigma)$ gives the final
displayed bound for $\mathcal V_{\rho,\tau}(\vct\sigma)$.
\end{proof}

\begin{proof}{\textit{Proof of Lemma~\ref{lem:feasibility_app}.}}
Fix $\vct\lambda^\star\in\Delta_d$ and $\rho\in(0,1)$, and let $x\in\mathcal X$. Write
\[
\vct u_x:=\vct A x,
\qquad\text{so that}\qquad
\vct\lambda_x=\vct\lambda^\star\odot\Bigl(\one+\frac{\rho}{\sqrt d}\,\vct u_x\Bigr).
\]
Since $\vct\lambda^\star\in\Delta_d$ has strictly positive coordinates (here $\vct\lambda^\star=\mathcal N(\vct\sigma)$ with
$\vct\sigma\in\R_+^d$), the componentwise ratio is well-defined and satisfies
\begin{equation}\label{eq:ratio_feasibility}
\frac{\vct\lambda_x}{\vct\lambda^\star}
=
\one+\frac{\rho}{\sqrt d}\,\vct u_x.
\end{equation}

\smallskip
\noindent\textbf{(i)$\Rightarrow$(ii).}
Assume $\vct\lambda_x\in\Delta_\rho(\vct\lambda^\star)$. By definition,
\[
\Bigl\|\frac{\vct\lambda_x}{\vct\lambda^\star}-\one\Bigr\|_2\le\rho.
\]
Using \eqref{eq:ratio_feasibility}, this is equivalent to
\[
\frac{\rho}{\sqrt d}\,\|\vct u_x\|_2 \le \rho,
\qquad\text{i.e.}\qquad
\|\vct u_x\|_2\le \sqrt d.
\]
Moreover, $\vct\lambda_x\in\Delta_d$ implies $\langle \one,\vct\lambda_x\rangle=1$. Expanding the sum constraint,
\[
\langle \one,\vct\lambda_x\rangle
=
\Big\langle \one,\vct\lambda^\star\odot\Bigl(\one+\frac{\rho}{\sqrt d}\,\vct u_x\Bigr)\Big\rangle
=
\langle \one,\vct\lambda^\star\rangle+\frac{\rho}{\sqrt d}\,\langle \vct\lambda^\star,\vct u_x\rangle
=
1+\frac{\rho}{\sqrt d}\,\langle \vct\lambda^\star,\vct u_x\rangle,
\]
so $\langle \one,\vct\lambda_x\rangle=1$ is equivalent to $\langle \vct\lambda^\star,\vct u_x\rangle=0$.

\smallskip
\noindent\textbf{(ii)$\Rightarrow$(i).}
Assume $\langle \vct\lambda^\star,\vct u_x\rangle=0$ and $\|\vct u_x\|_2\le \sqrt d$.
Then \eqref{eq:ratio_feasibility} gives
\[
\Bigl\|\frac{\vct\lambda_x}{\vct\lambda^\star}-\one\Bigr\|_2
=
\frac{\rho}{\sqrt d}\,\|\vct u_x\|_2
\le \rho,
\]
so the $\ell_\infty$ neighborhood constraint holds. Next, the same expansion as above yields
\[
\langle \one,\vct\lambda_x\rangle
=
1+\frac{\rho}{\sqrt d}\,\langle \vct\lambda^\star,\vct u_x\rangle
=
1,
\]
so $\vct\lambda_x$ has unit sum. Finally, for each coordinate $k$,
\[
1+\frac{\rho}{\sqrt d}u_{x,k}\ \ge\ 1-\rho\ \ge\ 0
\qquad(\text{since }\rho\in(0,1)\text{ and }u_{x,k}\ge-\|\vct u_x\|_2\ge-\sqrt d),
\]
and since $\lambda_k^\star\ge 0$, this implies $(\lambda_x)_k\ge 0$ for all $k$. Therefore $\vct\lambda_x\in\Delta_d$.
Combining $\vct\lambda_x\in\Delta_d$ with the neighborhood bound proves $\vct\lambda_x\in\Delta_\rho(\vct\lambda^\star)$.

\smallskip
The final statement follows immediately: if $\vct A\mathcal X\subset(\vct\lambda^\star)^\perp$, i.e.
$\langle \vct\lambda^\star,\vct A x\rangle=0$ for all $x\in\mathcal X$, and
$\sup_{x\in\mathcal X}\|\vct A x\|_2\le \sqrt d$, then condition (ii) holds for all $x\in\mathcal X$, hence
$\{\vct\lambda_x\}_{x\in\mathcal X}\subset\Delta_\rho(\vct\lambda^\star)$.
\end{proof}

\begin{proof}{\textit{Proof of Lemma~\ref{lem:separation_app}.}}
Fix a policy $\pi$, an index $i\ge 1$, and a label $x\in\mathcal X$. Let $\vct\sigma_x:=\vct\sigma(\vct\lambda_x)$
denote the standard-deviation vector associated with $\vct\lambda_x$, so that by construction of the instance
generator,
\[
\vct\sigma\bigl(\vct D_i(x)\bigr)=\vct\sigma_x,
\qquad\text{and}\qquad
\mathcal N(\vct\sigma_x)=\vct\lambda_x.
\]
Run $\pi$ on $\vct D_i(x)$ and let $\vct n=(n_1,\dots,n_d)$ be the terminal pull counts, with terminal decision
$\bar{\vct\lambda}:=\vct n/T$. For this realized $\vct n$, the risk is
\[
R(\vct n;\vct\sigma_x)=\max_{k\in[d]}\frac{\sigma_{x,k}^2}{n_k},
\]
with the convention $R(\vct n;\vct\sigma_x)=+\infty$ if $n_k=0$ for some $k$ with $\sigma_{x,k}>0$. Since
$\vct\lambda_x=\mathcal N(\vct\sigma_x)$, we can write $\sigma_{x,k}^2=\|\vct\sigma_x\|_2^2\,\lambda_{x,k}$ for
each $k$, hence
\[
R(\vct n;\vct\sigma_x)
=
\|\vct\sigma_x\|_2^2\cdot \max_{k\in[d]}\frac{\lambda_{x,k}}{n_k}.
\]
Moreover,
\[
R^\star(\vct\sigma_x)=\frac{\|\vct\sigma_x\|_2^2}{T},
\]
so for every realized $\vct n$,
\[
\frac{R(\vct n;\vct\sigma_x)}{R^\star(\vct\sigma_x)}-1
=
\max_{k\in[d]}\frac{T\,\lambda_{x,k}}{n_k}-1
=
\max_{k\in[d]}\frac{\lambda_{x,k}}{\bar\lambda_k}-1
=
r(\bar{\vct\lambda}\,\|\,\vct\lambda_x),
\]
where in the last step we used the definition of $r$ in \eqref{eq:def-r-app} (with the convention
$r(\bar{\vct\lambda}\,\|\,\vct\lambda_x)=+\infty$ if $\bar\lambda_k=0$ for some $k$).

Taking expectation over the randomness of the policy and the samples (equivalently, over the induced random
terminal decision $\bar{\vct\lambda}$) yields
\[
\Reg(\pi,\vct D_i(x))
=
\E\!\left[\frac{R(\vct n;\vct\sigma_x)}{R^\star(\vct\sigma_x)}-1\right]
=
\E\!\left[r(\bar{\vct\lambda}\,\|\,\vct\lambda_x)\right],
\]
which is the claimed separation identity.
\end{proof}

\begin{proof}{\textit{Proof of Lemma~\ref{lem:projection-reduces-regret-app}.}}
Fix the true label $x\in\mathcal X$ and a realization of the terminal decision $\bar{\vct\lambda}\in\Delta_d$.
Let $\bar x\in \text{argmin}_{x'\in\mathcal X} r(\bar{\vct\lambda}\|\vct\lambda_{x'})$ be the minimum-mismatch decoder
(Definition~\ref{def:decoder-app}). By definition of $\tilde r$ (Definition~\ref{def:symm_dissimilarity_app}),
for any $\vct\lambda\in\Delta_d$,
\[
\max\!\Big\{ r(\vct\lambda\|\vct\lambda_x),\ r(\vct\lambda\|\vct\lambda_{\bar x})\Big\}
\ \ge\
\tilde r(\vct\lambda_x,\vct\lambda_{\bar x}).
\]
Applying this inequality with $\vct\lambda=\bar{\vct\lambda}$ yields
\[
\max\!\Big\{ r(\bar{\vct\lambda}\|\vct\lambda_x),\ r(\bar{\vct\lambda}\|\vct\lambda_{\bar x})\Big\}
\ \ge\
\tilde r(\vct\lambda_x,\vct\lambda_{\bar x}).
\]
Moreover, since $\bar x$ minimizes $r(\bar{\vct\lambda}\|\vct\lambda_{x'})$ over $x'\in\mathcal X$, we have
$r(\bar{\vct\lambda}\|\vct\lambda_{\bar x})\le r(\bar{\vct\lambda}\|\vct\lambda_x)$, hence the maximum on the
left-hand side equals $r(\bar{\vct\lambda}\|\vct\lambda_x)$. Therefore,
\[
r(\bar{\vct\lambda}\|\vct\lambda_x)\ \ge\ \tilde r(\vct\lambda_x,\vct\lambda_{\bar x}).
\]
Taking expectation over the randomness of $\bar{\vct\lambda}$ (induced by the policy and samples) gives the first
inequality in the lemma:
\[
\E\!\left[r(\bar{\vct\lambda}\|\vct\lambda_x)\right]\ \ge\ \E\!\left[\tilde r(\vct\lambda_x,\vct\lambda_{\bar x})\right].
\]

For the second inequality, decompose according to the event $\{\bar x=x\}$:
\[
\E\!\left[\tilde r(\vct\lambda_x,\vct\lambda_{\bar x})\right]
=
\E\!\left[\tilde r(\vct\lambda_x,\vct\lambda_{\bar x})\,\mathbf 1\{\bar x=x\}\right]
+
\E\!\left[\tilde r(\vct\lambda_x,\vct\lambda_{\bar x})\,\mathbf 1\{\bar x\neq x\}\right].
\]
Since $\tilde r(\vct\lambda_x,\vct\lambda_x)=0$ (take $\vct\lambda=\vct\lambda_x$ in the defining infimum), the
first term is $0$. On the event $\{\bar x\neq x\}$ we have
$\tilde r(\vct\lambda_x,\vct\lambda_{\bar x})\ge \min_{b\neq x}\tilde r(\vct\lambda_x,\vct\lambda_b)$, hence
\[
\E\!\left[\tilde r(\vct\lambda_x,\vct\lambda_{\bar x})\right]
\ \ge\
\Pr(\bar x\neq x)\cdot \min_{b\neq x}\tilde r(\vct\lambda_x,\vct\lambda_b),
\]
which is the desired bound.
\end{proof}

\begin{proof}{\textit{Proof of Lemma~\ref{lem:rtilde_l1_app}.}}
Fix $\vct\lambda_a,\vct\lambda_b\in\Delta_d$. Recall that
\[
r(\vct\lambda\|\vct\lambda_a)=\left\|\frac{\vct\lambda_a}{\vct\lambda}\right\|_\infty-1
=\max_{k\in[d]}\frac{\lambda_{k,a}}{\lambda_k}-1,
\]
with the convention $r(\vct\lambda\|\vct\lambda_a)=+\infty$ if $\lambda_k=0$ for some $k$ with $\lambda_{k,a}>0$.
We start with the following elementary fact:
\begin{equation}\label{eq:fact-simplex-lowerbounds}
\forall \vct u\in\R^d_{>0},\qquad \Delta_d\cap(\vct u,+\infty)\neq\emptyset
\quad\Longleftrightarrow\quad
\sum_{k=1}^d u_k<1,
\end{equation}
where $(\vct u,+\infty):=\{\vct\lambda\in\R^d:\ \lambda_k>u_k\ \forall k\}$.

Let $t\ge 0$. Then
\begin{align*}
\tilde r(\vct\lambda_a,\vct\lambda_b) < t
&\iff \inf_{\vct\lambda\in\Delta_d}\max\!\bigl(r(\vct\lambda\|\vct\lambda_a),\,r(\vct\lambda\|\vct\lambda_b)\bigr) < t\\
&\iff \exists\,\vct\lambda\in\Delta_d:\ \max\!\bigl(r(\vct\lambda\|\vct\lambda_a),\,r(\vct\lambda\|\vct\lambda_b)\bigr) < t\\
&\iff \exists\,\vct\lambda\in\Delta_d:\ \max_{k\in[d],\,u\in\{a,b\}}\frac{\lambda_{k,u}}{\lambda_k} < 1+t\\
&\iff \exists\,\vct\lambda\in\Delta_d:\ \forall k\in[d],\ \frac{1}{1+t}\max(\lambda_{k,a},\lambda_{k,b}) < \lambda_k\\
&\iff \frac{1}{1+t}\sum_{k=1}^d \max(\lambda_{k,a},\lambda_{k,b}) < 1 \qquad\text{(by \eqref{eq:fact-simplex-lowerbounds})}\\
&\iff \Big(\sum_{k=1}^d \max(\lambda_{k,a},\lambda_{k,b})\Big) - 1 < t.
\end{align*}
Since this equivalence holds for all $t\ge 0$, it follows that
\[
\tilde r(\vct\lambda_a,\vct\lambda_b)
=
\Big(\sum_{k=1}^d \max(\lambda_{k,a},\lambda_{k,b})\Big) - 1.
\]
Finally, using $\max(x,y)=\frac{x+y+|x-y|}{2}$ and $\sum_k\lambda_{k,a}=\sum_k\lambda_{k,b}=1$, we obtain
\[
\sum_{k=1}^d \max(\lambda_{k,a},\lambda_{k,b})
=
\frac12\sum_{k=1}^d\bigl(\lambda_{k,a}+\lambda_{k,b}+|\lambda_{k,a}-\lambda_{k,b}|\bigr)
=
1+\frac12\|\vct\lambda_a-\vct\lambda_b\|_1.
\]
Therefore,
\[
\tilde r(\vct\lambda_a,\vct\lambda_b)=\frac12\|\vct\lambda_a-\vct\lambda_b\|_1,
\]
as claimed.
\end{proof}

\begin{proof}{\textit{Proof of Lemma~\ref{lem:fano_bound_app}.}}
Fix a nonempty $\mathcal X\subset\{\pm1\}^d\cap\one^\perp$, a matrix $\vct A\in\R^{d\times d}$, and a policy
$\pi\in\Pi^{\mathrm{sym}}_{\rho,\tau}(\vct\sigma)$. Let $x\sim\mathcal U(\mathcal X)$ and let $\mathcal F_T$ denote
the bandit transcript generated by running $\pi$ on the instance $\vct D_i(x)$. For any decoder $\bar x=\bar x(\mathcal F_T)$,
Fano's inequality gives
\begin{equation}\label{eq:fano_step1}
\Pr(\bar x\neq x)
\ \ge\
1-\frac{\log 2 + I(x;\mathcal F_T)}{\log|\mathcal X|}.
\end{equation}
Moreover, since $x$ is uniform, the mutual information admits the standard upper bound
\begin{equation}\label{eq:mi_pairwise}
I(x;\mathcal F_T)
\ \le\
\frac{1}{|\mathcal X|^2}\sum_{x,y\in\mathcal X}\KL\!\Big((\pi,\vct D_i(x))\,\Big\|\,(\pi,\vct D_i(y))\Big),
\end{equation}
where $(\pi,\vct D_i(x))$ denotes the law of the transcript under policy $\pi$ and instance $\vct D_i(x)$.

\paragraph{Step 1: bandit KL decomposition.}
For fixed $x,y\in\mathcal X$, the canonical KL decomposition for bandit transcripts yields
\begin{equation}\label{eq:bandit_kl_decomp}
\KL\!\Big((\pi,\vct D_i(x))\,\Big\|\,(\pi,\vct D_i(y))\Big)
=
\sum_{k=1}^d \E_{\vct n\sim(\pi,\vct D_i(x))}[n_k]\;
\KL\!\Big(D^{(i)}_{k}(\vct\lambda_x)\,\Big\|\,D^{(i)}_{k}(\vct\lambda_y)\Big),
\end{equation}
where $D^{(i)}_{k}(\vct\lambda_x)$ denotes the $k$-th arm distribution in $\vct D_i(\vct\lambda_x)$.

\paragraph{Step 2: controlling pull counts via tolerance.}
Let $\bar{\vct\lambda}:=\vct n/T$ be the terminal decision under $(\pi,\vct D_i(x))$. By Lemma~\ref{lem:separation_app},
\[
\Reg(\pi,\vct D_i(x))=\E\!\left[r(\bar{\vct\lambda}\,\|\,\vct\lambda_x)\right]\le \tau.
\]
Since $r(\cdot\|\vct\lambda_x)$ is convex in its first argument, Jensen's inequality implies
\[
r\!\big(\E[\bar{\vct\lambda}]\,\big\|\,\vct\lambda_x\big)\ \le\ \E\!\left[r(\bar{\vct\lambda}\,\|\,\vct\lambda_x)\right]\ \le\ \tau.
\]
Unpacking the definition $r(\vct u\|\vct v)=\|\vct v/\vct u\|_\infty-1$, this gives, for every $k\in[d]$,
\[
\frac{(\lambda_x)_k}{\E[\bar\lambda_k]}-1\ \le\ \tau
\qquad\Longrightarrow\qquad
\E[\bar\lambda_k]\ \ge\ \frac{(\lambda_x)_k}{1+\tau}.
\]
Using $\sum_{k}\E[\bar\lambda_k]=1$, we obtain the complementary upper bound
\[
\E[\bar\lambda_k]
=
1-\sum_{j\neq k}\E[\bar\lambda_j]
\ \le\
1-\frac{1}{1+\tau}\sum_{j\neq k}(\lambda_x)_j
=
\frac{(\lambda_x)_k+\tau}{1+\tau}
\ \le\ (\lambda_x)_k+\tau.
\]
Since $\vct\lambda_x\in\Delta_\rho(\vct\lambda^\star)$, we have $(\lambda_x)_k\le (1+\rho)\lambda_k^\star$, hence
\begin{equation}\label{eq:lambda_bar_control}
\E[\bar\lambda_k]\ \le\ (1+\rho)\lambda_k^\star+\tau
\ \le\
(1+\varepsilon(\rho,\tau))\,\lambda_k^\star,
\end{equation}
where one may take, e.g., $\varepsilon(\rho,\tau):=\rho+\tau/\min_{j\in[d]}\lambda_j^\star$, which satisfies
$\varepsilon(\rho,\tau)\to 0$ as $(\rho,\tau)\to(0,0)$ for fixed $\vct\lambda^\star\in\Delta_d$.

Combining \eqref{eq:lambda_bar_control} with $\E[n_k]=T\,\E[\bar\lambda_k]$ yields
\begin{equation}\label{eq:nk_control}
\E_{\vct n\sim(\pi,\vct D_i(x))}[n_k]\ \le\ T\,\lambda_k^\star\,(1+\varepsilon(\rho,\tau)).
\end{equation}

\paragraph{Step 3: controlling arm-level KL via $\mathrm{VLC}_\rho$.}
By construction of the instance generator, the variance of arm $k$ under $\vct D_i(\vct\lambda)$ is
$\sigma_k(\vct\lambda)^2=\|\vct\sigma\|_2^2\,\lambda_k$. Using the selectors in \eqref{eq:optimalrepresentation-app}
and the definition of $\mathrm{VLC}_\rho(\sigma_k\mid\mathcal H)$, we have
\[
\limsup_{i\to\infty}
\KL\!\Big(D^{(i)}_{k}(\vct\lambda_x)\,\Big\|\,D^{(i)}_{k}(\vct\lambda_y)\Big)
\ \le\
\mathrm{VLC}_\rho(\sigma_k\mid\mathcal H)\,
\Bigl(\frac{\sigma_k(\vct\lambda_x)^2-\sigma_k(\vct\lambda_y)^2}{\sigma_k^2}\Bigr)^2.
\]
Since $\sigma_k^2=\|\vct\sigma\|_2^2\,\lambda_k^\star$ and $\sigma_k(\vct\lambda)^2=\|\vct\sigma\|_2^2\,\lambda_k$, we get
\[
\frac{\sigma_k(\vct\lambda_x)^2-\sigma_k(\vct\lambda_y)^2}{\sigma_k^2}
=
\frac{(\lambda_x)_k-(\lambda_y)_k}{\lambda_k^\star}.
\]
Under the parametrization $\vct\lambda_x=\vct\lambda^\star\odot(\one+\frac{\rho}{\sqrt d}\vct A x)$,
\[
(\lambda_x)_k-(\lambda_y)_k
=
\lambda_k^\star\cdot\frac{\rho}{\sqrt d}\,[\vct A(x-y)]_k,
\]
so the ratio equals $\frac{\rho}{\sqrt d}[\vct A(x-y)]_k$, and therefore
\begin{equation}\label{eq:kl_arm_bound}
\limsup_{i\to\infty}
\KL\!\Big(D^{(i)}_{k}(\vct\lambda_x)\,\Big\|\,D^{(i)}_{k}(\vct\lambda_y)\Big)
\ \le\
\frac{\rho^2}{d}\,\mathrm{VLC}_\rho(\sigma_k\mid\mathcal H)\,[\vct A(x-y)]_k^2.
\end{equation}

\paragraph{Step 4: averaging and Frobenius form.}
Plugging \eqref{eq:nk_control} and \eqref{eq:kl_arm_bound} into \eqref{eq:bandit_kl_decomp} gives
\[
\limsup_{i\to\infty}
\KL\!\Big((\pi,\vct D_i(x))\,\Big\|\,(\pi,\vct D_i(y))\Big)
\ \le\
\frac{T\rho^2}{d}\,(1+\varepsilon(\rho,\tau))
\sum_{k=1}^d \lambda_k^\star\,\mathrm{VLC}_\rho(\sigma_k\mid\mathcal H)\,[\vct A(x-y)]_k^2.
\]
With $\vct\Gamma:=\mathrm{diag}(\lambda_k^\star\,\mathrm{VLC}_\rho(\sigma_k\mid\mathcal H))_{k=1}^d$, the sum equals
$(x-y)^\top \vct A^\top \vct\Gamma \vct A (x-y)$. Averaging over $x,y\sim\mathcal U(\mathcal X)$ i.i.d., and using
$\E[(x-y)(x-y)^\top]=2\,\Cov(\mathcal U(\mathcal X))=2\vct\Sigma$, we obtain
\begin{align*}
\limsup_{i\to\infty}\ \frac{1}{|\mathcal X|^2}\sum_{x,y\in\mathcal X}
\KL\!\Big((\pi,\vct D_i(x))\,\Big\|\,(\pi,\vct D_i(y))\Big)
&\le
\frac{T\rho^2}{d}\,(1+\varepsilon(\rho,\tau))\cdot 2\,\Tr(\vct\Gamma\,\vct A\,\vct\Sigma\,\vct A^\top)\\
&=
\frac{2T\rho^2}{d}\,(1+\varepsilon(\rho,\tau))\,
\big\|\sqrt{\vct\Gamma}\,\vct A\,\sqrt{\vct\Sigma}\big\|_F^2.
\end{align*}
Combining this bound with \eqref{eq:fano_step1}--\eqref{eq:mi_pairwise} yields the stated inequality.
\end{proof}

\begin{proof}{\textit{Proof of Lemma~\ref{lem:decision_cost_app}.}}
Fix $i\ge 1$ and $x\in\mathcal X$. On the constructed family we have
\[
\vct\lambda_x
=
\vct\lambda^\star\odot\Bigl(\one+\frac{\rho}{\sqrt d}\,\vct A x\Bigr),
\qquad
\vct\lambda_{\bar x}
=
\vct\lambda^\star\odot\Bigl(\one+\frac{\rho}{\sqrt d}\,\vct A \bar x\Bigr).
\]
Therefore
\[
\vct\lambda_x-\vct\lambda_{\bar x}
=
\frac{\rho}{\sqrt d}\,\vct\lambda^\star\odot\bigl(\vct A(x-\bar x)\bigr)
=
\frac{\rho}{\sqrt d}\,\vct\Lambda\,\vct A\,(x-\bar x),
\]
where $\vct\Lambda=\text{diag}(\vct\lambda^\star)$. By Lemma~\ref{lem:rtilde_l1_app},
\[
\tilde r(\vct\lambda_x,\vct\lambda_{\bar x})
=
\frac12\|\vct\lambda_x-\vct\lambda_{\bar x}\|_1
=
\frac{\rho}{2\sqrt d}\,\big\|\vct\Lambda\,\vct A\,(x-\bar x)\big\|_1.
\]
Next, recall $\vct P_{\neq}:=\text{diag}(\mathbf 1\{x_k\neq \bar x_k\})$. For each coordinate $k$,
if $x_k=\bar x_k$ then $(x-\bar x)_k=0$, and if $x_k\neq \bar x_k$ then necessarily $\bar x_k=-x_k$ and
$(x-\bar x)_k=2x_k$. Hence
\[
x-\bar x \;=\; 2\,\vct P_{\neq}x,
\]
and plugging this into the previous display yields, on the event $\{\bar x\neq x\}$ (and in fact always),
\[
\tilde r(\vct\lambda_x,\vct\lambda_{\bar x})
=
\frac{\rho}{2\sqrt d}\,\big\|\vct\Lambda\,\vct A\,(2\,\vct P_{\neq}x)\big\|_1
=
\frac{\rho}{\sqrt d}\,\big\|\vct\Lambda\,\vct A\,(\vct P_{\neq}x)\big\|_1.
\]
Taking conditional expectation given $\{\bar x\neq x\}$ gives the second display.
\end{proof}

\begin{proof}{\textit{Proof of Lemma~\ref{lem:gv_design_tuning}.}}
Let $\mathcal E$ be the set of subsets of $\{-1,1\}^d$ satisfying both orthogonality and separation, i.e.,
\[
\mathcal E
:=\Bigl\{\mathcal X\subset\{-1,1\}^d:\ \mathcal X\subset \one^\perp,\ \min_{x\neq x'\in\mathcal X} d_H(x,x')\ge \frac d4\Bigr\}.
\]
Let $m$ be the largest size in $\mathcal E$, i.e.,
\[
m:=\max_{\mathcal X\in\mathcal E}|\mathcal X|.
\]
Proving the lemma boils down to proving $\log m = \Omega(d)$. First, notice that $\mathcal E$ is non-empty, since the singleton
\[
\Bigl\{(1,\ldots,1,-1,\ldots,-1)\Bigr\}
\]
(consisting of $d/2$ ones and $d/2$ minus ones) lies in $\mathcal E$. Let $\mathcal S\in\mathcal E$ be a set of
maximal size. We introduce the balls associated with the $0$-norm $\|x\|_0=\sum_{k=1}^d\mathbf 1\{x_k\neq 0\}$:
for $x\in\{-1,1\}^d$ and $r>0$,
\[
B_0(x,r):=\Bigl\{y\in\{-1,1\}^d:\ \|x-y\|_0<r\Bigr\}.
\]
We claim that
\begin{equation}\label{eq:gv_cover}
\{-1,1\}^d\cap \one^\perp
=
\bigcup_{y\in\mathcal S}\Bigl(B_0(y,d/4)\cap \one^\perp\Bigr).
\end{equation}
The inclusion $\subseteq$ is immediate. Assume for the sake of contradiction that it is strict, i.e., there exists
$x\in\{-1,1\}^d$ with $x\perp\one$ such that
\[
\forall y\in\mathcal S,\qquad \sum_{k=1}^d \mathbf 1\{x_k\neq y_k\}\ge \frac d4,
\]
or equivalently $\min_{y\in\mathcal S} d_H(x,y)\ge d/4$. In particular $x\notin\mathcal S$, and the new set
$\mathcal S':=\mathcal S\cup\{x\}$ still satisfies orthogonality and separation, hence $\mathcal S'\in\mathcal E$,
contradicting the maximality of $\mathcal S$. Therefore \eqref{eq:gv_cover} holds.

Next, using \eqref{eq:gv_cover} and a union bound,
\[
|\{-1,1\}^d\cap \one^\perp|
=
\Bigl|\bigcup_{y\in\mathcal S}\bigl(B_0(y,d/4)\cap \one^\perp\bigr)\Bigr|
\le
\sum_{y\in\mathcal S}\bigl|B_0(y,d/4)\cap \one^\perp\bigr|.
\]
By symmetry of $\{-1,1\}^d$, the sets $B_0(y,d/4)\cap\one^\perp$ have constant size over
$\{-1,1\}^d\cap\one^\perp$. In particular, they have the same size as
$B_0(y_0,d/4)\cap\one^\perp$, where $y_0=(1,\ldots,1,-1,\ldots,-1)$ consists of $d/2$ ones and $d/2$ minus ones.
Therefore,
\[
|\{-1,1\}^d\cap \one^\perp|
\le
|\mathcal S|\cdot |B_0(y_0,d/4)\cap\one^\perp|
=
m\cdot |B_0(y_0,d/4)\cap\one^\perp|,
\]
or equivalently
\[
m \ \ge\ \frac{|\{-1,1\}^d\cap\one^\perp|}{|B_0(y_0,d/4)\cap\one^\perp|}.
\]
It remains to prove that this log-ratio grows linearly in $d$. The numerator can be calculated easily: choosing an
element from $\{-1,1\}^d\cap\one^\perp$ is equivalent to choosing $d/2$ placements for $+1$ and setting the
remaining $d/2$ placements to $-1$, hence
\[
|\{-1,1\}^d\cap\one^\perp|=\binom{d}{d/2}.
\]

We now upper bound the denominator. Let $x\in B_0(y_0,d/4)\cap\one^\perp$ and let
$u:=\|x-y_0\|_0\in\{0,1,\ldots,\lfloor d/4\rfloor\}$. We claim that $u$ must be even. Indeed, for each $k\in[d]$,
\[
\mathbf 1\{x_k\neq y_{0,k}\}=\frac12|x_k-y_{0,k}|\equiv \frac{x_k-y_{0,k}}2 \pmod 2,
\]
so
\[
u=\sum_{k=1}^d \mathbf 1\{x_k\neq y_{0,k}\}
\equiv
\frac12\sum_{k=1}^d(x_k-y_{0,k})
=
\frac12(\langle \one,x\rangle-\langle\one,y_0\rangle)
\equiv 0 \pmod 2,
\]
where we used $x\perp\one$ and $y_0\perp\one$. Hence $2\mid u$, and we write $u=2u'$.

For $u=2u'$, choosing $x$ such that $\|x-y_0\|_0=2u'$ amounts to choosing exactly $u'$ coordinates among the
$+1$ entries of $y_0$ and $u'$ coordinates among the $-1$ entries of $y_0$ to flip, hence there are
$\binom{d/2}{u'}^2$ such vectors. Therefore,
\[
|B_0(y_0,d/4)\cap\one^\perp|
=
\sum_{\substack{u<d/4\\ 2\mid u}}\binom{d/4}{u/2}^2
=
\sum_{u'<d/8}\binom{d/4}{u'}^2
\le
\frac12\sum_{u=0}^{\lfloor d/4\rfloor}\binom{d/4}{u}^2
\le
\binom{d/4}{d/8},
\]
where the last inequality follows from Vandermonde's inequality. Consequently,
\[
m\ \ge\ 2\cdot \frac{\binom{d}{d/2}}{\binom{d/4}{d/8}}
=
\frac{2\,d!\,[(d/8)!]^2}{[(d/2)!]^2\,(d/4)!}.
\]

We show that the lower bound above grows exponentially in $d$. By Stirling's formula,
\[
n! = \sqrt{2\pi n}\Bigl(\frac ne\Bigr)^n\Bigl(1+O\Bigl(\frac1n\Bigr)\Bigr),
\]
so a direct substitution yields
\[
\frac{2\,d!\,[(d/8)!]^2}{[(d/2)!]^2\,(d/4)!}
=
16\Bigl(1+O\Bigl(\frac1d\Bigr)\Bigr)\cdot 2^{3d/4}.
\]
Therefore, there exists $\mathcal X\in\mathcal E$ such that $\log|\mathcal X|=\Omega(d)$, completing the proof.
\end{proof}

\begin{proof}{\textit{Proof of Lemma~\ref{lem:good_representation_tuning}.}}
Fix a policy (and hence a decoder $\bar x\in\mathcal X$) and write
\[
\vct\Sigma:=\Cov(\mathcal U(\mathcal X)),\qquad 
\vct\Lambda:=\text{diag}(\vct\lambda^\star),\qquad
\vct\Gamma:=\text{diag}\!\Big(\lambda_k^\star\,\text{VLC}_\rho(\sigma_k\mid\mathcal H)\Big)_{k=1}^d.
\]
Let $\hat{\vct{\lambda}}:=\vct{\lambda}^\star/\|\vct{\lambda}^\star\|_2$ and define the constrained orthogonal class
\[
\mathsf O(\one\mapsto \hat{\vct\lambda})
\;:=\;
\Bigl\{\vct O\in\mathsf O(d):\ \vct O\,\frac{\one}{\sqrt d}=\hat{\vct\lambda}\Bigr\}.
\]
Let $x\sim\mathcal U(\mathcal X)$ and $\vct O\sim\mathcal U(\mathsf O(\one\mapsto \hat{\vct\lambda}))$ be independent,
and set $\vct A:=\vct S\vct O$, where $\vct S = \text{diag}(\vct s)$ is a diagonal matrix with positive entries. For a realization $(\vct O,x)$ define
\[
U(\vct O,x)
:=
\E\!\left[\big\|\vct\Lambda\vct S\,\vct O(\vct P_{\neq}x)\big\|_1\ \middle|\ \bar x\neq x\right],
\qquad
V(\vct O)
:=
\big\|\sqrt{\vct\Gamma}\vct S\,\vct O\,\sqrt{\vct\Sigma}\big\|_F .
\]

\paragraph{Step 1: averaging the distinguishability term.}
Since $\mathcal X\subset\one^\perp$, we have $\E[x]\in\one^\perp$ and $\vct\Sigma\,\one=\vct{0}$, hence
$\mathrm{range}(\vct\Sigma)\subset\one^\perp$. Using $\| \sqrt{\vct\Gamma} \vct S\,\vct O\,\sqrt{\vct\Sigma}\|_F^2
=\Tr(\vct S^\top\vct\Gamma \vct S\,\vct O\,\vct\Sigma\,\vct O^\top)$ and invariance of the uniform law on
$\mathsf O(\one\mapsto \hat{\vct\lambda})$ on the orthogonal complement $\one^\perp$, we obtain
\[
\E_{\vct O, x}\!\left[V(\vct O)^2\right]
=
\E_{\vct O, x}\!\left[\Tr(\vct S^\top\vct\Gamma\vct S\,\vct O\,\vct\Sigma\,\vct O^\top)\right]
=
\frac{\Tr(\vct S^\top\vct\Gamma \vct S)\Tr(\vct\Sigma)}{d-1}.
\]

\paragraph{Step 2: averaging the decision-cost term.}
On the event $\{\bar x\neq x\}$, since $\bar x\in\mathcal X$ and $\min_{x\neq x'}d_H(x,x')\ge d/4$,
we have $d_H(x,\bar x)\ge d/4$. Moreover,
$x-\bar x=2\vct P_{\neq}x$ and $x-\bar x\in\one^\perp$, so $\vct P_{\neq}x\in\one^\perp$ and
\[
\|\vct P_{\neq}x\|_2=\sqrt{d_H(x,\bar x)}\ \ge\ \sqrt{d/4}.
\]
Fix any $v\in\one^\perp$ and write $r:=\|v\|_2$. Under $\vct O\sim\mathcal U(\mathsf O(\one\mapsto \hat{\vct\lambda}))$,
the random vector $\vct O v$ is uniformly distributed on the sphere of radius $r$ in $\hat{\vct\lambda}^\perp$.
Let $U\sim\mathrm{Unif}(\mathbb S^{d-2})$ and set
\[
\alpha_d:=\E|U_1|=\frac{\Gamma(d/2)}{\sqrt\pi\,\Gamma((d+1)/2)}
=\sqrt{\frac{2}{\pi d}}\,(1+o(1)).
\]
By rotational symmetry in $\hat{\vct\lambda}^\perp$,
\[
\E_{\vct O,x}\bigl[|e_k^\top \vct O v|\bigr]
=
r\,\alpha_d\,\big\| \text{Proj}_{\hat{\vct\lambda}^\perp} e_k\big\|_2
=
r\,\alpha_d\,\sqrt{1-\hat\lambda_k^2}
=
r\,\alpha_d\,\sqrt{1-\left(\lambda_k^\star/\|\vct\lambda^\star\|_2\right)^2}.
\]
Therefore,
\[
\E_{\vct O, x}\bigl[\|\vct\Lambda\vct S\,\vct O v\|_1\bigr]
=
\sum_{k=1}^d \lambda_k^\star s_k\,\E_{\vct O,x}\bigl[|e_k^\top \vct O v|\bigr]
=
r\,\alpha_d\sum_{k=1}^d
\lambda_k^\star s_k\sqrt{1-\left(\lambda_k^\star/\|\vct\lambda^\star\|_2\right)^2}.
\]
Applying this with $v=\vct P_{\neq}x$ and using $r=\|v\|_2\ge\sqrt{d/4}$ on $\{\bar x\neq x\}$ (by definition of $\mathcal{X}$) yields
\[
\E_{\vct O, x}\!\left[U(\vct O,x)\right]
=
\E_{\vct O, x}\E\!\left[\big\|\vct\Lambda\,\vct S\vct O(\vct P_{\neq}x)\big\|_1\ \middle|\ \bar x\neq x\right]
\ \ge\
\frac{\sqrt d}{2}\,\alpha_d
\sum_{k=1}^d
\lambda_k^\star s_k\sqrt{1-\left(\lambda_k^\star/\|\vct\lambda^\star\|_2\right)^2}.
\]
Averaging over $x\sim\mathcal U(\mathcal X)$ preserves this lower bound, hence
\[
\E_{\vct O, x}\!\left[U(\vct O,x)\right]
\ \ge\
\frac{\sqrt d}{2}\,\alpha_d
\sum_{k=1}^d
\lambda_k^\star s_k\sqrt{1-\left(\lambda_k^\star/\|\vct\lambda^\star\|_2\right)^2}
=
\frac{1+o(1)}{\sqrt{2\pi}}
\sum_{k=1}^d
\lambda_k^\star s_k\sqrt{1-\left(\lambda_k^\star/\|\vct\lambda^\star\|_2\right)^2}.
\]

\paragraph{Step 3: existence of a good $\vct A$.}
Let
\[
c
:=
\frac{\E_{\vct O, x}[U(\vct O,x)]}{\sqrt{\E_{\vct O, x}[V(\vct O)^2]}}.
\]
Since $U(\vct O,x)\ge 0$ and $V(\vct O)\ge 0$, we have
$\E[U(\vct O,x)^2]\ge \E[U(\vct O,x)]^2$, and therefore
\[
\E_{\vct O, x}\!\left[U(\vct O,x)^2-c^2V(\vct O)^2\right]
\ge
\E_{\vct O, x}[U(\vct O,x)]^2-c^2\,\E_{\vct O, x}[V(\vct O)^2]
=0.
\]
Hence there exists a realization $\vct O_\star\in \mathsf O(\one\mapsto \hat{\vct\lambda})$
such that $U(\vct O_\star, x)\ge c\,V(\vct O_\star)$, i.e.
\[
\frac{
\E_x\!\left[\big\|\vct\Lambda\vct S\,\vct O_\star(\vct P_{\neq}x)\big\|_1\ \middle|\ \bar x\neq x\right]
}{
\big\|\sqrt{\vct\Gamma}\vct S\,\vct O_\star\,\sqrt{\vct\Sigma}\big\|_F
}
\ \ge\ c \geq \frac{\frac{1+o(1)}{\sqrt{2\pi}}
\sum_{k=1}^d
\lambda_k^\star s_k\sqrt{1-\left(\lambda_k^\star/\|\vct\lambda^\star\|_2\right)^2}}{\sqrt{\frac{\Tr(\vct S^\top\vct\Gamma \vct S)\Tr(\vct\Sigma)}{d-1}}}.
\]
Using $\text{Tr}(\vct S^\top\vct\Gamma\vct S)=\sum_{k=1}^d\lambda_k^\star\text{VLC}_\rho(\sigma_k\mid\mathcal H) s_k^2$ and $\text{Tr}(\Sigma) \leq d$, we obtain

\begin{equation*}
    \frac{
\E\!\left[\big\|\vct\Lambda\vct S\,\vct O_\star(\vct P_{\neq}x)\big\|_1\ \middle|\ \bar x\neq x\right]
}{
\big\|\sqrt{\vct\Gamma}\vct S\,\vct O_\star\,\sqrt{\vct\Sigma}\big\|_F
}
\ \geq \frac{\frac{d-1}{d}\frac{1+o(1)}{\sqrt{2\pi}}
\sum_{k=1}^d
\lambda_k^\star s_k\sqrt{1-\left(\lambda_k^\star/\|\vct\lambda^\star\|_2\right)^2}}{\sqrt{\sum_{k=1}^d\lambda_k^\star\text{VLC}_\rho(\sigma_k\mid\mathcal H) s_k^2}}.
\end{equation*}
Taking the argmax of the RHS above over $\vct{s} > \vct{0}_d$ with $\|\vct{s}\|_2 = 1$ and setting $\vct S_\star = \text{diag}(s_1, \ldots, s_d)$, we obtain
\begin{equation*}
    \frac{
\E\!\left[\big\|\vct\Lambda\vct S_\star\,\vct O_\star(\vct P_{\neq}x)\big\|_1\ \middle|\ \bar x\neq x\right]
}{
\big\|\sqrt{\vct\Gamma}\vct S_\star\,\vct O_\star\,\sqrt{\vct\Sigma}\big\|_F
}
\ \gtrsim
\sqrt{\sum_{k=1}^d
\frac{\lambda_k^\star\left(1-\left(\lambda_k^\star/\|\vct\lambda^\star\|_2\right)^2\right)}
{\mathrm{VLC}_\rho(\sigma_k\mid\mathcal H)}} = \sqrt{\sum_{k=1}^d
\frac{\hts_k(\vct \lambda^\star)}
{\mathrm{VLC}_\rho(\sigma_k\mid\mathcal H)}},
\end{equation*}
where $\gtrsim$ ignores numerical constants and error terms. This yields the desired inequality. Thus the lemma is proved for $\vct{A_\star} = \vct{S}_\star \vct{O}_\star$. Finally, the ratio in the LHS is scale-invariant in $\vct A$: replacing $\vct A$ by $t\vct A$ multiplies both numerator and
denominator by $t$.
\end{proof}

\begin{proof}{\textit{Proof of Lemma~\ref{lem:resolution_tuning}.}}
Recall the parametrization
\[
\vct\lambda_x
=
\vct\lambda^\star\odot\Bigl(\one+\frac{\rho}{\sqrt d}\,\vct A x\Bigr),
\qquad x\in\mathcal X.
\]
By Lemma~\ref{lem:feasibility_app}, it suffices to ensure the two conditions
\[
\langle \vct\lambda^\star,\vct A x\rangle=0
\qquad\text{and}\qquad
\|\vct A x\|_2\le \sqrt d
\qquad\forall x\in\mathcal X,
\]
because they imply $\{\vct\lambda_x\}_{x\in\mathcal X}\subset\Delta_\rho(\vct\lambda^\star)$.

Let ${\vct A}_\star$ be the matrix provided by Lemma~\ref{lem:good_representation_tuning}.
As ensured in the construction, we have ${\vct A}_\star\mathcal X \subset(\vct\lambda^\star)^\perp$, i.e., the first condition is satisfied. Moreover, we have
\begin{equation*}
    \left\|\vct A_\star x\right\|_2 = \left\|\vct S_\star \vct O_\star x\right\|_2
\end{equation*}

Moreover, once again from Lemma \ref{lem:good_representation_tuning}, the matrix $\tilde{A}$ can be rescaled without loss of generality. Thus we choose
\begin{equation*}
    \sup_{x \in \mathcal{X}} \|{\vct{A}}_\star x\|_2 = \sqrt{d}.
\end{equation*}
We now need to determine a sufficient resolution $\rho^* > 0$ under which the proof is valid. The only restriction that remains to be verified is the Inequality \eqref{eq:tune-A-fano}:
\begin{equation*}
    \rho \geq \frac{1}{2}\sqrt{\frac{d \log |\mathcal{X}|}{T\big\|\sqrt{\vct\Gamma}\,\vct A\,\sqrt{\vct\Sigma}\big\|_F^2}}.
\end{equation*}
However, we have $\log |\mathcal{X}| \leq d$. Moreover, by the choice of $\vct A_\star$, we have $\big\|\sqrt{\vct\Gamma}\,\vct A_\star\,\sqrt{\vct\Sigma}\big\|_F^2 = \Theta(d)$. Therefore, it suffices to choose
\begin{equation*}
    \rho \gtrsim \sqrt{\frac{d}{T}}.
\end{equation*}
Hence $\rho^* = \Theta\left(\sqrt{\frac{d}{T}}\right)$, which completes the proof 
\end{proof}

\end{APPENDICES}

\end{document}